%% file: neurips_2026.tex
\documentclass{article}

\PassOptionsToPackage{numbers, compress}{natbib}

\usepackage[preprint]{neurips_2026}

\usepackage[utf8]{inputenc} %
\usepackage[T1]{fontenc}    %
\usepackage{hyperref}       %
\usepackage{url}            %
\usepackage{booktabs}       %
\usepackage{amsfonts}       %
\usepackage{graphicx}       %
\usepackage{amsthm}         %
\usepackage{amsmath}
\usepackage{nicefrac}       %
\usepackage{microtype}      %
\PassOptionsToPackage{table}{xcolor}
\usepackage{xcolor}         %
\usepackage[most]{tcolorbox}      %
\usepackage{soul}           %
\usepackage{caption}
\usepackage{subcaption}
\usepackage{wrapfig}
\usepackage{lipsum}
\usepackage{enumitem}       %
\usepackage{multirow}       %
\usepackage{tikz}           %
\usepackage{kotex}          %
\usetikzlibrary{positioning, arrows.meta, fit, backgrounds, calc}
\usepackage{booktabs}
\usepackage{makecell}
\usepackage{standalone}
\usepackage{pgfplots}
\usepackage{cancel}
\usepackage{cleveref}
\input{math_commands}

\usepgfplotslibrary{groupplots} %
\usetikzlibrary{calc}           %
\usepackage[font=small]{caption}
\usepackage{algorithm}
\usepackage{algorithmic}

\newcommand*\tcircled[1]{\tikz[baseline=(char.base)]{
            \node[shape=circle,draw,inner sep=0.3pt] (char) {#1};}}
            
\newcommand*\ttcircled[1]{\tikz[baseline=(char.base)]{
            \node[shape=circle,draw,inner sep=0.1pt] (char) {#1};}}
\newcommand{\capbest}{%
  \begingroup
  \setlength{\fboxsep}{1pt}%
  \colorbox{hltealbest}{\strut highest}%
  \endgroup
}

\newcommand{\capsecond}{%
  \begingroup
  \setlength{\fboxsep}{1pt}%
  \colorbox{hltealsecond}{\strut second}%
  \endgroup
}
\newcommand{\subhead}[1]{\scriptsize #1}
\newcommand{\res}[2]{#1 \textcolor{statgray}{\scriptsize $\pm$ #2}}

\newcommand{\best}[2]{\cellcolor{hltealbest}#1 \textcolor{statgray}{\scriptsize $\pm$ #2}}
\newcommand{\second}[2]{\cellcolor{hltealsecond}#1 \textcolor{statgray}{\scriptsize $\pm$ #2}}

\newcommand{\bestnum}[1]{\cellcolor{hltealbest}#1}
\newcommand{\secondnum}[1]{\cellcolor{hltealsecond}#1}

\definecolor{mygray}{HTML}{6B7280}
\definecolor{myblue}{HTML}{1D4ED8}
\definecolor{mynavy}{HTML}{2d00f7}
\definecolor{myteal}{HTML}{0F766E}
\definecolor{myred}{HTML}{d62728}
\definecolor{myorange}{HTML}{FF7259}
\definecolor{mypurple}{HTML}{7C3AED}
\definecolor{myamber}{HTML}{D97706}
\definecolor{statgray}{gray}{0.42}

\definecolor{hltealbest}{HTML}{8CDED2} %
\definecolor{hltealsecond}{HTML}{C8F2EB}
\hypersetup{colorlinks=true,urlcolor=myteal,citecolor=myteal,linkcolor=myteal}

\title{FLAG: Flow Policy MaxEnt-RL by \\ Latent Augmented Guidance}

\author{%
    \textbf{Sungha Kim}\textsuperscript{\rm 1}\thanks{Equal contribution.}\quad
    \textbf{Gawon Lee}\textsuperscript{\rm 1}\footnotemark[1] \quad
    \textbf{Jusuk Lee}\textsuperscript{\rm 1} \\
    \textbf{Jonghae Park}\textsuperscript{\rm 1} \quad
    \textbf{H. Jin Kim}\textsuperscript{\rm 1} \quad 
    \textbf{Daesol Cho}\textsuperscript{\rm 2}\thanks{Corresponding Author.}\\
    \textsuperscript{\rm 1} Seoul National University\\
    \textsuperscript{\rm 2}Georgia Institute of Technology \\
    \texttt{\{rlatjdgk0307, lgw1997\}@snu.ac.kr}
}

\begin{document}

\theoremstyle{plain}
\newtheorem{theorem}{Theorem}[section]
\newtheorem{proposition}[theorem]{Proposition}
\newtheorem{lemma}[theorem]{Lemma}
\newtheorem{corollary}[theorem]{Corollary}
\theoremstyle{definition}
\newtheorem{definition}[theorem]{Definition}
\newtheorem{assumption}[theorem]{Assumption}
\theoremstyle{remark}
\newtheorem{remark}[theorem]{Remark}
\newtheorem*{remark*}{Remark}
\tcbset{
  tealresultbox/.style={
    colback=myteal!5,
    colframe=myteal!30,
    boxrule=0.4pt,
    arc=0pt,
    left=6pt,
    right=6pt,
    top=4pt,
    bottom=4pt,
    boxsep=0pt,
    before skip=3pt,
    after skip=3pt,
    enhanced,
    breakable,
  },
  tealdefinitionbox/.style={
    colback=myteal!4,
    colframe=myteal!18,
    boxrule=0.3pt,
    arc=0pt,
    left=6pt,
    right=6pt,
    top=4pt,
    bottom=4pt,
    boxsep=0pt,
    before skip=3pt,
    after skip=3pt,
    enhanced,
    breakable,
  },
  blueassumptionbox/.style={
    colback=mypurple!7,
    colframe=mypurple!30,
    boxrule=0.4pt,
    arc=0pt,
    left=6pt,
    right=6pt,
    top=4pt,
    bottom=4pt,
    boxsep=0pt,
    before skip=3pt,
    after skip=3pt,
    enhanced,
    breakable,
  },
  blueremarkbox/.style={
    colback=myblue!5,
    colframe=myblue!35,
    boxrule=0.5pt,
    arc=0pt,
    left=6pt,
    right=6pt,
    top=4pt,
    bottom=4pt,
    boxsep=0pt,
    before skip=3pt,
    after skip=3pt,
    enhanced,
    breakable,
    }
}
\tcolorboxenvironment{theorem}{tealresultbox}
\tcolorboxenvironment{proposition}{tealresultbox}
\tcolorboxenvironment{lemma}{tealresultbox}
\tcolorboxenvironment{corollary}{tealresultbox}

\tcolorboxenvironment{definition}{tealdefinitionbox}

\tcolorboxenvironment{assumption}{blueassumptionbox}
\tcolorboxenvironment{remark*}{blueremarkbox}

\maketitle

\begin{abstract}
    Maximum entropy reinforcement learning (MaxEnt-RL) enables robust exploration, yet practical implementations often restrict policies to simple Gaussians.
    While recent approaches incorporate expressive generative policies via importance-weighted supervised learning, they are prone to importance weight collapse, which limits their scalability in high-dimensional action spaces.
    Our key insight is to mitigate this limitation by localizing the sampling region, avoiding the weight degeneracy induced by importance sampling over the entire action space.
    To instantiate this insight, we introduce \textbf{FLAG} (\textbf{F}low policy with \textbf{L}atent-\textbf{A}ugmented \textbf{G}uidance).
    FLAG augments the state space with a flow latent variable and optimizes a provably consistent proxy MaxEnt-RL objective.
    We empirically demonstrate that FLAG enables expressive policy optimization with limited importance samples and scales to high-dimensional control tasks.
    Furthermore, FLAG achieves state-of-the-art performance across challenging benchmarks. Our project webpage: \url{https://flag-rl.github.io/}
\end{abstract}

\input{sections/introduction}
\input{sections/related_works}

\input{sections/preliminaries}

\input{sections/method}

\input{sections/experiments}

\input{sections/conclusion}

\bibliographystyle{plainnat} %
\bibliography{reference}    %

\newpage
\appendix

\input{sections/appendix/flow_maxent-rl}

\input{sections/appendix/derivations}
\input{sections/appendix/proofs}

\input{sections/appendix/details}
\input{sections/appendix/baselines}

\input{sections/appendix/additional_ablations}
\input{sections/appendix/experiment_details}

\newpage
\input{checklist.tex}

\end{document}

%% file: math_commands.tex
\usepackage{amsmath,amsfonts,bm}
\usepackage{amssymb, amsthm}
\usepackage{mathtools}

\def\Figref#1{Figure~\ref{#1}}

\def\eqref#1{equation~\ref{#1}}
\def\Eqref#1{Eq.~(\ref{#1})}

\def\1{\bm{1}}

\DeclareMathAlphabet{\mathsfit}{\encodingdefault}{\sfdefault}{m}{sl}
\SetMathAlphabet{\mathsfit}{bold}{\encodingdefault}{\sfdefault}{bx}{n}

\DeclareMathOperator{\Tr}{Tr}

\newcommand{\cmid}{\!\mid\!}
\DeclareMathOperator*{\argmin}{arg\,min}
\DeclareMathOperator*{\argmax}{arg\,max}

%% file: sections/introduction.tex
\section{Introduction}

Maximum entropy reinforcement learning (MaxEnt-RL) enhances conventional reward maximization with an entropy regularization term, enabling robust decision-making and persistent exploration~\citep{ziebart2008maximum, toussaint2009robot, softqlearning, sac}.
By encouraging stochasticity in the policy, MaxEnt-RL can represent multiple near-optimal behaviors through expressive action distributions~\citep{softqlearning}.
Most existing methods parameterize the policy as a Gaussian distribution due to its optimization efficiency \citep{sac, crossq, bro}.
However, the unimodal nature of Gaussian policies fundamentally limits their ability to capture the complex optimal policies induced by the MaxEnt-RL objective.
To address this expressivity bottleneck, recent studies have introduced diffusion- and flow-based generative policies, achieving strong performance on high-dimensional continuous control benchmarks \citep{dacer, dime}.

Despite their empirical success, these approaches typically require Backpropagation Through Time (BPTT) across multiple generative steps, which can suffer from numerical instability \citep{vanishing-gradient, exploding-gradient}.
To avoid these issues, a separate line of work \citep{maxentdp, rsm, fpmd} adopts an Expectation-Maximization (EM, \citep{emalgorithm}) framework in the vein of MPO \citep{mpo}, casting policy optimization as supervised learning on the target distribution.
While this strategy circumvents BPTT, it still relies on importance sampling (IS, \citep{importance-sampling}) because sampling from the MaxEnt target distribution is intractable.
In high-dimensional spaces, the proposal--target mismatch can cause importance weights to explode, exacerbating the vanishing support problem \citep{awac, thomas2017importance}.
Prior works~\citep{maxentdp, rsm, fpmd, qvpo} heuristically constrain IS weights for stability, but these post-hoc corrections do not directly increase the probability of sampling target-relevant actions.
As a result, IS over the entire action space, which we refer to as \emph{global IS}, remains prone to weight degeneracy, leading to poor sample efficiency and sparse supervision insufficient for tracking non-stationary targets in online RL.
We illustrate this sampling inefficiency with a didactic multigoal example under limited sample budgets $N$ (\Figref{fig:multigoal}), where global IS baselines fail to capture the target mode as $N$ decreases from $32$ to $2$.
We observe the same trend in high-dimensional control (\Figref{fig:5_1}).

\begin{figure}[t]
    \centering
    \includegraphics[width=0.95\textwidth]{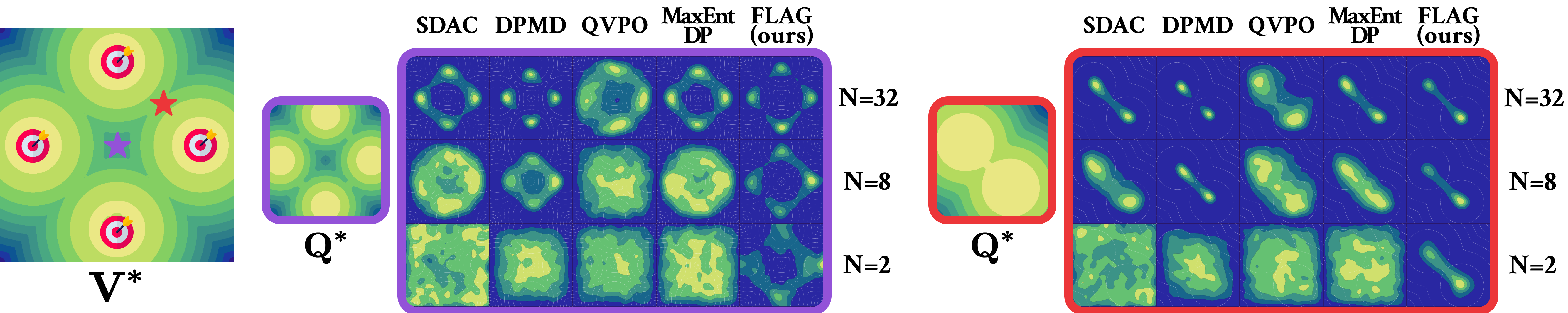}
    \caption{
        \textbf{Comparisons with global policy sampling methods in multi-goal environment.}
        The multi-goal environment \citep{softqlearning} tasks a point mass with navigating to one of four symmetrically placed targets \raisebox{-0.2em}{\includegraphics[height=1.0em]{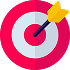}}. 
        We plot the optimal value function $\mathbf{V^*}$ and Q-functions $\mathbf{Q^*}$ at two different states (\raisebox{-0.2em}{\includegraphics[height=1.0em]{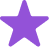}} \& \raisebox{-0.2em}{\includegraphics[height=1.0em]{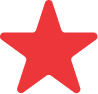}}), along with policies at each state.
        While other methods fail to learn with fewer samples $(N \le 8)$, \textbf{FLAG} captures optimal and multi-modal behaviors leveraging latent-augmented guidance, even in limited importance sample budget commonly arising in high-dimensional action spaces.
        Additional details on the environment and setup are presented in Appendix \ref{multigoal}.
        }
    \label{fig:multigoal}
    \vspace{-15pt}
\end{figure}

Our key insight is to mitigate the sparsity of global IS by jointly localizing the proposal and target distributions. By conditioning on a latent variable, both distributions are restricted to the same local region, allowing IS to operate locally rather than over the full action space, which we refer to \emph{local IS}.
To achieve this, we leverage the deterministic nature of flow-matching models, which induces a fixed mapping between latent vectors and actions.
Based on this property, we construct a Gaussian \emph{local policy} centered on the flow's output, which admits a formal definition of a latent-augmented MDP where the standard state space is extended to include the latent variable.
Within this framework, the \emph{global policy} is recovered by marginalizing over the latent space, providing a theoretical foundation for optimization via the local IS rather than the global IS.

To instantiate this insight, we propose \textbf{FLAG} (\textbf{F}low policy MaxEnt-RL by \textbf{L}atent \textbf{A}ugmented \textbf{G}uidance), which trains the flow policy in a supervised manner on the latent-augmented MDP. Specifically, we introduce a cross-entropy-based proxy MaxEnt objective that avoids the intractable entropy computation of the global policy, and show that it remains consistent with the original MaxEnt objective when the local policy variance is properly controlled during training. 
We then optimize the proxy objective with an EM-style procedure, using the resulting locally updated policies as latent-conditioned supervised targets for the flow policy, which we call \emph{latent-augmented guidance}.
This latent-conditioned local IS allows FLAG to capture the target mode with only $N=2$ samples in \Figref{fig:multigoal}, suggesting better scalability to high-dimensional action spaces (\Figref{fig:5_1}).

In summary, our contributions are threefold:
\begin{itemize}[leftmargin=1.2em, itemsep=2pt, topsep=2pt]
    \item We introduce the latent-augmented MDP to derive local IS and derive a proxy MaxEnt objective on this augmented MDP, with a theoretical consistency with respect to the original objective.
    \item We propose FLAG, which optimizes the proxy MaxEnt objective via an EM algorithm where samples are drawn by local IS, and prove that FLAG monotonically improves the proxy objective.
    \item We empirically demonstrate that FLAG outperforms global IS baselines and achieves state-of-the-art performance across challenging benchmarks.
\end{itemize}

%% file: sections/related_works.tex
\section{Related Works}

\paragraph{Action Gradient.}
DIPO~\citep{dipo} and QSM~\citep{qsm} leverage Q-gradients ($\nabla_a Q$) to construct supervised policy updates, either by refining replay-buffer actions or matching policy scores.
However, their reliance on Gaussian noise exploration motivates a more principled MaxEnt-RL framework.

\paragraph{BPTT-based Actor-Critic.}

Recent methods treat generative models as parameterized policies within actor-critic frameworks.
For example, DACER \citep{dacer, dacerv2} optimizes Q-values along reverse diffusion, while DIME \citep{dime} optimizes Q-values along with a tractable marginal-entropy lower bound. 
However, these objectives require BPTT through generation, making optimization memory-intensive and numerically unstable.
FlowRL \citep{flowrl} avoids BPTT using a single-step policy with implicit optimality guidance, but its objective lacks exploration and can be suboptimal.
In contrast, our method derives latent-augmented guidance and optimizes the proxy MaxEnt-RL objective with supervised updates, bypassing BPTT without sacrificing the expressivity of the underlying generative model.

\paragraph{Target Policy Matching.}

Supervised policy updates in RL have been widely studied under the probabilistic inference framework~\citep{levine2018reinforcement}, including EM-based methods such as RWR~\citep{rwr}, AWR~\citep{awr}, and MPO~\citep{mpo}.
Recent generative-policy extensions, such as RSM~\citep{rsm} and MaxEntDP~\citep{maxentdp}, use reweighted score matching or Q-weighted noise estimation to project policies toward target distributions.
However, they rely on global IS, which disperses samples over the full action space and provides sparse supervision in regions important for policy improvement.
FLAG instead uses local IS to concentrate samples around each action, yielding denser and more effective policy supervision.

\paragraph{Policy Gradient.}
FPO~\citep{fpo} and GenPO~\citep{dinggenpo} adapt PPO~\citep{ppo} to generative policies, but are on-policy methods. 
Our closest off-policy baseline is QVPO~\citep{qvpo}, which optimizes a weighted diffusion loss to improve the policy. 
QVPO's weighted score-matching update relies on a global proposal distribution, inheriting the sparsity issue of global IS, where target-relevant actions are rarely sampled; in contrast, FLAG uses a localized proposal for policy improvement.

%% file: sections/preliminaries.tex
\section{Preliminaries}

\subsection{Reinforcement Learning as Probabilistic Inference}
\label{subsection: RL-as-inference}

\paragraph{Notation.} We consider a Markov Decision Process (MDP) defined by the tuple $\langle \mathcal{S}, \mathcal{A}, p, r, \gamma \rangle$, where $\mathcal{S}$ and $\mathcal{A}$ denote the continuous state and action spaces, $p: \mathcal{S} \times \mathcal{S} \times \mathcal{A} \to \mathbb{R}^+$ represents the transition probability density, $r: \mathcal{S} \times \mathcal{A} \to \mathbb{R}$ is the reward function, and $\gamma \in [0, 1)$ is the discount factor. We define a stochastic policy $\pi: \mathcal{S} \times \mathcal{A} \to \mathbb{R}^{+}$, mapping states to probability densities over actions. In the infinite-horizon setting, $\pi$ induces $p_\pi(\tau)$ and discounted marginals $\rho_\pi(s)$ and $\rho_\pi(s,a)$.

\paragraph{Control-as-inference and MaxEnt-RL.}
RL admits a probabilistic inference view via binary optimality variables $O_t$, where $O_t=1$ indicates $(s_t,a_t)$ is optimal~\citep{levine2018reinforcement}. With $\mathcal{O}=\{O_t=1\}_{t=0}^{\infty}$ and $p(\mathcal{O}\mid\tau)\propto\exp\!\left(\sum_t \gamma^t r(s_t,a_t)/\alpha\right)$ for temperature $\alpha>0$, Bayes' rule gives the optimal posterior 
\begin{equation}
    p^\star(\tau \mid \mathcal O) \propto p(s_0) \prod\nolimits_{t=0}^{\infty}p(a_t\cmid s_t) p(s_{t+1}\cmid s_t,a_t)
    \exp\left(
        \sum\nolimits_{t=0}^{\infty}\gamma^t \big(r(s_t,a_t)/\alpha\big)
    \right).
\end{equation}
Matching $p_\pi(\tau)$ to this posterior via KL-divergence recovers the standard MaxEnt-RL objective~\citep{sac}:
\begin{equation}
\label{eq:maxent-as-inference}
    \pi^\star
    = \argmax_{\pi}\mathbb{E}_{\tau\sim p_\pi}\!\left[\sum\nolimits_{t=0}^{\infty}\gamma^t\big(r(s_t,a_t)-\alpha\log\pi(a_t\cmid s_t)\big)\right].
\end{equation}

\paragraph{EM-style Policy Optimization.}
Introducing a variational policy $q(a\mid s)$ that shares the initial-state distribution and dynamics with $p_\pi$, the trajectory-level KL reduces to action-only terms, yielding the variational lower bound~\citep{RERPI, mpo}
\begin{equation}
\label{eq:em-elbo}
    \mathcal{J}(q,\pi) = \mathbb{E}_{\tau\sim p_q}\!\left[\sum\nolimits_{t=0}^{\infty}\gamma^t\big(r(s_t,a_t)/\alpha\big)\right] - D_{\mathrm{KL}}\!\left(p_q(\tau)\,\|\,p_\pi(\tau)\right).
\end{equation}
For the current parameterized policy $\pi_k=\pi_{\theta_k}$, the KL-constrained E-step
\begin{equation}
\label{eq:em-estep}
    q_k(\cdot\cmid s) = \argmax_{q\in\Delta(\mathcal{A})}\mathbb{E}_{a\sim q}\left[Q^{\pi_k}(s,a)\right]\qquad \text{s.t.}\;\;D_{\mathrm{KL}}\left(q(\cdot\cmid s)\;\|\;\pi_k(\cdot\cmid s)\right)\le\epsilon
\end{equation}
admits a closed form $q_k(a\cmid s)\propto\pi_k(a\cmid s)\exp\left(Q^{\pi_k}(s,a)/\lambda_k\right)$, with $\lambda_k>0$ the dual variable. The M-step projects $q_k$ back into the parametric class, yielding a weighted maximum-likelihood:
\begin{equation}
\label{eq:em-mstep}
    \theta_{k+1}\in\argmax_{\theta}\mathbb{E}_{s\sim\rho_{\pi_k}}\mathbb{E}_{a\sim q_k(\cdot\mid s)}\left[\log\pi_\theta(a\mid s)\right],
\end{equation}
typically estimated via importance weighting on samples from $\pi_k$.

\subsection{Conditional Flow Matching}
\label{subsection:CFM_def}

Flow matching \citep{flowmatching} transports a prior $p_0$ to a target $p_1$. It learns a time-conditioned vector field $v:[0,1] \times \mathbb{R}^d \to \mathbb{R}^d$.
The flow $\psi$ is governed by the following ordinary differential equation (ODE):
\begin{equation}
    \label{eq: flow-map}
    \frac{d}{dt}\psi^t(x)=v^t(\psi^t(x)) \quad \textit{i.e.} \quad \psi^t(x)=\psi^0(x)+\int_0^t v^\tau(\psi^\tau(x)) d\tau.
\end{equation}
By parameterizing the vector field $v$ with a parameter $\theta$ and under the Optimal Transport linear path, the vector field can be directly optimized via the Conditional Flow Matching (CFM) objective:
\begin{align}
\label{eq:CFM-OT-objective}
    \mathcal{L}_\text{CFM}(\theta) = \mathbb{E}_{t, x_0, x_1} \Big[ \big \lVert v^t_\theta \big( (1-t)x_0 + tx_1 \big) - (x_1-x_0) \big\rVert^2 \Big],
\end{align}
where $t \sim \mathcal{U}[0,1]$, $x_0 \sim p_0$, and $x_1 \sim p_1$.

%% file: sections/method.tex
\section{Method}
\label{section: method_main}
FLAG targets the optimization of a flow policy guided by local supervision from a Gaussian local policy.
First, we introduce a policy parameterization (\ref{subsection: policy-parameterization}) that explicitly couples the flow-based anchor actions with a local Gaussian distribution.
Next, we lift the original environment to a latent-augmented MDP (\ref{subsection: z-MDP}).
This augmented construction makes the local conditioning variable explicit and establishes that optimizing the local policy in the lifted MDP is mathematically equivalent to optimizing the global MaxEnt policy.
On top of this equivalence, we derive an EM-based policy improvement procedure (\ref{subsection: local-policy-update}) that admits the latent-augmented guidance.
Finally, we introduce a practical algorithm (\ref{subsection: implementation-details}) that enables updates of the flow policy via the latent-augmented guidance.
All derivations in this section are in Appendix~\ref{app: derivations}.

\subsection{Policy Parameterization}
\label{subsection: policy-parameterization}

Our policy parameterization centers the Gaussian on the flow's anchor $T_\theta(s,z)$, which localizes the effective sampling region around each latent $z$, providing dense supervision for EM updates and circumventes the importance-weight degeneracy of global proposal sampling.
We first define a flow transformation $T_\theta(s, z)$ parameterized by $\theta$ 
that maps a latent vector $z = a^0 \sim p_z$ to an anchor action 
$T_\theta(s, z) = a^1 = z + \int_0^1 v_\theta(a^\tau, \tau, s)\,d\tau$ 
following \Eqref{eq: flow-map}, inducing the base flow policy 
$\tilde\pi_\theta(a \mid s) = p_z(z)\,|\det(\partial T_\theta(s,z)/\partial z)|^{-1}$.

Next, we augment each flow-generated anchor action $T_\theta(s, z)$ with an auxiliary local Gaussian:
\begin{align}
\label{eq:local-policy}
    \hat\pi(a \mid s, z;\,\theta) = \mathcal{N}(a;\, T_\theta(s,z),\, \Sigma).
\end{align}
For theoretical development in this paper, we treat $\Sigma$ as non-learnable, isotropic covariance whose schedule is specified in \Cref{subsection: implementation-details}; 
a learnable variant is discussed in Appendix~\ref{subsection:covariance_learning}.
Marginalizing over the prior recovers the global policy:
\begin{equation}
\label{eq:marginal-policy}
    \pi(a \mid s;\,\theta) = \int p_z(z)\,\hat\pi(a \mid s, z;\,\theta)\,dz.
\end{equation}

\subsection{Latent augmented MDP and Proxy MaxEnt-RL Objective} \label{subsection: z-MDP}
In this section, we show that local policy optimization on the augmented MDP is mathematically equivalent to maximizing the original MaxEnt-RL objective in the original MDP.
We start by defining the latent-augmented MDP.
\begin{definition}[Latent-augmented MDP]
\label{def: z-mdp}
    The latent augmented MDP $\hat{\mathcal M}$ is defined by the state $\hat s=(s, z)\in \mathcal S\times \mathcal Z$ and action $a \in \mathcal A$. Since $z$ is sampled from a prior $p_z$ which is independent of state, the transition dynamics $\hat p$ and reward function $\hat r$ are given as:
    \begin{equation}
        \hat p\left(\hat s' \mid \hat s, a\right)=p(s'\mid s,a)\,p_z(z'),\qquad \hat r(\hat s,a)=r(s,a).
    \end{equation}
\end{definition}
We hereafter refer to this construction as the \textbf{$\mathbf{z}$-MDP}, and adopt the notation $(\hat{\cdot})$ to distinguish entities in the $z$-MDP from those in the original MDP.
In Corollary~\ref{cor:marginal-consistency}, we show that any expectation over a measurable function is unchanged whether you average over $\hat{\rho}_{\hat{\pi}}(\hat s, z)$ or $\rho_\pi(s, z)$.
\begin{corollary}[Marginal consistency]
\label{cor:marginal-consistency}
    Given $\hat \pi$ and $\pi$ in Section~\ref{subsection: policy-parameterization}, the discounted state--action marginal distribution $\hat\rho_{\hat\pi}(s, z, a)$ in the $z$-MDP $\hat{\mathcal M}$ satisfies $\rho_\pi(s, a) = \int \hat\rho_{\hat\pi}(s, z, a)dz$.
    Consequently, the following equation holds for any measurable function $f:\mathcal S\times \mathcal A\to\mathbb R$
    \begin{equation}
    \label{eq:marginal-consistency}
        \mathbb{E}_{(s,z,a)\sim \hat\rho_{\hat\pi}}\left[f(s,a)\right]
        = \mathbb{E}_{(s,a)\sim \rho_{\pi}}\!\left[f(s,a)\right].
    \end{equation}
\end{corollary}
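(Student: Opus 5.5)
The plan is to prove the marginal identity first, time step by time step, and then obtain \Eqref{eq:marginal-consistency} by integrating against $f$. Both marginals are discounted occupancies, $\rho_\pi(s,a)=(1-\gamma)\sum_t\gamma^t p_t^{\pi}(s,a)$ and $\hat\rho_{\hat\pi}(s,z,a)=(1-\gamma)\sum_t\gamma^t \hat p_t^{\hat\pi}(s,z,a)$, where $p_t$ and $\hat p_t$ are the time-$t$ marginals of the trajectory laws in $\mathcal M$ and $\hat{\mathcal M}$. The normalization convention does not matter provided it is the same on both sides. It therefore suffices to show, for every $t$,
\begin{equation*}
\int \hat p_t^{\hat\pi}(s,z,a)\,dz = p_t^{\pi}(s,a).
\end{equation*}
The sum over $t$ can then be exchanged with the $z$-integral by Tonelli, since all terms are nonnegative.

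The per-step identity is proved by induction on $t$. In $\hat{\mathcal M}$ the initial augmented state is $(s_0,z_0)\sim p(s_0)p_z(z_0)$, so that $z$ is a fresh prior draw at every step, consistently with $\hat p$.

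The key structural claim is that the state marginal factorizes, $\hat p_t(s,z)=p_t^{\pi}(s)\,p_z(z)$. At $t=0$ this holds by construction. Assume it holds at step $t$. Then
\begin{equation*}
\hat p_t(s,z,a)=p_t^{\pi}(s)\,p_z(z)\,\hat\pi(a\mid s,z;\theta),
\end{equation*}
and integrating over $z$ together with \Eqref{eq:marginal-policy} gives $p_t^{\pi}(s)\pi(a\mid s;\theta)=p_t^{\pi}(s,a)$, which is the per-step identity at step $t$. For the next state, Definition~\ref{def: z-mdp} gives
\begin{equation*}
\hat p_{t+1}(s',z')=\int\!\!\int\!\!\int \hat p_t(s,z,a)\,p(s'\mid s,a)\,p_z(z')\,ds\,dz\,da.
\end{equation*}
Performing the $z$-integral first yields $p_z(z')\int\!\!\int p_t^{\pi}(s,a)p(s'\mid s,a)\,ds\,da=p_z(z')\,p_{t+1}^{\pi}(s')$, which closes the induction.

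For the consequence, apply Fubini/Tonelli to write
\begin{equation*}
\mathbb E_{\hat\rho_{\hat\pi}}[f(s,a)]=\int f(s,a)\Big(\int\hat\rho_{\hat\pi}(s,z,a)\,dz\Big)\,ds\,da
=\mathbb E_{\rho_\pi}[f].
\end{equation*}
This holds for $f\ge0$ directly, and for general measurable $f$ by splitting $f=f^+-f^-$, whenever either side is well defined.

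The main obstacle is not computational. It is making precise that the latent is resampled independently of $(s,a)$ at every step, including the initial augmented state distribution. Given that, the independence $\hat p_t(s,z)=p_t^\pi(s)p_z(z)$ carries the whole argument. A secondary point is the measure-theoretic justification for exchanging the infinite discounted sum and the integrals, which Tonelli handles.
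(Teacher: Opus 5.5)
Your proof is correct and follows essentially the same route as the paper. Both arguments rest on the per-step factorization $P_{\hat\pi}(s_t=s,z_t=z)=P_\pi(s_t=s)\,p_z(z)$, which after the discounted sum gives $\hat\rho_{\hat\pi}(s,z,a)=\rho_\pi(s)\,p_z(z)\,\hat\pi(a\mid s,z)$ and hence the $z$-marginal identity. The paper gets $P_{\hat\pi}(s_t)=P_\pi(s_t)$ by integrating all latents out of the finite-horizon trajectory law and letting $T\to\infty$, and it simply asserts that $z_t$ is independent of $s_t$. Your forward induction instead derives both facts directly and states explicitly the needed assumption $z_0\sim p_z$.
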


\paragraph{Cross-entropy.}

The MaxEnt-RL objective requires the intractable entropy of $\pi$, as the marginalization over $z$ in \Eqref{eq:marginal-policy} has no closed form.
To circumvent this, we propose using the cross-entropy between $\pi$ and $\tilde\pi_\theta$, ${\mathcal H}\left(\pi(\cdot\cmid s), \,\tilde\pi_\theta(\cdot\cmid s)\right)\triangleq \mathbb{E}_{a\sim\pi(\cdot\mid s)}\left[ -\log \tilde\pi_\theta(a\mid s)\right]$, which can be evaluated in an unbiased manner via the Hutchinson trace estimator (see Appendix~\ref{app: flow-maxent-rl} for details).
The cross-entropy is reformulated as an expectation over $z$:
\begin{equation}
\label{eq:cross-ent-def-z}
    \mathcal H\left( \pi(\cdot\cmid s), \tilde \pi_\theta(\cdot\cmid s)\right)=\mathbb{E}_{z\sim p_z}\left[ \mathcal H \left(\hat \pi(\cdot\cmid \hat s),\, \tilde \pi_\theta(\cdot\cmid s) \right)\right]
\end{equation}
For brevity, we denote $\tilde \pi_\theta$ by $\tilde \pi$ throughout this section. The following proposition establishes that the cross-entropy converges to the entropy of $\pi$.

\begin{proposition}[Cross-Entropy as a Valid Entropy Surrogate]
\label{prop:surrogate-validity}
    For smooth $\tilde\pi$ and $\mathrm{tr}\left(\Sigma\right) \ll 1$, the local variance governs the Total Variation (TV) distance and KL divergence between $\pi$ and $\tilde\pi$:
    \begin{equation}
        D_{\mathrm{TV}}(\pi, \tilde\pi) = \mathcal{O}\big(\sqrt{\mathrm{tr}(\Sigma)}\big), \qquad
        D_{\mathrm{KL}}(\pi \;\|\; \tilde\pi) = \mathcal{O}\left(\mathrm{tr}(\Sigma)^2\right).
    \end{equation}
    Consequently, as the local variance vanishes ($\operatorname{tr}(\Sigma) \to 0$), the surrogate cross-entropy converges to the true entropy of the global policy:
    \begin{equation}
        \mathcal{H}(\pi, \tilde\pi) = \mathcal{H}(\pi) + \mathcal{O}\left(\mathrm{tr}(\Sigma)^2\right) \approx \mathcal{H}(\pi).
    \end{equation}
\end{proposition}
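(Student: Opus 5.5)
The plan is to observe that $\pi(\cdot\mid s)$ is exactly the Gaussian convolution of the base flow policy: since $a^1=T_\theta(s,z)$ with $z\sim p_z$ is distributed as $\tilde\pi(\cdot\mid s)$, \Eqref{eq:marginal-policy} gives $\pi(\cdot\mid s)=\tilde\pi(\cdot\mid s)*\mathcal N(0,\Sigma)$. Equivalently, $a=a^1+\epsilon$ with $a^1\sim\tilde\pi$ and $\epsilon\sim\mathcal N(0,\Sigma)$ independent. Every bound then reduces to how a smooth density changes under a small Gaussian perturbation. We fix $s$ throughout and take ``smooth'' to mean that $\tilde\pi$ is $C^2$ and strictly positive. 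We also assume that $\nabla\log\tilde\pi$ and $\nabla^2\log\tilde\pi$, or equivalently $\nabla\tilde\pi/\tilde\pi$ and $\nabla^2\tilde\pi/\tilde\pi$, have finite moments under $\tilde\pi$ and under $\pi$ (for instance, bounded derivatives of $\log\tilde\pi$). This makes the Taylor remainders integrable.

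\textbf{TV bound.} We use the coupling $(a^1,a^1+\epsilon)$. For a $C^1$ density, $D_{\mathrm{TV}}(\tilde\pi,\tilde\pi*\mathcal N(0,\Sigma))\le \tfrac12\,\mathbb E\,\|\tilde\pi(\cdot-\epsilon)-\tilde\pi\|_{L^1}\le \tfrac12\,\mathbb E\|\epsilon\|\,\|\nabla\tilde\pi\|_{L^1}$. Jensen gives $\mathbb E\|\epsilon\|\le\sqrt{\mathrm{tr}(\Sigma)}$, so $D_{\mathrm{TV}}=\mathcal O(\sqrt{\mathrm{tr}(\Sigma)})$, and only first-order smoothness is needed. (Using second order, the symmetry of $\epsilon$ would actually yield $\mathcal O(\mathrm{tr}\Sigma)$, but the stated rate suffices.)

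\textbf{KL bound.} We expand $\pi(a)=\mathbb E_\epsilon[\tilde\pi(a-\epsilon)]=\tilde\pi(a)+\tfrac12\mathrm{tr}(\Sigma\nabla^2\tilde\pi(a))+R(a)$. The first-order term vanishes because $\mathbb E\epsilon=0$, and $R=\mathcal O(\mathbb E\|\epsilon\|^3)$ or $\mathcal O(\mathrm{tr}(\Sigma)^2)$ by symmetry, since odd moments vanish. Write $\pi=\tilde\pi(1+\delta)$ with $\delta=\tfrac12\mathrm{tr}(\Sigma\nabla^2\tilde\pi)/\tilde\pi+R/\tilde\pi=\mathcal O(\mathrm{tr}\Sigma)$ pointwise in weighted $L^2(\tilde\pi)$. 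Then $D_{\mathrm{KL}}(\pi\|\tilde\pi)=\int\tilde\pi(1+\delta)\log(1+\delta)$. Since $\int\tilde\pi\delta=0$ (both densities normalize), the linear term cancels and $D_{\mathrm{KL}}=\tfrac12\int\tilde\pi\delta^2+\mathcal O(\int\tilde\pi|\delta|^3)=\mathcal O(\mathrm{tr}(\Sigma)^2)$. This is the $\chi^2$-type second-order behavior of KL.

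\textbf{Cross-entropy.} This step is immediate from the identity $\mathcal H(\pi,\tilde\pi)=\mathcal H(\pi)+D_{\mathrm{KL}}(\pi\|\tilde\pi)$, which holds whenever both terms are finite, and the finiteness is guaranteed by the moment assumptions. Plugging in the KL bound gives $\mathcal H(\pi,\tilde\pi)=\mathcal H(\pi)+\mathcal O(\mathrm{tr}(\Sigma)^2)$. By \Eqref{eq:cross-ent-def-z} the same holds for the $z$-averaged form.

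The main obstacle is making the KL step rigorous: the bound $\log(1+\delta)\approx\delta-\delta^2/2$ requires $\delta$ small uniformly, or at least control of $\int\tilde\pi\delta^2$ and the cubic remainder, and it can fail in the tails where $\tilde\pi$ is tiny. My first attempt would bound KL by the $\chi^2$ divergence, $D_{\mathrm{KL}}\le\chi^2(\pi\|\tilde\pi)=\int\tilde\pi\delta^2$, which avoids the logarithm expansion entirely. It then suffices to show $\|\delta\|_{L^2(\tilde\pi)}=\mathcal O(\mathrm{tr}\Sigma)$. That follows from the second-order Taylor formula with integral remainder, Minkowski's inequality, and the assumed finite second moment of $\nabla^2\tilde\pi/\tilde\pi$ along the Gaussian perturbation. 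This is precisely where the smoothness hypothesis on $\tilde\pi$ is consumed.
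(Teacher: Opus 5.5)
Your proposal is correct. Its TV and cross-entropy steps match the paper's: the convolution form, the integral mean-value bound $\|\tilde\pi(\cdot-\epsilon)-\tilde\pi\|_{L^1}\le\|\epsilon\|_2\,\|\nabla_a\tilde\pi\|_{L^1}$, Jensen for $\mathbb E\|\epsilon\|_2$, and the identity $\mathcal H(\pi,\tilde\pi)=\mathcal H(\pi)+D_{\mathrm{KL}}(\pi\|\tilde\pi)$. What you actually use in the TV step is averaging over $\epsilon$ (Minkowski), not a coupling; the coupling $(a^1,a^1+\epsilon)$ by itself gives only the trivial bound.

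The KL step is where you genuinely differ from the paper.
\begin{itemize}
\item The paper writes $\pi=\tilde\pi(1+x)$ and expands $(1+x)\log(1+x)$ to second order. It controls the cubic remainder by \emph{assuming} uniform relative closeness $\|x\|_\infty\le c\,\mathrm{tr}(\Sigma)$ (\Eqref{eq:app-x-infty}), plus integrability of $\big(\mathrm{tr}(\Sigma\nabla_a^2\tilde\pi)\big)^2/\tilde\pi$. This yields an asymptotic equality $D_{\mathrm{KL}}=\tfrac12\int\tilde\pi x^2+\text{h.o.t.}$, i.e.\ the leading constant $\tfrac18\int\big(\mathrm{tr}(\Sigma\nabla_a^2\tilde\pi)\big)^2/\tilde\pi$, not just an order bound.
\item Your route uses $D_{\mathrm{KL}}(\pi\|\tilde\pi)\le\log(1+\chi^2)\le\chi^2(\pi\|\tilde\pi)=\|\delta\|^2_{L^2(\tilde\pi)}$. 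You then get $\|\delta\|_{L^2(\tilde\pi)}=\mathcal O(\mathrm{tr}(\Sigma))$ from the second-order integral remainder and Minkowski. This gives only an upper bound, which is all the proposition claims. It needs just $C^2$ and integrability of $\|\nabla_a^2\tilde\pi(a-t\epsilon)\|/\tilde\pi(a)$ along the shift, and it never handles $\log(1+x)$ in the tails.
\end{itemize}

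Your route is a real gain in generality. The paper's sup-norm hypothesis already fails for Gaussian $\tilde\pi$: then $\pi$ is a wider Gaussian and $\pi/\tilde\pi$ is unbounded. Your condition holds there for small $\Sigma$; indeed $\chi^2=\mathcal O(\sigma^4)$ in closed form. It also holds whenever $\nabla\log\tilde\pi$ and $\nabla^2\log\tilde\pi$ are bounded, since then $\|\nabla_a^2\tilde\pi(a-u)\|/\tilde\pi(a)\le Ke^{L\|u\|}$ uniformly in $a$.

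Two minor remarks. Under only $C^2$, your aside that $R=\mathcal O(\mathrm{tr}(\Sigma)^2)$ ``by symmetry'' is unjustified, since it needs fourth derivatives; the $\chi^2$ argument does not use it. Your observation that TV is actually $\mathcal O(\mathrm{tr}(\Sigma))$ is correct, for instance via Pinsker applied to your KL bound.
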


Proposition~\ref{prop:surrogate-validity} justifies a tractable proxy for the MaxEnt-RL objective in $\mathcal{M}$ via the cross-entropy under a small-variance assumption on the local Gaussian.

\paragraph{Proxy MaxEnt-RL Objective.}

Using the cross-entropy, we now define the proxy MaxEnt-RL objective and corresponding soft value functions.
The augmented policy objective is given as
\begin{equation}
\label{eq:surr-policy-obj}
\mathcal{J}(\pi) = \sum\nolimits_{t=0}^{\infty} \gamma^t \mathbb{E}_{(s_t, a_t)\sim \rho_{\pi}}\big[\big( r(s_t, a_t) - \alpha \log \tilde{\pi}(a_t \mid s_t) \big) \big].
\end{equation}
By Corollary~\ref{cor:marginal-consistency} and \Eqref{eq:cross-ent-def-z}, this objective is equivalently evaluated in the $z$-MDP as $\mathcal{J}(\hat{\pi})$, providing a theoretical bridge for optimization within the latent-augmented space $\hat{\mathcal{M}}$.

\paragraph{Value function.} Following SAC~\citep{sac}, we define the soft state-value function and the corresponding soft Bellman operator $\mathcal T^\pi$ by replacing the entropy term with the cross-entropy term:
\begin{alignat}{2}
    &\text{(Soft Value Function)} \quad && V^{\pi}(s) = \mathbb{E}_{a\sim \pi}\left[ Q^\pi(s, a) - \alpha \log \tilde\pi(a\mid s)\right] \label{eq: cross-entropy-soft-v} \\
    &\text{(Soft Bellman Operator)} \quad && (\mathcal T^{\pi}Q)(s, a) \triangleq r(s, a) + \gamma \mathbb{E}_{s'\sim p}\left[ V^\pi(s')\right] \label{eq: cross-entropy-soft-bellman-op}
\end{alignat}
In the $z$-MDP $\hat{\mathcal M}$, the Q-function is defined with $Q^{\hat\pi} \left(\hat s, a \right)$.
\begin{corollary}[Q-function consistency]
\label{cor: q-func-consistency}
By Corollary~\ref{cor:marginal-consistency}, we use the same Q-function both in $\mathcal M$ and $\hat{\mathcal M}$
\begin{equation}
    Q^{\hat \pi}(\hat s, a)=Q^{\pi}(s, a), \quad \forall s,z, a
\end{equation}
Therefore, we use the Q-function $Q^\pi(s, a)$ of the original MDP in z-MDP.
\end{corollary}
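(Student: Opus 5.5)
The approach is to show that $Q^{\hat\pi}(\hat s,a)$ satisfies the same soft Bellman fixed-point equation as $Q^\pi(s,a)$ and does not depend on $z$. Uniqueness of the fixed point, via $\gamma$-contraction, then identifies the two functions.

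First we make the $z$-MDP soft value precise. Following \Eqref{eq: cross-entropy-soft-v}, we set $V^{\hat\pi}(\hat s)=\mathbb{E}_{a\sim\hat\pi(\cdot\mid\hat s)}[Q^{\hat\pi}(\hat s,a)-\alpha\log\tilde\pi(a\mid s)]$. Note that the penalty still uses the global flow density $\tilde\pi(a\mid s)$, which depends only on $s$. The $z$-MDP Bellman operator is
\[
(\hat{\mathcal T}^{\hat\pi}\hat Q)(\hat s,a)=r(s,a)+\gamma\,\mathbb{E}_{s'\sim p(\cdot\mid s,a)}\mathbb{E}_{z'\sim p_z}\big[V^{\hat\pi}(s',z')\big],
\]
which uses $\hat p=p\,p_z$ and $\hat r=r$ from Definition~\ref{def: z-mdp}. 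The key structural fact is that the next latent $z'$ is drawn independently of $(s,z,a)$. As a result, the right-hand side depends on $\hat s$ only through $s$, whatever $\hat Q$ is.

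Next we take $\hat Q(\hat s,a):=Q^\pi(s,a)$, which is constant in $z$, and plug it into $\hat{\mathcal T}^{\hat\pi}$. The inner expectation becomes
\[
\mathbb{E}_{z'\sim p_z}\mathbb{E}_{a'\sim\hat\pi(\cdot\mid s',z')}\big[Q^\pi(s',a')-\alpha\log\tilde\pi(a'\mid s')\big].
\]
By the marginalization \Eqref{eq:marginal-policy}, the two nested expectations collapse to $\mathbb{E}_{a'\sim\pi(\cdot\mid s')}[\cdot]$, which is exactly $V^\pi(s')$. Corollary~\ref{cor:marginal-consistency} gives the same conclusion in its general measurable-$f$ form, but here only the one-step marginalization is needed. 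Hence $\hat{\mathcal T}^{\hat\pi}\hat Q(\hat s,a)=(\mathcal T^\pi Q^\pi)(s,a)=Q^\pi(s,a)$, so $\hat Q$ is a fixed point of $\hat{\mathcal T}^{\hat\pi}$.

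Finally, $\hat{\mathcal T}^{\hat\pi}$ is a $\gamma$-contraction in sup-norm on bounded functions, for the standard reason: the policy-dependent term $-\alpha\log\tilde\pi$ enters additively and cancels in differences. So its fixed point is unique, and $Q^{\hat\pi}=\hat Q$, i.e.\ $Q^{\hat\pi}(s,z,a)=Q^\pi(s,a)$ for all $s,z,a$. The main obstacle is this contraction and boundedness step. It requires bounded rewards and a bounded or integrable cross-entropy term $\mathbb{E}_{a\sim\hat\pi}[-\log\tilde\pi(a\mid s)]$, so that the operator acts on a suitable bounded function space, as in the SAC soft policy evaluation lemma. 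I would state this as a standing assumption rather than prove it.

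An alternative that avoids the contraction argument is to expand $Q^{\hat\pi}$ as a discounted trajectory sum. One then argues by induction over time that the law of $(s_t,a_t)$ under the $z$-MDP, given $(s_0,a_0)$, matches that under $\pi$, integrating out each $z_t$ as in Corollary~\ref{cor:marginal-consistency}. This works because $z_0$ never influences anything beyond $a_0$.
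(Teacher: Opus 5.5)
Your proof is correct, but it follows a different route from the paper.

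The paper argues path-wise. It writes $Q^{\hat\pi}(\hat s_t,a_t)$ as an expected discounted sum of rewards and cross-entropy bonuses over augmented trajectories, and integrates out $z_{t+1:\infty}$ using the trajectory-level identity \Eqref{eq: app-traj-marg}. That identity is derived for a finite horizon $T$ and then extended by $T\to\infty$. The paper then notes that the reward terms coincide and each bonus averages to $\mathcal H(\pi(\cdot\mid s_{t+k}),\tilde\pi(\cdot\mid s_{t+k}))$, and reads off the original soft return. This is essentially your closing ``alternative''.

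Your main argument instead checks that the $z$-constant lift of $Q^\pi$ is a fixed point of the $z$-MDP soft Bellman operator, and then appeals to uniqueness. What this buys:
\begin{itemize}
\item It needs only the one-step identity \Eqref{eq:marginal-policy}.
\item It avoids the finite-horizon limit and the interchange of an infinite sum with the $z$-integrals.
\item The price is that you must work in a function class on which $\hat{\mathcal T}^{\hat\pi}$ is a $\gamma$-contraction and on which $Q^{\hat\pi}$ is known to be its fixed point. That is what the paper's boundedness assumption (Assumption~\ref{assum:bounded_reward}) is there to provide.
\end{itemize}

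Your formulation also makes the $z$-independence structural, because the bonus enters only through the next-state value with a fresh $z'$. In the paper's displayed sum, by contrast, the $k=0$ term $\alpha\,\mathcal H(\hat\pi(\cdot\mid\hat s_t),\tilde\pi(\cdot\mid s_t))$ is not averaged, since only $z_{t+1:\infty}$ are integrated out. That term does depend on $z_t$. So the sum has to be read with the bonus starting at $k=1$, as \Eqref{eq: cross-entropy-soft-bellman-op} dictates, for its final ``does not depend on $z_t$'' step to go through. Your alternative route implicitly uses the same convention when you say $z_0$ influences nothing beyond $a_0$.

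One small repair is needed. In your displayed operator, the next-state value should be the one induced by the argument $\hat Q$, namely $\mathbb{E}_{a'\sim\hat\pi(\cdot\mid s',z')}[\hat Q(s',z',a')-\alpha\log\tilde\pi(a'\mid s')]$, not $V^{\hat\pi}$ itself. Otherwise the operator would not depend on $\hat Q$. Your subsequent computation already uses it this way.
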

According to Corollaries~\ref{cor:marginal-consistency} and~\ref{cor: q-func-consistency}, maximizing the Q-function $Q^\pi(s, a)$ via the local policy $\hat \pi$ for all $\hat s\sim \hat\rho_{\hat\pi}$ in the $z$-MDP is equivalent to optimizing the global policy $\pi$ for all $s\sim \rho_\pi$.

\subsection{FLAG: A Practical Local Policy Update Algorithm} \label{subsection: local-policy-update}
In this section, we extend the EM updates from Section~\ref{subsection: RL-as-inference} to the $z$-MDP and our local 
policy $\hat\pi(\cdot\cmid \hat s;\theta)$. At each iteration $k$, we treat $\tilde\pi_{\theta_k}$ as a fixed reference, making the augmented reward $r_t - \alpha\log\tilde\pi_{\theta_k}(a_t\mid s_t)$ play the role of an iteration-fixed reward, enabling the EM derivation
\begin{equation}
\label{eq:em-objective}
\begin{aligned}
    \mathcal{J}_{\text{EM}}(q, \theta)=  &\mathbb{E}_{\hat\tau\sim p_q} \left[ \sum\nolimits_{t=0}^{\infty} {\gamma^t}\bigl( ( r_t -\alpha \log \tilde \pi_\theta (a_t\cmid s_t) ) / {\lambda}\bigr) \right] - D_{\text{KL}}\bigl(p_q(\hat{\tau}) \,\|\, p_{\hat{\pi}}(\hat{\tau})\bigr).
\end{aligned}
\end{equation}
Crucially, the cross-entropy structure allows us to define and evaluate this objective for a variational distribution $q$, which admits E-step target in a closed form.
For a reference policy $\hat \pi_k$ parameterized by $\theta_k$ and its associated $Q$-function $Q^{\pi_k}$, we define the energy function for brevity
\begin{equation}
\label{eq: energy-func}
    f_{\hat s, k}(a):=Q^{\pi_k}(s, a)-\alpha\log \tilde \pi_{\theta_k}(a\mid s).
\end{equation}
For each $\hat s\sim \hat \rho_{\hat\pi_k}$,
\begin{alignat}{2}
    &\text{({E-step:})} \qquad &&  q_k(a \mid \hat{s}) \propto \hat{\pi}(a \mid \hat{s}; \theta_k) \exp(f_{\hat{s}, k}(a) / \lambda)
    \label{eq: e-step-target} \\
    &\text{({M-step:})} \qquad && \theta_{k+1} \in \argmax_{\theta} \mathbb{E}_{a \sim q_k} \big[\log \hat{\pi}(a \mid \hat{s}; \theta)\big]  \label{eq:M-step-final}
\end{alignat}

\paragraph{M-step via Moment Matching.}

Since $\hat\pi$ is Gaussian with fixed $\Sigma_k$, the M-step in 
\Eqref{eq:M-step-final} reduces to matching the first moment :
\begin{equation}
\label{eq: M-step-regression}
    \theta_{k+1} \in \argmin_\theta \mathbb{E}_{(s, z) \sim \hat \rho_{\hat \pi}}\| T_\theta(s,z) - \mu_k^*(\hat s) \|^2,
    \quad \mu_k^*(\hat s) := \mathbb{E}_{q_k}[a].
\end{equation}
We approximate $\mu_k^*$ via self-normalized importance sampling \citep{bishop2006pattern} 
with $N$ samples $\{\delta_i\}_{i=1}^{N}\sim \hat\pi_k(\cdot \cmid \hat s)$:
\begin{equation}
\label{eq:SNIS-moment-matching}
    \mu_k^*(\hat s)
    = \mathbb{E}_{q_k}[a]
    \approx \sum\nolimits_{i=1}^N \bar w_i a_i,
    \quad \bar w_i = \operatorname{softmax} \big(f_{\hat s, k}(a + \delta_i)/{\lambda} \big).
\end{equation}
\begin{wrapfigure}{r}{0.35\textwidth}
    \centering
    \includegraphics[width=\linewidth]{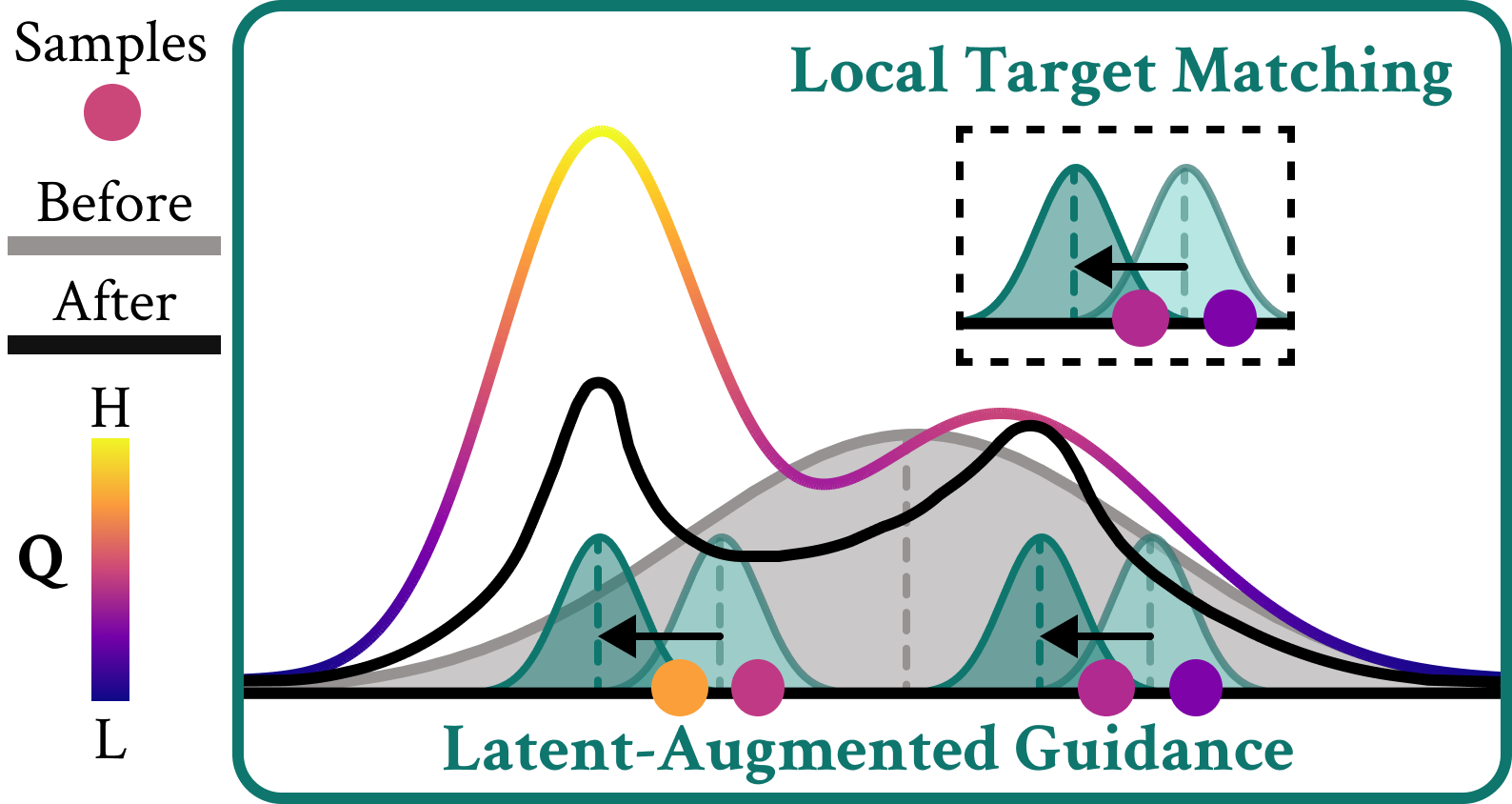}
    \caption{
    Target matching is performed on the local policy and target actions are distilled back to the flow policy.
    }
    \label{fig:4.3}
    \vspace{-1em}  %
\end{wrapfigure}
We treat $\mu_k^*(\hat s)$ as an improved action label for the anchor action $T_{\theta_k}(s, z)$. To realize \Eqref{eq: M-step-regression}, we distill these labels into the base flow policy $\tilde\pi_{\theta_k}$ using the CFM objective in \Eqref{eq:CFM-OT-objective}, i.e.,
\begin{equation}
\label{eq: M-step-CFM}
    \mathcal L(\theta;s, z)=\mathbb{E}_{\tau}\left[ \| u_\theta^\tau(s, \zeta^\tau(z, \mu^*)) -(\mu^*-z)\|^2\right].
\end{equation}
This label acts as a \emph{guiding flag}, as it indicates the latent vector $z$ where to 
generate in the action space (\Figref{fig:4.3}).

\paragraph{Monotonic improvement guarantee.}
We now show that FLAG's EM update monotonically improves the variational 
objective, $\mathcal J_{k+1} \geq \mathcal J_k$, where $\mathcal J_k := \mathcal J_{\text{EM}}(q_k, \theta_k)$.
Let $\mathcal G_k$ denote the squared gradient norm of the \Eqref{eq: M-step-regression} and the local covariance be $\Sigma_k = \sigma_k^2 I$.

\begin{theorem}[Monotonic Improvement Guarantee]
\label{thm:flag-monotonic-improvement}
Let the CFM update (\Eqref{eq: M-step-CFM}) approximates the ideal KL projection 
(\Eqref{eq: M-step-regression}) up to an error $\epsilon_k^{\mathrm{proj}}$.
For sufficiently small step size $\beta$ the one-step improvement of the FLAG 
update is lower-bounded by
\begin{equation}
\label{eq: thm-flag-monotonic-improvement}
    \mathcal J_{k+1} - \mathcal J_k 
    \;\geq\; 
    \underbrace{\lambda \beta \mathcal G_k}_{\substack{\text{frozen-reward}\\\text{MPO improvement}}}
    \;-\;
    \underbrace{\alpha\, C_\Sigma \sigma_k^2\, \beta \mathcal G_k}_{\substack{\text{cross-entropy}\\\text{reward drift}}}
    \;-\;
    \underbrace{\lambda\, \epsilon_k^{\mathrm{proj}}}_{\substack{\text{CFM projection}\\\text{error}}}
    \;+\; \mathcal{O}(\beta^2),
\end{equation}
where $C_\Sigma > 0$ is a constant absorbing zeroth-order approximation residuals.
Consequently, whenever $\lambda > \alpha C_\Sigma \sigma_k^2$ and $\epsilon_k^{\mathrm{proj}}$ is sufficiently small, $\mathcal J_{k+1} \geq \mathcal J_k$.
\end{theorem}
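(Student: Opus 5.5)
We split $\mathcal J_{k+1}-\mathcal J_k$ into three pieces, in the usual coordinate-ascent way for EM:
\begin{equation*}
\mathcal J_{k+1}-\mathcal J_k
=\underbrace{\bigl[\mathcal J_{\mathrm{EM}}(q_k,\theta_{k+1})-\mathcal J_{\mathrm{EM}}(q_k,\theta_k)\bigr]}_{\text{(I)}}
+\underbrace{\bigl[\mathcal J_{\mathrm{EM}}(q_{k+1},\theta_{k+1})-\mathcal J_{\mathrm{EM}}(q_k,\theta_{k+1})\bigr]}_{\text{(II)}} .
\end{equation*}
Term (II) is nonnegative. The E-step \Eqref{eq: e-step-target} is the exact maximizer of $\mathcal J_{\mathrm{EM}}(\cdot,\theta_{k+1})$ over $q$ once the reward is frozen, by the closed-form KL-regularized solution recalled in Section~\ref{subsection: RL-as-inference}. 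So (II) can be dropped as a lower-bound term. Term (I) is where the real work is, because $\theta$ enters $\mathcal J_{\mathrm{EM}}$ in two places. It appears in the KL term $-D_{\mathrm{KL}}(p_{q_k}\|p_{\hat\pi_\theta})$, which is the MPO part. It also appears in the cross-entropy reward $-\alpha\log\tilde\pi_\theta$, which is the reward-drift part. We bound the two contributions separately.

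For the MPO part, we first use that the dynamics and $p_z$ are shared, so the trajectory KL reduces to $\sum_t\gamma^t\mathbb E[-\log\hat\pi(a_t\mid\hat s_t;\theta)]$ plus constants. Because $\hat\pi$ is Gaussian with fixed $\Sigma_k$, this equals $\frac{1}{2\sigma_k^2}\mathbb E\|T_\theta(s,z)-\mu_k^*(\hat s)\|^2$ plus $\theta$-independent terms, which is the regression loss \Eqref{eq: M-step-regression}. We then Taylor-expand one gradient step $\theta_{k+1}=\theta_k-\beta\nabla L$ to get a decrease of $\beta\|\nabla L\|^2+\mathcal O(\beta^2)$. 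After rescaling by the $\lambda$ normalization of $\mathcal J_{\mathrm{EM}}$ (we multiply through by $\lambda$ and absorb the $\sigma_k^2$ factor into the definition of $\mathcal G_k$), this gives $\lambda\beta\mathcal G_k$. The actual update is the CFM surrogate \Eqref{eq: M-step-CFM}, not exact regression. By hypothesis, its deviation from the ideal projection costs at most $\epsilon_k^{\mathrm{proj}}$ in the objective, which yields the $-\lambda\epsilon_k^{\mathrm{proj}}$ term.

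For the reward-drift part, the change in $-\alpha\,\mathbb E_{q_k}[\sum_t\gamma^t\log\tilde\pi_\theta(a_t\mid s_t)]$ is first order in $\beta$ and equals $-\alpha\beta\langle\nabla_\theta\mathbb E_{q_k}[\log\tilde\pi_\theta],\,-\nabla L\rangle$. We will bound this inner product by $C_\Sigma\sigma_k^2\mathcal G_k$. The idea is that $q_k$ is concentrated within $\mathcal O(\sigma_k)$ of the anchor $T_{\theta_k}(s,z)$. Expanding $\log\tilde\pi_\theta(a)$ around the anchor, the zeroth-order terms cancel, since a gradient step on the anchor changes the log-density at the anchor in a way already matched by the regression direction. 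Only the local-Gaussian spread remains, contributing second-moment terms of size $\operatorname{tr}(\Sigma_k)\propto\sigma_k^2$, in the same spirit as Proposition~\ref{prop:surrogate-validity}. Smoothness of $\tilde\pi$ bounds the Hessian, and Cauchy--Schwarz converts the result into a multiple of $\mathcal G_k$. All of these constants go into $C_\Sigma$.

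Summing the pieces gives \Eqref{eq: thm-flag-monotonic-improvement}. The consequence then follows directly: if $\lambda>\alpha C_\Sigma\sigma_k^2$, the leading coefficient $(\lambda-\alpha C_\Sigma\sigma_k^2)\beta\mathcal G_k$ is positive. It dominates the $\mathcal O(\beta^2)$ remainder for small $\beta$, and it dominates $\lambda\epsilon_k^{\mathrm{proj}}$ when that error is small. The main obstacle is the drift bound, specifically showing that the cross-entropy reward gradient is $\mathcal O(\sigma_k^2)$-aligned with the regression gradient rather than $\mathcal O(1)$. First we would try the small-variance expansion of $\pi$ around $\tilde\pi$ underlying Proposition~\ref{prop:surrogate-validity}. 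If that does not give the alignment, we would fall back on treating the residual as the stated ``zeroth-order approximation'' constant.
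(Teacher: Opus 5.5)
Your decomposition differs from the paper's. You separate an M-step piece (I) at fixed $q_k$ from an E-step piece (II) at fixed $\theta_{k+1}$. The paper instead compares $\hat V_{k,\lambda}^{\hat\pi_{k+1}^{\mathrm{KL}},q_{k+1}}$ with $\hat V_{k,\lambda}^{\hat\pi_k,q_k}$ under the frozen reward $\hat r_k$, treating the M- and E-step together, and then isolates a pure reward-drift term. Your split has one real advantage: the drift is evaluated directly under $q_k$, which is where $\mathsf G_k^{\tilde\pi}$ lives, so the paper's $q_{k+1}\to q_k$ swap becomes unnecessary. However, two steps do not go through as written.

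\textbf{The E-step term (II).} Your claim that (II) is nonnegative is not justified. The E-step \Eqref{eq: e-step-target} solves a per-state, KL-constrained \emph{one-step} problem built on the fixed critic $Q^{\pi_{k+1}}$; the appendix remark on the E-step calls this MPO's one-step lookahead. It is not the maximizer of the trajectory-level functional $\mathcal J_{\mathrm{EM}}(\cdot,\theta_{k+1})$, whose maximizer would involve the soft $Q$-function of $q$ itself. So nothing forces $\mathcal J_{\mathrm{EM}}(q_{k+1},\theta_{k+1})\ge\mathcal J_{\mathrm{EM}}(q_k,\theta_{k+1})$. Since $q_{k+1}-q_k=\mathcal O(\beta)$, (II) is generically a first-order term of unknown sign, of the same order as the gain $\lambda\beta\mathcal G_k$ you want to keep. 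The paper never asserts (II) $\ge 0$. It folds the E-step into the frozen-reward comparison and controls that term with MPO's policy-improvement argument (Proposition~\ref{prop:flag-policy-eval-monotone} plus MPO's auxiliary-objective bound). It then uses the third item of Assumption~\ref{ass:flag-mpo-style} to push the mismatch ``$q_{k+1}$ is optimal for $\hat r_{k+1}$, not $\hat r_k$'' into $\mathcal O(\beta^2)$. You need that machinery, or an explicit assumption, in place of exact maximization.

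\textbf{The drift bound.} The small-variance expansion plus Cauchy--Schwarz cannot deliver $C_\Sigma\sigma_k^2\mathcal G_k$.
\begin{itemize}
\item The zeroth-order term in $\mathsf G_k^{\tilde\pi}$ does not vanish per anchor. It vanishes only after averaging over $z\sim p_z$: the anchors $T_{\theta_k}(s,z)$ are then distributed as $\tilde\pi_{\theta_k}(\cdot\mid s)$, and the score identity $\mathbb E_{a\sim\tilde\pi_\theta}[\nabla_\theta\log\tilde\pi_\theta(a\mid s)]=0$ kills it.
\item At order $\sigma_k^2$ two terms remain. One is a mean-shift term driven by $\mu_k^*-\mu_k=\Sigma_k\nabla_\mu g_k$. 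The other is a spread term $\tfrac{\sigma_k^2}{2}\mathbb E_z[\operatorname{tr}\nabla_a^2\nabla_\theta\log\tilde\pi_{\theta_k}]$, evaluated at the anchors, which does not depend on the E-step at all.
\item Hence $\|\mathsf G_k^{\tilde\pi}\|=\mathcal O(\sigma_k^2)$, and Cauchy--Schwarz only gives $|\langle\mathsf G_k^{\tilde\pi},\nabla_\theta h\rangle|\le C\sigma_k^2\sqrt{2\mathcal G_k}$.
\item This bound is linear in the gradient norm. It cannot be dominated by $\lambda\beta\mathcal G_k$ as $\mathcal G_k\to0$, and your ``constant'' would have to grow like $\mathcal G_k^{-1/2}$.
\end{itemize}
The quadratic form requires the drift direction to be nearly parallel to $\nabla_\theta h$. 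The paper obtains this only heuristically. It measures $\mathcal G_k=\tfrac12\|\nabla_\theta h\|^2$ on the $\sigma_k^{-2}$-scaled Gaussian KL, so $-\nabla_\theta h\approx\mathbf J^\top\mathbf v$ is free of $\sigma_k$, while the mean-drift surrogate is $\approx\sigma_k^2\mathbf J^\top\mathbf v$; their inner product is $2\sigma_k^2\mathcal G_k$. The gap between the true score $\mathsf G_k^{\tilde\pi}$ and this surrogate, including your spread term, is absorbed into $C_\Sigma$, and the bound is \emph{posited} as Assumption~\ref{ass:flag-drift-alignment}.

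So your fallback is exactly the paper's route, and the primary plan should be dropped. Your ``absorb $\sigma_k^2$ into $\mathcal G_k$'' step must also use this KL-scaled $\mathcal G_k$; otherwise the power of $\sigma_k$ in the drift term changes. Your treatment of the CFM error matches Assumption~\ref{ass:flag-cfm-proj}.
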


The three terms in Theorem~\ref{thm:flag-monotonic-improvement} correspond to the standard
MPO improvement under frozen reward, the drift of the cross-entropy reward as $\theta_k$
changes, and the approximation error introduced by the CFM realization of the KL projection
(see \Figref{fig:flag-mpo-flow} for the proof overview).
We ensure the sufficient conditions $\lambda > \alpha C_\Sigma \sigma_k^2$ and $\epsilon_k^\text{proj}\to 0$ are satisfied through the design choices in \Cref{subsection: implementation-details}.
While Theorem~\ref{thm:flag-monotonic-improvement} follows variational
framework, \Figref{fig:sac_vs_flag} and Appendix~\ref{app: cov-schedule-justification}
provide a complementary derivation showing how FLAG's moment-matching update
recovers SAC's soft policy improvement.

\subsection{Implementation Details} \label{subsection: implementation-details}

\textbf{Autotuning Temperature.}
We follow \citep{sac} to automatically tune the entropy scaling term $\alpha$:
\begin{equation}
\label{eq: alpha-objective}
    \mathcal{L}(\alpha) = \alpha ( \mathcal H ( \pi, \tilde\pi ) - \mathcal H_\text{target} ),
\end{equation}
where $\mathcal H_\text{target}$ is a hyperparameter that determines the stochasticity of the policy.

\textbf{Effective Temperature.}
We normalize the energy function as $f_{\mathrm{norm}}(a) = f_{\hat s, k}(a) / \alpha$ 
with a fixed reference temperature $\lambda_{\mathrm{ref}}$ 
\citep{minimalist_approach, BEAR, flowrl}, equivalent to using an effective 
temperature $\lambda = \alpha \lambda_{\mathrm{ref}}$ in \Eqref{eq: e-step-target}.
Fixing $\lambda_{\mathrm{ref}}$ avoids the numerical instability of 
optimizing two Lagrange multipliers $\alpha$ and $\lambda$, and reduces the sufficient 
condition for monotonic improvement 
(Theorem~\ref{thm:flag-monotonic-improvement}) to 
$\lambda_{\mathrm{ref}} > C_\Sigma \sigma_k^2$.

\textbf{Covariance Scheduling.}
We use an isotropic covariance $\Sigma_k = \sigma_k^2 I$.
$\sigma_k$ is annealed from $\sigma_{\mathrm{init}}$ to a small $\sigma_{\mathrm{final}}$, keeping $\sigma_k$ small throughout training.
This scheduling is crucial in Theorem~\ref{thm:flag-monotonic-improvement}: since the reward-drift term scales as $\mathcal{O}(\sigma_k^2)$, annealing $\sigma_k \to 0$ tightens the guarantee.

\textbf{Guidance Buffer.}
We cache locally improved action labels $\mu_k^*(\hat s)$ in a small guidance buffer 
and reuse them across multiple off-policy updates 
\citep{TRPO, tomar2020mirror}. Since the E-step targets drift slowly across iterations, 
recent labels remain effective targets for the current policy update, providing dense 
supervision that keeps the CFM projection error $\epsilon_k^{\mathrm{proj}}$ in 
Theorem~\ref{thm:flag-monotonic-improvement} small in practice.

\textbf{Other Details.}
Following DIME~\citep{dime}, we employ a distributional critic~\citep{distributionalrl} 
trained via CrossQ~\citep{crossq}. Full implementation details and the 
training procedure are provided in Appendix~\ref{app:implementation_details} and 
Algorithm~\ref{algorithm::FLAG}.

\begin{tcolorbox}[
    enhanced,
    colback=myteal!10,
    colframe=myteal!95,
    colbacktitle=myteal!95,
    coltitle=white,
    title=\textbf{FLAG realizes local IS through EM on the $\bm{z}$-MDP.},
    fonttitle=\bfseries\normalsize,
    boxrule=0.8pt,
    arc=2mm,
    outer arc=2mm,
    left=9pt,
    right=9pt,
    top=5pt,
    bottom=5pt,
    toptitle=2pt,
    bottomtitle=2pt,
    lefttitle=9pt,
    righttitle=9pt,
    titlerule=0pt,
    before skip=4pt,
    after skip=4pt,
]
The $z$-MDP augments the original MDP with a latent variable while preserving marginal and $Q$-function consistency. On this augmented MDP, we define a tractable proxy MaxEnt-RL objective whose EM-style optimization induces a localized proposal-target pair conditioned on $z$. FLAG instantiates this procedure, enabling the local IS within the localized region of the action space and making MaxEnt-RL via supervised learning scalable.
\end{tcolorbox}

%% file: sections/experiments.tex
\section{Experimental Results} \label{section:experiments}

We answer three questions about FLAG in this section.
\begin{itemize}[leftmargin=2.5em, itemsep=2pt, topsep=2pt]
    \item[\textbf{(Q1)}] \Cref{subsection: global policy sampling}: 
    Does FLAG scale to high-dimensional action spaces under limited sample budgets?
    \item[\textbf{(Q2)}] \Cref{subsection: comparison_with_diff-flow-policy}: 
    How does FLAG compare to action-gradient and BPTT-based actor-critic methods?
    \item[\textbf{(Q3)}] \Cref{subsection:key_design_choices}: 
    How do our key design choices---covariance scheduling and the guidance buffer---connect to the theoretical results?
\end{itemize}
We defer additional ablation studies and analyses to Appendix~\ref{app: additional_ablation_studies}, which includes:
(i) variance reduction of the Hutchinson trace estimator (\ref{app:hutchinson}),
(ii) the effect of cross-entropy (\ref{app:cross_entropy}),
(iii) variations on $\lambda_{\text{ref}}$ (\ref{app:lambda_ref}),
(iv) the number of ODE steps (\ref{app:ode_steps}),
(v) learned covariance and zeroth-order gradient variants (\ref{app:variants}), and
(vi) GPU memory consumption (\ref{app:gpu_memory}).
Hyperparameters and experiment details are deferred to Appendix~\ref{app:experiment_details}.

\textbf{Benchmarks and Metrics.}
We evaluate online RL algorithms on three challenging benchmark suites: DMC suite~\citep{dmc}, MyoSuite~\citep{myosuites}, and MuJoCo~\citep{todorov2012mujoco}. MuJoCo is included because target-matching policy optimization methods are commonly evaluated in this domain. We evaluate on four tasks each: HalfCheetah, Walker2d, Ant, and Humanoid for MuJoCo-v5;
Dog-Stand, Dog-Walk, Dog-Trot, and Dog-Run for DMC Dog;
and Reach-Hard, Obj-Hold-Hard, Key-Turn-Hard, and Pen-Twirl-Hard for MyoSuite. To compare results across benchmarks with different reward scales, we report normalized scores: MuJoCo scores are normalized by CrossQ performance~\citep{crossq}, DMC Dog returns are divided by the maximum return of 1{,}000, and MyoSuite scores are reported as success rates.

\subsection{FLAG is Scalable to High-dimensional Action Spaces and Robust to Sample Budget} \label{subsection: global policy sampling}

\emph{Is FLAG scalable to high-dimensional action space?}

\begin{figure}
    \centering
    \includegraphics[width=0.9\linewidth]{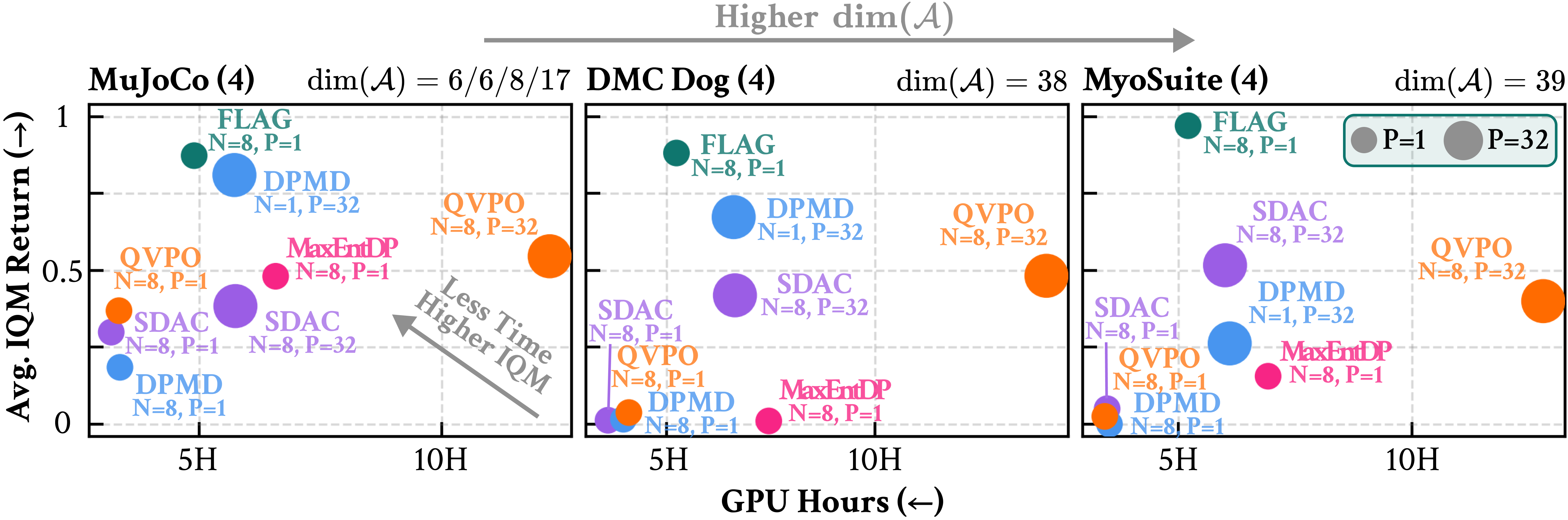}
    \caption{
    \textbf{Performance and computational efficiency of global and local proposal sampling.}
    We plot normalized performance against wall-clock GPU runtime to examine the performance--efficiency trade-off.
    The $x$-axis denotes the time (hours) for a single 1M-step training run on an NVIDIA L40S GPU,
    and the $y$-axis reports aggregated IQM returns across 10 random seeds at 1M steps.
    A method in the \textit{upper-left} region is both higher-performing and takes less time to train.
    P denotes the number of samples in the best-of-P heuristic, and is indicated by the diameter of each marker.
    FLAG consistently occupies the upper-left region across all domains, demonstrating superior performance \textit{without} additional computational overhead.
    }
    \label{fig:5_1}
    \vspace{-1em}
\end{figure}

To answer the question, we first compare FLAG with the global IS baselines:
SDAC and DPMD \citep{rsm}, QVPO \citep{qvpo}, and MaxEntDP \citep{maxentdp}. This comparison highlights the limited scalability of the global IS in high-dimensional action spaces under the same conditions.
QVPO is included as a baseline since it also obtains samples from global proposal distribution to estimate a lower bound on the policy gradient.
All methods use the same CrossQ \citep{crossq} critic with distributional critics \citep{distributionalrl}, and the number of Q-function evaluations per policy update (N) is fixed to 8 due to computational cost.
Our main comparison sets the best-of-P budget to P$=1$, where best-of-P selects the highest-Q action among P sampled candidates, to isolate the effect of the proposal distribution. 
We additionally report $\text{P}=32$ variants where applicable.\footnote{
The $\text{P}=32$ variant is not applied to MaxEntDP because its policy-update candidates are drawn from the replay buffer, making best-of-P selection inapplicable.
}

\begin{wrapfigure}{r}{0.32\textwidth}
    \vspace{-1.2em}
    \centering
    \includegraphics[width=\linewidth]{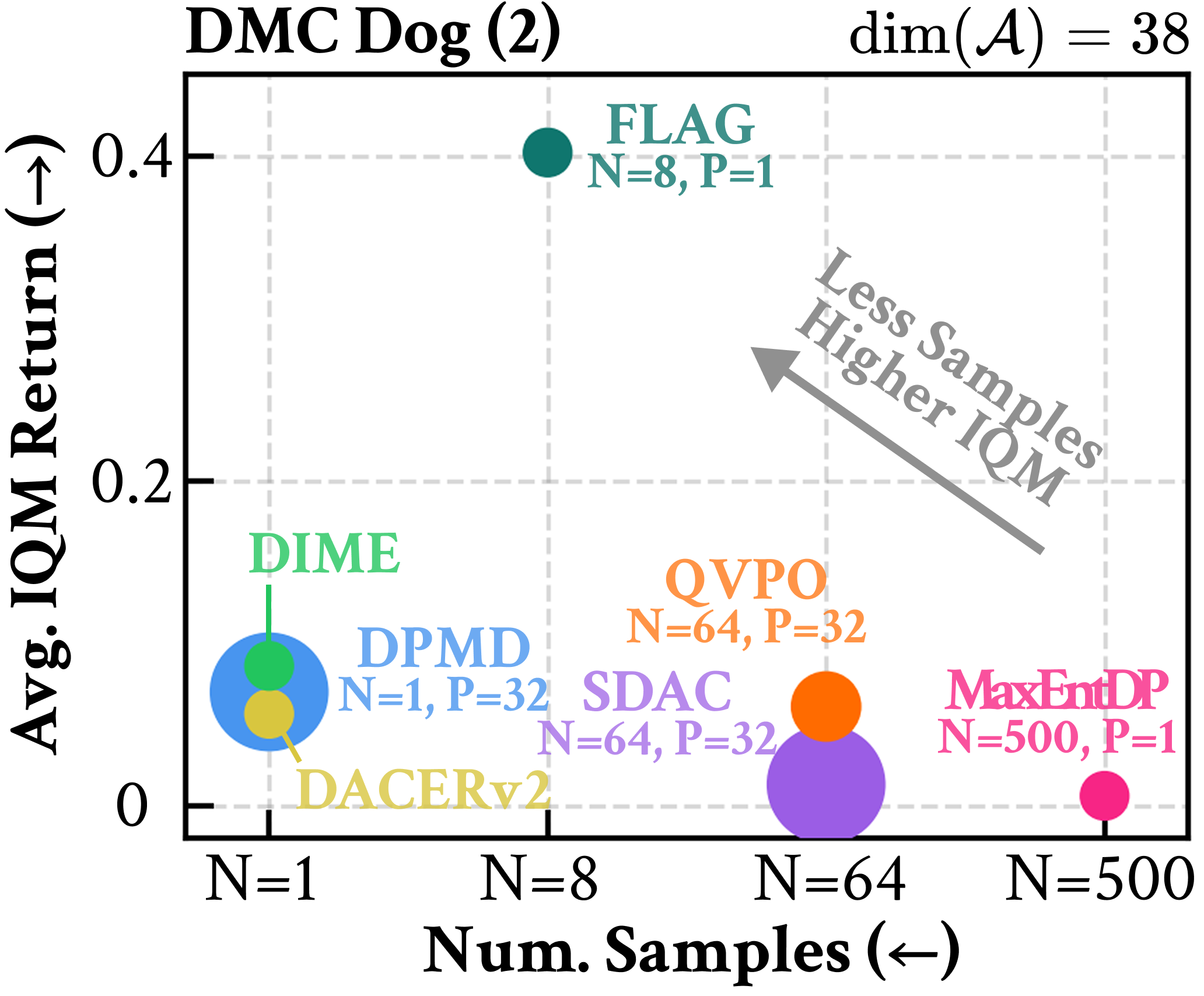}
    \caption{
    Comparison on the DMC Dog-run and Dog-trot task w/o CrossQ, performance averaged.
    We report with 10 random seeds for $\text{P}=1$ and 5 for $\text{P}=32$. We defer the experimental details to \ref{app: subsub-fig4}.
    }
    \label{fig:wo_crossq}
    \vspace{-1.2em}
\end{wrapfigure}

As shown in \Cref{fig:5_1}, global-proposal baselines struggle in high-dimensional action spaces, especially without best-of-P.
Even with the stronger $\text{P}=32$ heuristic, their performance remains substantially below FLAG.
In contrast, FLAG achieves consistently higher returns with the same Q-function evaluation budget ($\text{N}=8$), indicating that local proposal matching is the key mechanism enabling scalable target matching in high-dimensional control. We also present the learning curves in \Figref{fig:5_1_learning_curves_p1} and \Figref{fig:5_1_learning_curves_p32}.

To rule out the possibility that FLAG's advantage comes from the use of CrossQ or from an artificially restricted Q-function inference budget, we evaluate methods without CrossQ.
In this setting, the baselines use their default, larger sample budgets.
As shown in \Figref{fig:wo_crossq}, the global-proposal baselines still fail to learn meaningful behaviors on challenging DMC Dog tasks, whereas FLAG remains effective.
This suggests that the performance gap is not explained by the value-learning architecture or the evaluation budget, but by the proposed local proposal matching strategy.

\emph{Is FLAG robust to the sample budget?}

\begin{wraptable}{r}{0.38\textwidth}
\vspace{-1.2em}
\centering
\caption{Ablation study on the number of training samples in DMC Dog tasks. We aggregate 10 random seeds at 1M steps. The \capbest{} and \capsecond{} scores are highlighted.}
\label{tab:ablation_num_training_samples_dmc_dog}

\footnotesize
\renewcommand{\arraystretch}{0.95}
\newcommand{\unithead}[1]{\textcolor{statgray}{\scriptsize #1}}
\setlength{\tabcolsep}{6pt}

\begin{tabular}{@{}c c c@{}}
\toprule
\makecell{\textbf{Num.} \\ \textbf{Samples}} &
\makecell{\textbf{Dog-trot} \\ \unithead{Return (1k)}} &
\makecell{\textbf{Dog-run} \\ \unithead{Return (1k)}} \\
\midrule
$\text{N}=2$  & \res{0.620}{0.072} & \res{0.818}{0.095} \\
$\text{N}=4$  & \res{0.639}{0.061} & \res{0.827}{0.106} \\
$\text{N}=8$  & \secondnum{\res{0.680}{0.066}} & \bestnum{\res{0.912}{0.047}} \\
$\text{N}=16$ & \bestnum{\res{0.687}{0.073}} & \secondnum{\res{0.874}{0.087}} \\
$\text{N}=32$ & \res{0.641}{0.075} & \secondnum{\res{0.879}{0.057}} \\
\bottomrule
\end{tabular}
\vspace{-1em}
\end{wraptable}

We evaluate the sensitivity of FLAG to the sample budget.
As shown in \Cref{tab:ablation_num_training_samples_dmc_dog}, FLAG remains robust even with limited samples ($\text{N}=2$), exhibiting only mild performance degradation under limited sampling budgets.
This robustness provides empirical evidence for our main design principle: by localizing both the proposal and target distributions, FLAG reduces their discrepancy and makes importance sampling effective within a restricted region of the action space.
In contrast to global IS, whose samples are spread over the full action space and may have little overlap with the target density, local IS operates between two distributions supported on the same local region.
As a result, the importance weights remain informative even with few samples.

\subsection{FLAG Outperforms BPTT-based Actor-Critic and Action Gradient Methods} \label{subsection: comparison_with_diff-flow-policy}

FLAG does not rely on critic gradients, making it robust even when critic gradient signals are unreliable.
To investigate this, we compare methods in two settings: with and without CrossQ.
For BPTT baselines, we use DIME \citep{dime}, FlowRL \citep{flowrl}, and DACERv2 \citep{dacerv2},
and for action-gradient baselines, we use DIPO \citep{dipo} and QSM \citep{qsm}.
Following \Cref{subsection: global policy sampling}, we match critics using CrossQ and distributional
critics across all methods\footnote{Since DACERv2 trains critic to predict the mean and standard deviation of the Q-value, we exclude distributional critics.}. 
As shown in \Cref{tab:table4_dmc_myo_split}, FLAG outperforms all baselines across high-dimensional benchmarks.
To further examine FLAG's robustness to critic quality, we conduct experiments without
CrossQ (\Figref{fig:wo_crossq}) \footnote{We only include DACERv2 and DIME in this comparison, as these methods achieve
comparable performance in \Cref{tab:table4_dmc_myo_split}.}.
While gradient-based methods are susceptible to noisy signals from an insufficiently trained critic,
FLAG aggregates critic signals through importance-weighted averaging rather than direct differentiation,
smoothing out noise in the learning target and thereby maintaining stable performance.

\begin{table}[t]
\centering
\caption{
IQM final performance after 1M environment steps on DMC Dog and MyoSuite.
All results are computed from evaluation collected over 10 seeds with 5 evaluation episodes.
Each entry reports the IQM point estimate with a 95\% confidence interval;
the $\pm$ notation is used only as a compact summary of the confidence interval (\citep{simba-v2}).
The \capbest{} and \capsecond{} IQM point estimates are highlighted in each column, respectively.
}
\label{tab:table4_dmc_myo_split}

\footnotesize
\renewcommand{\arraystretch}{0.88}
\setlength{\tabcolsep}{3.2pt}

\begin{tabular}{@{}lllccccc@{}}
\toprule
& & & \multicolumn{5}{c}{\textsc{DMC Dog} (4)} \\
\cmidrule(l){4-8}
\textbf{Policy}
& \textbf{Method}
& \textbf{Alg.}
& \textbf{Dog-stand}
& \textbf{Dog-walk}
& \textbf{Dog-trot}
& \textbf{Dog-run}
& \textbf{Avg.} \\
\midrule

Gauss.
& -
& CrossQ
& \res{0.957}{0.121}
& \res{0.882}{0.079}
& \second{0.890}{0.016}
& \res{0.467}{0.080}
& 0.799 \\

\midrule

\multirow{6}{*}{Express.}
& \multirow{2}{*}{\makecell[l]{Action Gradient}}
& DIPO
& \res{0.906}{0.030}
& \res{0.570}{0.164}
& \res{0.404}{0.080}
& \res{0.264}{0.017}
& 0.536 \\

&
& QSM
& \res{0.061}{0.131}
& \res{0.137}{0.144}
& \res{0.142}{0.053}
& \res{0.064}{0.055}
& 0.101 \\

\cmidrule(lr){2-8}

& \multirow{3}{*}{\makecell[l]{BPTT-based \\ Actor-Critic}}
& DACERv2
& \res{0.923}{0.031}
& \res{0.726}{0.258}
& \res{0.367}{0.226}
& \res{0.090}{0.108}
& 0.526 \\

&
& DIME
& \second{0.968}{0.017}
& \second{0.944}{0.024}
& \second{0.886}{0.050}
& \second{0.629}{0.056}
& \secondnum{0.857} \\

&
& FlowRL
& \second{0.964}{0.014}
& \res{0.910}{0.011}
& \res{0.870}{0.018}
& \res{0.452}{0.036}
& 0.799 \\

\cmidrule(lr){2-8}

& Target Matching
& FLAG (Ours)
& \best{0.974}{0.016}
& \best{0.950}{0.011}
& \best{0.912}{0.044}
& \best{0.680}{0.062}
& \bestnum{0.879} \\

\midrule
& & & \multicolumn{5}{c}{\textsc{MyoSuite} (4)} \\
\cmidrule(l){4-8}
\textbf{Policy}
& \textbf{Method}
& \textbf{Alg.}
& \textbf{Reach-Hard}
& \textbf{Obj-Hold-Hard}
& \textbf{Key-Turn-Hard}
& \textbf{Pen-Twirl-Hard}
& \textbf{Avg.} \\
\midrule

Gauss.
& -
& CrossQ
& \res{0.600}{0.217}
& \best{1.000}{0.100}
& \res{0.900}{0.150}
& \res{0.100}{0.133}
& 0.650 \\

\midrule

\multirow{6}{*}{Express.}
& \multirow{2}{*}{\makecell[l]{Action Gradient}}
& DIPO
& \res{0.600}{0.219}
& \res{0.340}{0.201}
& \res{0.560}{0.408}
& \res{0.760}{0.174}
& 0.565 \\

&
& QSM
& \res{0.089}{0.069}
& \res{0.000}{0.000}
& \res{0.209}{0.153}
& \res{0.026}{0.048}
& 0.081 \\

\cmidrule(lr){2-8}

& \multirow{3}{*}{\makecell[l]{BPTT-based \\ Actor-Critic}}
& DACERv2
& \res{0.800}{0.107}
& \second{0.925}{0.157}
& \res{0.911}{0.100}
& \second{0.800}{0.129}
& 0.859 \\

&
& DIME
& \second{0.900}{0.167}
& \best{1.000}{0.017}
& \best{1.000}{0.000}
& \res{0.700}{0.233}
& \secondnum{0.900} \\

&
& FlowRL
& \res{0.000}{0.008}
& \res{0.017}{0.042}
& \res{0.000}{0.000}
& \res{0.050}{0.067}
& 0.017 \\

\cmidrule(lr){2-8}

& Target Matching
& FLAG (Ours)
& \best{0.930}{0.078}
& \best{1.000}{0.000}
& \second{0.930}{0.142}
& \best{0.870}{0.174}
& \bestnum{0.933} \\

\bottomrule
\end{tabular}
\vspace{-1em}
\end{table}

\begin{table*}[h!]
\centering
\caption{Ablation study on buffer size and covariance scheduling. We report returns normalized by 1k in DMC Dog-run tasks. $\dagger$ denotes the hyperparameters used in Section~\ref{subsection: global policy sampling} and Section~\ref{subsection: comparison_with_diff-flow-policy}. All experiments are conducted with 5 random seeds. The \capbest{} and \capsecond{} scores are highlighted in each column, respectively.}
\label{tab:ablation_buffer_covariance}

\footnotesize
\renewcommand{\arraystretch}{0.95}
\setlength{\tabcolsep}{3.3pt}

\begin{tabular}{@{}l *{5}{c} *{8}{c}@{}}
\toprule
&
\multicolumn{5}{c}{\textbf{Buffer Size}} &
\multicolumn{8}{c}{\textbf{Covariance Scheduling }$\sigma_{\mathrm{init}} \;(\sigma_{\mathrm{final}})$} \\
\cmidrule(lr){2-6}
\cmidrule(l){7-14}
&
\textbf{0} &
\textbf{10.24k$^\dagger$} &
\textbf{51.2k} &
\textbf{102.4k} &
\textbf{204.8k} &
\textbf{-1(-1)} &
\textbf{-1(-2)} &
\textbf{-1(-3)} &
\textbf{-2(-2)} &
\textbf{-2(-3)$^\dagger$} &
\textbf{-2(-4)} &
\textbf{-2(-5)} &
\textbf{-2(-6)} \\
\midrule
Return (1k) $\uparrow$
& 0.601
& \bestnum{0.680}
& \secondnum{0.670}
& 0.618
& 0.589
& 0.614
& 0.675
& \bestnum{0.732}
& 0.588
& \secondnum{0.680}
& 0.597
& 0.547
& 0.269 \\
\bottomrule
\end{tabular}
\vspace{-1em}
\end{table*}

\subsection{FLAG Key Design Choices Align with Theoretical Results} \label{subsection:key_design_choices}
We empirically show that the key design choices described in Section~\ref{subsection: implementation-details} help control the terms in the monotonic improvement bound of Theorem~\ref{thm:flag-monotonic-improvement}.
This bound identifies two controllable quantities in practice:
(i) the covariance-dependent drift term and
(ii) the CFM projection error $\epsilon_k^{\mathrm{proj}}$.
Accordingly, FLAG uses covariance scheduling and a guidance buffer (Section~\ref{subsection: implementation-details}). 
In this section, we ablate these two components to examine how they affect the corresponding terms in practice.
\paragraph{Covariance Scheduling}

The condition $\lambda > \alpha C_\Sigma \sigma_k^2$ becomes easier to satisfy as the local covariance decreases, since $\sigma_k^2$ directly suppresses the reward-drift term (\Eqref{eq: thm-flag-monotonic-improvement}). 
However, an overly small $\sigma_k$ can restrict the proposal region, thus weaken the guidance signal. To balance these effects, FLAG anneals the local covariance from $\sigma_{\mathrm{init}}$ to $\sigma_{\mathrm{final}}$. As shown in \Cref{tab:ablation_buffer_covariance}, moderate annealing yields improved performance by allowing early local exploration while gradually reducing the drift. In contrast, overly aggressive annealing shrinks the local search region too early, leading to convergence to suboptimal local modes and eventual policy collapse.

\paragraph{Guidance Buffer}

The CFM projection error $\epsilon_k^{\mathrm{proj}}$ measures how accurately the flow policy
realizes the improved target actions obtained from the M-step. The guidance
buffer reduces this error by caching recently computed target actions and
reusing them across multiple off-policy updates, thereby providing denser
supervision for the CFM projection.
As shown in \Cref{tab:ablation_buffer_covariance}, removing the guidance buffer degrades performance, suggesting insufficient projection of local improvements into the flow policy. 
Conversely, an excessively large buffer also hurts performance, as stale targets become increasingly mismatched with the
current policy. A moderate buffer size achieves the best trade-off between supervision density and target freshness, keeping $\epsilon_k^{\mathrm{proj}}$ small in practice.

%% file: sections/conclusion.tex
\section{Conclusion} \label{section:conclusion}

We presented FLAG, a MaxEnt-RL framework that optimizes an expressive policy through a supervised learning paradigm via latent-augmented guidance.
By leveraging the deterministic property of the flow map, FLAG replaces the global proposal distribution of prior importance sampling-based methods with a local one, avoiding both the weight degeneracy of global sampling and the numerical instability of BPTT.
Empirically, FLAG scales to high-dimensional action spaces with limited Q-function evaluations, and performs competitively with methods that rely on BPTT and critic gradients.
To the best of our knowledge, this is the first supervision-based approach to scale to such high-dimensional control environments.
As a limitation, the combination of the base flow policy and local Gaussian head is one particular instantiation of our framework; a more principled construction may be obtained via ODE-to-SDE conversion \citep{domingoadjoint}, which we leave for future work.

%% file: sections/appendix/flow_maxent-rl.tex
\section{Computational Challenges in Flow-based MaxEnt-RL}
\label{app: flow-maxent-rl}
In this section, we describe in detail the technical challenges of applying flow-based policies within the standard MaxEnt-RL framework.
Specifically, we define our policy $\tilde{\pi}_\theta$ as a flow-induced distribution where an action is generated by solving the ODE $T_\theta(s, z) = a^1 = z + \int_0^1 v_\theta(a^\tau, \tau, s)d\tau $ following \Eqref{eq: flow-map}. 

\paragraph{Unbiased log-density estimation using Hutchinson trace estimator.}

A significant advantage of flow-based policies is the ability to recover exact log densities, which are typically unavailable in diffusion models.
For a flow policy defined by the ODE trajectory $\phi_t^\tau$, the log-density admits a path-integral form:
\begin{equation*}
    \log \tilde{\pi}_\theta(a_t\mid s_t)
    =\log p_z(z_t)-\int_0^1 \text{tr}\Big(J_\theta^\tau\Big)\,d\tau,
    \qquad
    J_\theta^\tau \triangleq \frac{\partial u_\theta^\tau}{\partial \phi}\big(s_t,\phi_t^\tau\big).
\end{equation*}
Hutchinson's identity \citep{hutchinsontraceestimator} allows us to estimate the trace of a square matrix $J$.
Using any random vector $\epsilon$ s.t. $\mathbb{E}[\epsilon\epsilon^\top]=I$, we obtain the unbiased estimator, ${\Tr(J)=\mathbb{E}_\epsilon[\epsilon^\top J \epsilon]}$.
Applying this estimator to $\log \tilde \pi_\theta (a \mid s)$ yields the following.
\begin{equation}
    \label{eq:hutch_logpi}
    \log \tilde{\pi}_\theta(a_t\mid s_t) =
    \mathbb{E}_\epsilon\left[ \log p_z(z_t)-\int_0^1 \epsilon^\top J_\theta^\tau \epsilon \, d\tau \right] .
\end{equation}
In practice, the quadratic form $\epsilon^\top J_\theta^\tau \epsilon$ is computed without forming $J_\theta^\tau$ explicitly leveraging Jacobian-vector product.
With reparameterization $a_t=T_\theta(s_t,z_t)$ and \Eqref{eq:hutch_logpi}, we can form an unbiased Monte Carlo estimator of MaxEnt objective as follows:
\begin{equation}
\label{eq:unbiased_objective}
    \mathcal{J}_\text{MaxEnt}(\theta) = 
    \mathbb{E}_{z_t\sim p_z, \, \epsilon} \left[ Q_\phi\!\big(s_t,T_\theta(s_t,z_t)\big) - \log p_z(z_t)-\int_0^1 \epsilon^\top J_\theta^\tau \epsilon\bigr) \right].
\end{equation}

\paragraph{Pathwise gradient requires differentiating through the entire ODE.}
Although \Eqref{eq:unbiased_objective} enables unbiased \emph{evaluation}, optimizing $\theta$ by a direct pathwise gradient requires $\partial a_t/\partial\theta$.
Let $S_t^\tau \triangleq \frac{\partial \phi_t^\tau}{\partial \theta}$ denote the parameter sensitivity along the flow.
Differentiating the ODE gives the sensitivity equation
\begin{equation}
\label{eq:sensitivity_ode}
\frac{d S_t^\tau}{d\tau}
=
J_\theta^\tau\, S_t^\tau
+
\frac{\partial u_\theta^\tau}{\partial \theta}\big(s_t,\phi_t^\tau\big),
\qquad
S_t^0 = 0,
\qquad
\frac{\partial a_t}{\partial \theta}=S_t^1 .
\end{equation}
Therefore, the pathwise gradient of the $Q$-term is
\begin{equation}
\label{eq:grad_Q_pathwise}
\nabla_\theta Q_\phi\!\big(s_t,a_t\big)
=
\nabla_a Q_\phi(s_t,a)\big|_{a=a_t}\; S_t^1 .
\end{equation}
Computing $S_t^1$ entails differentiating through the entire ODE evolution in $\tau\in [0, 1]$. Since the sensitivity dynamics repeatedly apply the local Jacobian $J_\theta^\tau$ along the trajectory, the accumulated transformation can become ill-conditioned, leading to vanishing/exploding sensitivities \citep{vanishing-gradient, exploding-gradient}.

\paragraph{The log-density gradient involves second-order terms (HVP).}
The Hutchinson integrand depends on the state through $J_\theta^\tau=\partial u_\theta^\tau/\partial \phi$.
Its state gradient is
\begin{equation*}
    \nabla_{\phi_t^\tau}\!\left(\epsilon^\top J_\theta^\tau \epsilon\right) = \left\langle \epsilon, \Big(\nabla_{\phi}^2 u_\theta^\tau(s_t,\phi_t^\tau)\Big) \epsilon \right\rangle,
\end{equation*}
which is a Hessian--vector product (a double contraction with $\epsilon$).
Propagating this along the flow yields the score term
\begin{align}
\label{eq:score_has_hvp}
\nabla_a \log \tilde \pi_\theta (a_t \mid s_t)
&=
J_{0}^\top \nabla_{z}\log p_z(z_t)
-\mathbb{E}_{\epsilon}\bigg[
\int_0^1
(J_\theta^\tau)^\top
\underbrace{\left\langle \epsilon, \Big(\nabla_{\phi}^2 u_\theta^\tau(s_t,\phi_t^\tau)\Big) \epsilon \right\rangle}_{\text{HVP}}
\,d\tau
\bigg],
\end{align}
where $\nabla_a=(J_\theta^\tau)^\top \nabla_{\phi_t^\tau}$.
\paragraph{Implication.}
\Eqref{eq:sensitivity_ode} and \Eqref{eq:grad_Q_pathwise} show that even the $Q$-term requires differentiating through the entire ODE.
More importantly, \Eqref{eq:score_has_hvp} reveals that the entropy gradient additionally involves Hessian--vector products of the vector field, making direct MaxEnt-RL optimization with flow policies more expensive than objective evaluation.

%% file: sections/appendix/derivations.tex
\section{Derivations} \label{app: derivations}
In this section, we make detail derivations in Section~\ref{section: method_main}.
\subsection{Global and Local Consistency}
\label{app:marginal-Q-and-conditioanl-Q}
\subsubsection{Marginal Distributions Consistency (Corollary~\ref{cor:marginal-consistency})}
\label{app: marginal consistency}
While our general framework assumes an infinite horizon, for notational clarity, we restrict our attention to a finite horizon $T$ in this derivation.
Fix an initial pair $(s_t,a_t)$, and any $z_t$; write $\hat s_t=(s_t,z_t)$.  
Consider the suffix trajectories
\begin{equation*}
    \tau=(s_{t+1},a_{t+1},\dots,s_T,a_T),\qquad
    \hat\tau=(\hat s_{t+1},a_{t+1},\dots,\hat s_T,a_T).
\end{equation*}
Under the augmented process with $z$-conditioned policy $\hat \pi(\cdot\mid \hat s)$ and i.i.d.\ $z$’s,
\begin{equation}
\label{eq: z-MDP-traj}
    p_{\hat \pi}\left(\hat\tau\mid \hat s_t, a_t\right)
    =\prod_{i=t}^{T-1} p(s_{i+1}\mid s_i,a_i)\,p_z(z_{i+1})\,\hat \pi(a_{i+1}\mid s_{i+1},z_{i+1}).
\end{equation}

Marginalizing out $z_{t+1:T}$ gives the $(s,a)$–trajectory law
\begin{align}
\int p_{\hat\pi}(\hat\tau\mid \hat s_t,a_t)\,\prod_{i=t+1}^{T-1}\!dz_i
&=\prod_{i=t}^{T-1} p(s_{i+1}\mid s_i,a_i)\,\underbrace{\int p_z(z_{i+1})\,\hat \pi(a_{i+1}\mid s_{i+1},z_{i+1})\,dz_{i+1}}_{=\ \pi(a_{i+1}\mid s_{i+1})}\nonumber\\
&=\prod_{i=t}^{T-1} p(s_{i+1}\mid s_i,a_i)\,\pi(a_{i+1}\mid s_{i+1})
=:p_\pi(\tau\mid s_t,a_t).
\label{eq: app-traj-marg}
\end{align}
Thus, after integrating out $z$, the $(s,a)$–trajectory distribution exactly matches that induced by the marginal policy $\pi$. We can extend this result to the infinite horizon as $T\to\infty$.
\begin{lemma}
    \label{lem:marginal-consistency}
    In the $z$-MDP framework, let $\hat \pi$ be the local policy defined in \Eqref{eq:local-policy} and $\pi$ be the global policy defined in \Eqref{eq:marginal-policy}. The discounted state and state--action marginal distributions induced by $\hat \pi$ satisfy:
    \begin{equation}
        \hat {\rho}_{\hat\pi}(s,z,a) \;=\; \rho_\pi(s)\,p_z(z)\,\hat \pi(a\mid s, z),
        \qquad
        \hat\rho_{\hat\pi}(\hat s)\;=\;\rho_\pi(s)\,p_z(z).
    \end{equation}
\end{lemma}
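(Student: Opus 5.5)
We need the discounted marginals $\hat\rho_{\hat\pi}(s,z,a)=\sum_{t}\gamma^t \Pr_{\hat\pi}(s_t=s,z_t=z,a_t=a)$ (normalized by $1-\gamma$ if the paper's convention requires it; the normalization plays no role). The plan is to prove the factorization at every fixed time step $t$ and then sum over $t$ with weights $\gamma^t$.

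\textbf{Time-$t$ laws.} Fix $t$ and write $\hat\mu_t(s,z,a)$ for the time-$t$ law of $(s_t,z_t,a_t)$ under the augmented process, and $\mu_t(s)$ for the time-$t$ state law under $\pi$ in the original MDP. We will show by induction on $t$ that
\[
\hat\mu_t(s,z,a)=\mu_t(s)\,p_z(z)\,\hat\pi(a\mid s,z).
\]

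\textbf{Base case.} At $t=0$ the state is $s_0\sim p(s_0)$. The latent $z_0\sim p_z$ is drawn independently; we adopt this convention for the initial augmented state, consistent with Definition~\ref{def: z-mdp}. Finally $a_0\sim\hat\pi(\cdot\mid s_0,z_0)$. This gives the claimed form directly.

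\textbf{Inductive step.} Assume the factorization holds at time $t$. By the $z$-MDP dynamics, $\hat p(\hat s'\mid\hat s,a)=p(s'\mid s,a)p_z(z')$, so
\[
\hat\mu_{t+1}(s',z',a')=\hat\pi(a'\mid s',z')\,p_z(z')\int p(s'\mid s,a)\,\hat\mu_t(s,z,a)\,ds\,dz\,da.
\]
The key point is that $z'$ is drawn independently of the past, so $p_z(z')$ factors out of the integral. Inside the integral we apply the inductive hypothesis and integrate out $z$ first, using \Eqref{eq:marginal-policy}: $\int p_z(z)\hat\pi(a\mid s,z)\,dz=\pi(a\mid s)$. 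The integral then becomes $\int p(s'\mid s,a)\,\mu_t(s)\,\pi(a\mid s)\,ds\,da=\mu_{t+1}(s')$, which completes the induction. This is exactly the step-wise version of the trajectory marginalization in \Eqref{eq: app-traj-marg}. Alternatively, one can read the result off \Eqref{eq: app-traj-marg} by marginalizing the full trajectory law over all $z$'s except $z_t$.

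\textbf{Summing over time.} Multiply the time-$t$ identity by $\gamma^t$ and sum. Since $p_z(z)\hat\pi(a\mid s,z)$ does not depend on $t$, it factors out of the series, giving $\hat\rho_{\hat\pi}(s,z,a)=\rho_\pi(s)p_z(z)\hat\pi(a\mid s,z)$. Integrating over $a$ then yields $\hat\rho_{\hat\pi}(\hat s)=\rho_\pi(s)p_z(z)$. The interchange of sum and integral, and the extension to infinite horizon, are justified by Tonelli, since all terms are nonnegative and the series converges geometrically.

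\textbf{Main obstacle.} The only delicate point is to make sure the dependence structure is right: the action $a_t$ depends on $z_t$, but $z_{t+1}$ is independent of $(s_t,z_t,a_t,s_{t+1})$. The argument must use exactly this independence, and must not implicitly treat $a_t$ as independent of $z_t$.
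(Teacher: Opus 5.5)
Your proof is correct and follows essentially the same route as the paper: establish the factorization of the time-$t$ law and then sum with the discount weights. The only difference is how the time-$t$ factorization is obtained. You prove it by explicit induction, which also makes the independence of $z_0$ from $s_0$ an explicit convention. The paper instead reads it off directly from the independence of $z_t$ and $s_t$ together with the trajectory marginalization in \Eqref{eq: app-traj-marg}.
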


\begin{proof}
    Consider the joint probability of the state and latent variable at time step $t$. Since $z_t$ is sampled i.i.d.\ from $p_z(\cdot)$ and is independent of $s_t$, we have $P_{\hat \pi}(z_t=z \mid s_t=s) = p_z(z)$. Furthermore, leveraging the trajectory consistency in \Eqref{eq: app-traj-marg}, the marginal distribution of $s_t$ under $\hat \pi$ is identical to that under $\pi$, i.e., $P_{\hat \pi}(s_t=s) = P_{\pi}(s_t=s)$. 
    
    Combining these observations, the joint probability at time $t$ factorizes as:
    \begin{equation}
        P_{\hat \pi}(s_t=s, z_t=z) \;=\; P_{\pi}(s_t=s)\, p_z(z).
    \end{equation}
    Substituting this equality into the definition of the discounted state marginal distribution yields:
    \begin{equation}
        \begin{aligned}
            \hat\rho_{\hat\pi}(s, z) &= (1-\gamma)\sum_{t=0}^{\infty} \gamma^t P_{\hat \pi}(s_t=s, z_t=z) \\
            &= (1-\gamma)\sum_{t=0}^{\infty} \gamma^t P_{\pi}(s_t=s)\, p_z(z) \\
            &= \left[ (1-\gamma)\sum_{t=0}^{\infty}\gamma^t P_\pi(s_t=s)\right] p_z(z) \\
            &= \rho_\pi(s)\,p_z(z).
        \end{aligned}
    \end{equation}
    Finally, for the state--action marginal distribution, the following equation concludes the proof:
    \begin{equation}
        \hat {\rho}_{\hat\pi}(s,z,a) \;=\; \big[ \rho_\pi(s)\,p_z(z) \big] \hat \pi(a\mid s, z), \qquad \rho_\pi(s, a)=\int p_z(z)\,\hat\pi(a\mid s, z)\,\rho_\pi(s)\,dz.
    \end{equation}
\end{proof}
\subsubsection{Q-function Consistency (Corollary~\ref{cor: q-func-consistency})}
\label{app: Q-function consistency}
Now we compare soft returns. The augmented definition is
\begin{equation}
    Q^{\hat \pi}(\hat s_t,a_t)
    =\mathbb E_{\hat\tau\sim p_{\hat\pi}}\!\left[
    \sum_{k=0}^{\infty}\gamma^k\left(\hat r(\hat s_{t+k},a_{t+k})
    +\alpha\,\mathcal H\left(\hat \pi(\cdot\mid \hat s_{t+k}), \tilde\pi(\cdot\mid s_{t+k})\right)\right)\right].    \notag
\end{equation}
Taking expectation over $z_{t+1:\infty}$ and using \Eqref{eq: app-traj-marg}, the reward terms coincide by $\hat r(\hat s,a)=r(s,a)$, and the cross-entropy terms reduce to
\begin{equation*}
    \mathbb E_{z_{t+k}\sim p_z}\left[H\left(\hat \pi(\cdot\mid \hat s_{t+k}), \tilde\pi(\cdot\mid s_{t+k})\right)\right]
    =\mathcal H\left(\pi(\cdot\mid s_{t+k}), \tilde \pi(a\mid s_{t+k})\right).
\end{equation*}
Hence, marginalizing over $z_{t+1:\infty}$ gives
\begin{equation*}
    Q^\pi(\hat s_t,a_t)
    =\mathbb E_{\tau\sim p_\pi}\!\left[
    \sum_{k=0}^{\infty}\gamma^k\Big(r(s_{t+k},a_{t+k})
    +\alpha\,\mathcal H\big(\pi(\cdot\mid s_{t+k}), \tilde \pi(a\mid s_{t+k})\big)\Big)\right]
    =:Q^\pi(s_t,a_t).    
\end{equation*}

The right–hand side does not depend on the initial $z_t$, therefore $Q^{\hat \pi}(s_t, z_t, a_t)=Q^\pi(s_t, a_t)$ is valid for all $z_t$.

\subsection{EM algorithm Derivations (Section~\ref{subsection: local-policy-update})}
\label{app: EM-derivations}
This section provides the full derivation of FLAG's EM update in Section~\ref{subsection: local-policy-update}.
We proceed in three steps: (i) we cast cross-entropy augmented RL objective as probabilistic inference 
and derive the variational lower bound both in the original MDP and the $z$-MDP; (ii) we 
solve the E-step in closed form, yielding a non-parametric target distribution $q_k$; 
(iii) we project $q_k$ back to the parametric policy class through the M-step. Throughout, 
we follow the MPO~\citep{mpo} framework while adapting it to our cross-entropy augmented 
reward structure, which we discuss in the remark below.

An infinite-horizon discounted reward formulation can be cast as inference problem as follows:
\begin{equation}
    p\left( \mathcal O=1\mid \tau\right)\propto \exp\left( \sum_{t=1}^{\infty}\gamma^t \left(r_t/\lambda\right)\right)
\end{equation}
where $r_t=r(s_t, a_t)$.
Here, the optimality variable $\mathcal O$ reveals \emph{the event of obtaining maximum reward by choosing an action}.
We define the augmented reward in the spirit of \citet{sac}.
\begin{equation}
\label{eq: app-aug-reward}
\begin{aligned}
    r_\pi(s_t, a_t)&\triangleq r(s_t, a_t) + \alpha\mathbb{E}_{s_{t+1}\sim p(\cdot\mid s_t, a_t)}\big[\mathcal H\left(\pi(\cdot\mid s_{t+1}), \tilde \pi(\cdot\mid s_{t+1})\right)\big],  \\
    \hat r_{\hat \pi}(\hat s_t, a_t)&\triangleq r(\hat s_t, a_t) + \alpha\mathbb{E}_{\hat s_{t+1}\sim \hat p(\cdot\mid \hat s_t, a_t)}\big[\mathcal H\left(\hat \pi(\cdot\mid \hat s_{t+1}), \tilde \pi(\cdot\mid s_{t+1})\right)\big].
\end{aligned}
\end{equation}

\begin{remark*}
\textbf{Remark on the reward structure and the E-step.}
{
The augmented reward in \Eqref{eq: app-aug-reward} is defined separately for each policy 
($r_\pi$ for $\pi$, $r_q$ for $q$), since the cross-entropy bonus follows the soft Bellman 
structure of SAC~\citep{sac}. Although this augmentation is policy-dependent, it is 
compatible with the MPO-style E-step derivation. Following MPO~\citep{mpo}, the E-step 
starts from the reference $q = \hat\pi_k$ and performs a one-step lookahead, so the 
Q-function used inside the E-step is $Q^{\hat\pi_k}$, in which the entropy bonus for 
$\hat\pi_k$ is already absorbed. The resulting energy
\begin{equation*}
    f_{\hat s, k}(a) = Q^{\hat \pi_k}(\hat s, a) - \alpha \log\tilde\pi_{\theta_k}(a \mid s)=Q^{\pi_k}(s, a) - \alpha \log \tilde \pi_{\theta_k}(a\mid s)
\end{equation*}
is a function of $(s, a)$ alone, with $\theta_k$ treated as a constant within iteration $k$. 
Hence the MPO closed-form E-step solution and the subsequent M-step derivation apply without 
modification. The effect of the parameter update $\theta_k \to \theta_{k+1}$ between 
iterations (the reward drift) is analyzed separately in the monotonic-improvement proof 
(Appendix~\ref{appendix:monotonic-improvement}).}
\end{remark*}

To derive a tractable objective, we construct a variational lower bound on the log-likelihood of optimality $\log p_\pi(\mathcal O = 1)$, following the RL-as-inference framework~\citep{levine2018reinforcement}. With the policy induced trajectory $\tau\sim p_\pi(\tau)$ in \Eqref{eq: app-traj-marg}, we apply Jensen's inequality with an auxiliary distribution $p_q(\tau)$:
\begin{align*}
\log p_\pi(\mathcal O=1)&=\log \int p_\pi(\tau)p(\mathcal O=1\mid \tau)d\tau\geq \int p_q(\tau)\left[ 
\log p(\mathcal O=1\mid \tau)+\log\frac{p_\pi(\tau)}{p_q(\tau)}
\right]d\tau \\
&=\mathbb{E}_q\left[ \sum_t \gamma^t r_q(s_t, a_t)/\lambda\right]-D_\text{KL}\left( p_q(\tau)\;\|\;p_\pi(\tau)\right),
\end{align*}

Following the standard derivation in~\citep{mpo, levine2018reinforcement}, we absorb the 
entropy component of the trajectory-level KL into the per-step augmented reward
\begin{equation*}
    r_q(s_t, a_t) = r(s_t, a_t) + \alpha\, \mathbb{E}_{s_{t+1}}\!\big[\mathcal H(q(\cdot\mid s_{t+1}), \tilde\pi(\cdot\mid s_{t+1}))\big],
\end{equation*}
and express the remaining per-step KL in discounted form, yielding the variational objective
\begin{equation*}
    J(q, \xi) = \mathbb{E}_{q}\!\left[ \sum_{t=0}^{\infty}\gamma^t\!\left(\frac{r_q(s_t, a_t)}{\lambda}- D_\text{KL}\!\left( q(\cdot\mid s_t)\,\|\,\pi(\cdot\mid s_t; \xi)\right)\right)\right].
\end{equation*}

We now extend the same variational argument to the $z$-MDP $\hat{\mathcal M}$, on which FLAG actually operates. Replacing the trajectory $\tau$ with the augmented trajectory $\hat\tau$ and using the marginal consistency established in Corollary~\ref{cor:marginal-consistency}, we obtain:
\begin{align*}
    \log p_{\hat \pi}(\mathcal O=1)&=\log \int p_{\hat\pi}(\hat\tau)p(\mathcal O=1\mid \hat \tau)d\hat\tau \geq \int p_q(\hat \tau)\left[ 
    \log p(\mathcal O=1\mid \hat \tau)+\log\frac{p_{\hat \pi}(\hat \tau)}{p_q(\hat\tau)}
    \right] d\hat\tau\\
    &=\mathbb{E}_q\left[ \sum_t \hat r_q(\hat s_t, a_t)/\lambda\right]-D_\text{KL}\left( p_q(\hat \tau)\;\|\;p_{\hat \pi}(\hat \tau)\right) \\
    \therefore \; J(q, \xi)&= \mathbb{E}_{q}\left[ \sum_{t=0}^{\infty}\gamma^t\left[\hat r_q(\hat s_t, a_t)-\lambda D_\text{KL}\left( q(\cdot\mid \hat s_t)\;\|\;\hat \pi(\cdot\mid \hat s_t, \xi)\right)\right]\right].
\end{align*}
\paragraph{Constrained E-step}
We follow the MPO \citep{mpo} derivation and adapt it to hard KL constraints and one-step bootstrapping.
At iteration k, the objective function is
\begin{equation}
\label{eq: const-e-step}
    \max_q \mathbb{E}_{\hat s\sim \hat\rho_{\hat\pi}}\big[ \mathbb{E}_{a\sim q(\cdot\mid \hat s)}\big[Q^{\pi}(s, a) - \alpha \log \tilde \pi(a\mid s)\big]\big] \qquad s.t. \; [D_\text{KL}\left( q(\cdot\mid \hat s)\;\|\; \hat \pi(\cdot\mid \hat s;\xi_k)\right) \leq \epsilon.
\end{equation}

This objective has closed-form solution $q_k$, which is called the non-parametric variational distribution:
\begin{equation}
\label{eq: const-e-step-result}
    q_k(a\mid\hat s)=\frac{\hat \pi(a\mid \hat s;\xi_k)\exp\left(f_{\hat s, k}(a)/\lambda^*\right)}{Z(\hat s)}, \qquad  Z(\hat s)=\int \hat \pi(a\mid \hat s;\xi_k)\exp\left(f_{\hat s, k}(a)/\lambda^*\right)\;da.
\end{equation}
\paragraph{M-step}
In the M-step, given the fixed non-parametric target distribution $q_k$, we update the policy parameters $\xi_k$ by projecting $q_k$ back to our parametric rollout policy class:
\begin{equation}
    \xi_{k+1} \in \argmin_{\xi}
    \mathbb{E}_{\hat\rho_{\hat{\pi}_k}}
    \left[
    D_{\mathrm{KL}}\!\left(q_k(\cdot\cmid \hat s)\,\|\,\hat{\pi}(\cdot\cmid \hat s;\xi_k)\right)
    \right].
\end{equation}

%% file: sections/appendix/proofs.tex
\section{Proofs}
\label{app: proofs}
\paragraph{Standing Assumptions.}
We consider an infinite-horizon $\gamma$-discounted MDP $\mathcal M=(\mathcal S,\mathcal A,p,r,\rho_0,\gamma)$.
Fix a reference measure $\mu$ on $\mathcal A$ and assume all policies admit densities with respect to $\mu$.
Throughout this appendix, we write $da$ in place of $d\mu(a)$ for notational simplicity.
Accordingly, $\tilde\pi(\cdot\mid s)$ and $\pi(\cdot\mid s)$ denote $\mu$-densities, and all KL divergences and expectations are taken with respect to these densities. Furthermore, we assume that $\tilde \pi$ and $\pi$ are differentiable in $\mathcal A$.

\begin{assumption}[Bounded Augmented Reward]
\label{assum:bounded_reward}
    The entropy-augmented reward is uniformly bounded. Specifically, there exists a constant $R_{\max}^{\mathrm{aug}} < \infty$ such that
    \begin{align}
        \big|r(s,a) + \alpha\,b(s,a)\big| \le R_{\max}^{\mathrm{aug}}, \qquad \forall (s,a),
    \end{align}
    where $b(s,a)$ denotes the entropy or cross-entropy bonus term (e.g., $-\log \tilde\pi(a\mid s)$).
\end{assumption}

\begin{assumption}[Regularity of Base Density]
\label{assum:policy_regularity}
    For each state $s$, the base density $\tilde{\pi}(\cdot \mid s)$ is three-times continuously differentiable with respect to $a$ and is strictly positive $\mu$-a.e. Moreover, the magnitudes of its third-order partial derivatives are pointwise bounded by a nonnegative function $M_s(a)$ satisfying the integrability condition:
    \begin{align}
        \int_{\mathcal A} M_s(a)\,da < \infty.
    \end{align}
\end{assumption}

\begin{assumption}[Integrable gradient of the base density]
\label{assum:integrable-gradient}
    For each state $s$, the  gradient of the base density $\tilde\pi(\cdot\mid s)$ $w.r.t$ $a$ is integrable:
    \begin{align}
        \|\nabla_a \tilde\pi(\cdot\mid s)\|_{L^1}
        \triangleq \int_{\mathcal A} \|\nabla_a \tilde\pi(a\mid s)\|_2 \, da < \infty.
    \end{align}
    This condition ensures that the density $\tilde\pi$ does not exhibit unbounded variation in the action space.
\end{assumption}

\begin{assumption}[Finite Partition Function]
\label{assum:finite_partition}
    For every state $\hat{s}$ and iteration $k$, the function $Q^k(\hat{s}, \cdot)$ is measurable, and the corresponding Boltzmann distribution is well-defined (i.e., normalizable):
    \begin{align}
        \int_{\mathcal A} \exp\left( \frac{Q^k(\hat{s}, a)}{\alpha} \right) da < \infty.
    \end{align}
\end{assumption}

\subsection{Proof of Proposition \ref{prop:surrogate-validity}}
\label{appendix:TV-KL-bound-proof}
In this sectionassume that the Gaussian smoothing covariance is state-independent and isotropic. 
For notational simplicity, we denote it by $\Sigma$, with $\Sigma=\sigma^2 I$ and denote $\tilde \pi_\theta$ simply as $\tilde \pi$ hereafter. We prove Proposition~\ref{prop:surrogate-validity} in two parts: Lemma~\ref{lem: app-TV-bound} establishes the TV bound, and Lemma~\ref{lem:app-kl-second-order} establishes the second-order KL bound. 

Let $\delta\sim \mathcal N(0, \Sigma)$ be an independent noise at $a=T_\theta(s, z)+\delta$. Mathematically, this takes the form 
\begin{equation}
\label{eq: convolution-form}
    \pi(\cdot\mid s)=\tilde \pi(\cdot\mid s) *\varphi_{\Sigma},
\end{equation}
where $*$ is the convolution sign in the action space. 
\subsubsection{Total variation distance bound}
We assume that the following inequality holds for all $s$:
\begin{equation}
    \|\nabla_a \tilde \pi(\cdot\mid s)\|_{L^1}
    \triangleq \int \|\nabla_a \tilde \pi(a\mid s)\|_2\,da < \infty.
\end{equation}

\begin{lemma}
\label{lem: app-TV-bound}
    Suppose the global policy admits a Gaussian-kernel convolution form
    \begin{equation}
    \label{eq: app-gaussian-conv-kernel-form}
        \pi(a\mid s)=\int \tilde\pi(a-\delta\mid s)\,\varphi_{\Sigma}(\delta)\,d\delta,
    \end{equation}
    where $\varphi_{\Sigma}$ is the density of $\mathcal N(0,\Sigma)$.
    Then the following inequality holds for all $s\in\mathcal S$
    \begin{equation}
        D_{\mathrm{TV}}\left(\pi(\cdot\mid s),\tilde\pi(\cdot\mid s)\right)
        \leq \frac12 \|\nabla_a \tilde\pi(\cdot\mid s)\|_{L^1}\,
        \mathbb E_{\delta\sim \mathcal N(0,\Sigma)}\left[\|\delta\|_2\right].
    \end{equation}
\end{lemma}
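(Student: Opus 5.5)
We will prove the bound in Lemma~\ref{lem: app-TV-bound} with the standard translation-continuity estimate in $L^1$, followed by averaging over the Gaussian shift. Write $D_{\mathrm{TV}}(\pi,\tilde\pi)=\tfrac12\int|\pi(a\mid s)-\tilde\pi(a\mid s)|\,da$. Since $\varphi_\Sigma$ integrates to one, the convolution form \Eqref{eq: app-gaussian-conv-kernel-form} gives
\begin{equation*}
    \pi(a\mid s)-\tilde\pi(a\mid s)=\int \big(\tilde\pi(a-\delta\mid s)-\tilde\pi(a\mid s)\big)\,\varphi_\Sigma(\delta)\,d\delta .
\end{equation*}
Taking absolute values, moving them inside the $\delta$-integral, and applying Tonelli's theorem (the integrand is nonnegative) yields
\begin{equation*}
    2D_{\mathrm{TV}}\le \mathbb E_{\delta\sim\mathcal N(0,\Sigma)}\big[\|\tau_\delta\tilde\pi-\tilde\pi\|_{L^1}\big],
\end{equation*}
where $\tau_\delta\tilde\pi(a)=\tilde\pi(a-\delta\mid s)$ denotes translation by $\delta$.

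The core step is the shift estimate $\|\tau_\delta\tilde\pi-\tilde\pi\|_{L^1}\le\|\delta\|_2\,\|\nabla_a\tilde\pi\|_{L^1}$. We get it from the fundamental theorem of calculus along the segment between $a$ and $a-\delta$:
\begin{equation*}
    \tilde\pi(a-\delta)-\tilde\pi(a)=-\int_0^1\nabla_a\tilde\pi(a-t\delta)^\top\delta\,dt .
\end{equation*}
This is valid because $\tilde\pi$ is $C^1$, indeed $C^3$, by Assumption~\ref{assum:policy_regularity}. Cauchy--Schwarz bounds the integrand by $\|\nabla_a\tilde\pi(a-t\delta)\|_2\|\delta\|_2$. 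We then integrate over $a$, swap the $a$- and $t$-integrals by Tonelli, and use translation invariance of Lebesgue measure. Each $t$-slice then contributes exactly $\|\nabla_a\tilde\pi\|_{L^1}$, which is finite by Assumption~\ref{assum:integrable-gradient}.

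Substituting this estimate into the previous display gives $2D_{\mathrm{TV}}\le\|\nabla_a\tilde\pi\|_{L^1}\,\mathbb E\|\delta\|_2$, which is the claim. As a remark feeding into Proposition~\ref{prop:surrogate-validity}, Jensen's inequality gives $\mathbb E\|\delta\|_2\le\sqrt{\mathrm{tr}(\Sigma)}$, so $D_{\mathrm{TV}}=\mathcal O(\sqrt{\mathrm{tr}(\Sigma)})$.

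The main obstacle is measure-theoretic: the translation-invariance step requires the reference measure $\mu$ to be Lebesgue measure on $\mathcal A=\mathbb R^d$. If $\mathcal A$ is bounded, or the density has boundary effects, the shifted segment may leave the domain. We handle this by extending $\tilde\pi$ by zero and interpreting $\nabla_a\tilde\pi$ in the distributional sense, or simply by assuming $\mathcal A=\mathbb R^d$, consistent with the unbounded Gaussian convolution. A fallback is to prove the estimate first for $C_c^1$ approximations of $\tilde\pi$ and pass to the limit by density in $W^{1,1}$.
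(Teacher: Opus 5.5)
Your proposal is correct and follows essentially the same route as the paper: the convolution difference, the triangle inequality with Tonelli, the integral mean-value form with Cauchy--Schwarz, and the change of variables $u=a-t\delta$ giving $\|\delta\|_2\,\|\nabla_a\tilde\pi(\cdot\mid s)\|_{L^1}$, followed by $\mathbb E\|\delta\|_2\le\sqrt{\mathrm{tr}(\Sigma)}$. Your observation that the translation step needs Lebesgue measure on $\mathcal A=\mathbb R^d$ is a point the paper leaves implicit; the convolution form already presupposes that setting.
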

\begin{proof}
     By definition of the convolution in \Eqref{eq: app-gaussian-conv-kernel-form},
    \begin{equation}
        \pi(a\mid s)-\tilde\pi(a\mid s)
        =
        \int\big(\tilde\pi(a-\delta\mid s)-\tilde\pi(a\mid s)\big)\,\varphi_{\Sigma}(\delta)\,d\delta.
    \end{equation}
    Using the triangle inequality, 
    \begin{align}
        \lVert \pi(\cdot\mid s)-\tilde \pi(\cdot\mid s)\rVert_{L^1}
        &=\int\left\lvert \int\left( \tilde\pi(a-\delta\mid s)-\tilde \pi(a\mid s)\right)\,\varphi_{\Sigma}(\delta) \,d\delta \right\rvert\, da \\
        &\leq\int\int\left\lvert 
        \tilde  \pi(a-\delta\mid s)-\tilde \pi(a\mid s)
        \right\rvert \varphi_{\Sigma}(\delta)\, d\delta\;da
    \end{align}
    Since the integrand is nonnegative, Tonelli's theorem yields
    \begin{equation}
    \label{eq: app-measure-diff-upper-bound}
        \lVert \pi(\cdot\mid s)-\tilde \pi(\cdot\mid s)\rVert_{L^1}
        \leq
        \int  \varphi_{\Sigma}(\delta) \left( 
        \int\left\lvert 
        \tilde \pi(a-\delta\mid s)-\tilde \pi(a\mid s)
        \right\rvert da
        \right) d\delta.
    \end{equation}
    For each fixed $(a,\delta)$, by the integral form of the mean value theorem,
    \begin{equation}
    \label{eq: app-mvt}
        \tilde\pi(a-\delta\mid s)-\tilde\pi(a\mid s)
        = -\int_0^1 \delta^\top \nabla_a \tilde\pi(a-t\delta\mid s)\,dt.
    \end{equation}
    Applying the Cauchy--Schwarz inequality yields
    \begin{equation}
    \label{eq: app-cs}
        \left|\tilde\pi(a-\delta\mid s)-\tilde\pi(a\mid s)\right|
        \le \|\delta\|_2\int_0^1 \|\nabla_a \tilde\pi(a-t\delta\mid s)\|_2\,dt.
    \end{equation}
    Integrating over $a$ and using the change of variables $u=a-t\delta$,
    \begin{align}
        \int \left|\tilde\pi(a-\delta\mid s)-\tilde\pi(a\mid s)\right|da
        &\le \|\delta\|_2 \int_0^1 \int \|\nabla_a \tilde\pi(a-t\delta\mid s)\|_2\,da\,dt  \notag \\
        &= \|\delta\|_2 \int_0^1 \int \|\nabla_a \tilde\pi(u\mid s)\|_2\,du\,dt \notag \\
        &= \|\delta\|_2 \|\nabla_a \tilde\pi(\cdot\mid s)\|_{L^1}.
    \end{align}
    
    Plugging this into \Eqref{eq: app-measure-diff-upper-bound} bound gives
    \begin{equation*}
        \boxed{
        \left\|\pi(\cdot\mid s)-\tilde\pi(\cdot\mid s)\right\|_{L^1}
        \le \left\|\nabla_a \tilde\pi(\cdot\mid s)\right\|_{L^1}\,
        \mathbb E_{\delta\sim \mathcal N(0,\Sigma)}[\|\delta\|_2].
        }
    \end{equation*}
    Finally, using $D_{\mathrm{TV}}(p,q)=\frac12\|p-q\|_{L^1}$ concludes the proof.
    Moreover, by Cauchy--Schwarz,
    \begin{equation*}
        \label{eq: app-edelta-bound}
        \mathbb E_{\delta\sim\mathcal N(0,\Sigma)}[\|\delta\|_2]
        \le \sqrt{\mathbb E[\|\delta\|_2^2]}
        = \sqrt{\mathbb E[\delta^\top\delta]}
        = \sqrt{\mathrm{tr}(\Sigma)}.
    \end{equation*}
    Combining the above bounds implies
    $D_{\mathrm{TV}}(\pi,\tilde\pi) = \mathcal O\big(\sqrt{\mathrm{tr}(\Sigma)}\big) \quad \text{as } \mathrm{tr}(\Sigma)\to 0.$.
\end{proof}
\subsubsection{Kullback-Leibler Divergence Bound}
In this section, we show that when $\pi(\cdot\mid s)$ is obtained by smoothing $\tilde \pi(\cdot\mid s)$ with a \emph{small-variance} Gaussian kernel in \Eqref{eq: convolution-form}, the discrepancy term $D_{\mathrm{KL}}(\pi\;\|\;\tilde\pi)$ vanishes at second order in $\mathrm{tr}(\Sigma)$.

Recall the identity
\begin{equation*}
\label{eq:app-crossent-entropy-kl}
\mathcal H(\pi,\tilde\pi)=\mathcal H(\pi)+D_{\mathrm{KL}}(\pi\;\|\;\tilde\pi),
\end{equation*}
where $\mathcal H(\pi,\tilde\pi)\triangleq \mathbb E_{\pi}[-\log \tilde\pi]$ is the cross-entropy.

\begin{lemma}[Second-order KL bound under small Gaussian smoothing]
\label{lem:app-kl-second-order}
    For each state $s$, let $\tilde\pi(\cdot\mid s)$ satisfy Assumption~\ref{assum:policy_regularity} and strictly positive in $a$.
    Assume that the global policy admits the Gaussian smoothing form in \Eqref{eq: app-gaussian-conv-kernel-form}.
    In addition, we assume the integrability condition
    \begin{equation}
        \label{eq:app-kl-integrability}
        \int \frac{\big(\mathrm{tr}(\Sigma\nabla_a^2 \tilde\pi(a\mid s))\big)^2}{\tilde\pi(a\mid s)}\,da < \infty.
    \end{equation}
    Then, as $\mathrm{tr}(\Sigma)\to 0$,
    \begin{equation}
    \label{eq:app-kl-order}
        D_{\mathrm{KL}}\!\left(\pi(\cdot\mid s)\,\|\,\tilde\pi(\cdot\mid s)\right)
        =
        \mathcal O\!\left(\mathrm{tr}(\Sigma)^2\right).
    \end{equation}
    Consequently,
    \begin{equation*}
    \label{eq:app-crossent-approx}
    \mathcal H(\pi,\tilde\pi)=\mathcal H(\pi)+\mathcal O\big(\mathrm{tr}\left(\Sigma\right)^2\big),
    \end{equation*}
    i.e., the cross-entropy approximates the true entropy up to a second-order error in $\mathrm{tr}(\Sigma)$.
\end{lemma}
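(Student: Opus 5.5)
The plan is to bound the KL divergence by the $\chi^2$-divergence and then Taylor-expand the Gaussian convolution in \Eqref{eq: app-gaussian-conv-kernel-form} to second order. Concretely, I would use the elementary inequality $D_{\mathrm{KL}}(\pi\,\|\,\tilde\pi)\le \log\!\big(1+\chi^2(\pi\,\|\,\tilde\pi)\big)\le \chi^2(\pi\,\|\,\tilde\pi)=\int (\pi-\tilde\pi)^2/\tilde\pi\,da$. This reduces the claim to showing that the pointwise difference $\pi(a\mid s)-\tilde\pi(a\mid s)$ is of order $\mathrm{tr}(\Sigma)$, measured in the weighted $L^2(1/\tilde\pi)$ norm. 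Strict positivity of $\tilde\pi$ (Assumption~\ref{assum:policy_regularity}) makes this weight well defined.

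For the expansion, I would fix $a$ and Taylor-expand $\tilde\pi(a-\delta\mid s)$ around $a$ to second order with an integral-form third-order remainder:
\begin{equation*}
\tilde\pi(a-\delta\mid s)=\tilde\pi(a\mid s)-\delta^\top\nabla_a\tilde\pi(a\mid s)+\tfrac12\delta^\top\nabla_a^2\tilde\pi(a\mid s)\,\delta+R_3(a,\delta).
\end{equation*}
The remainder satisfies $|R_3(a,\delta)|\le C_d\|\delta\|_2^3\int_0^1 M_s(a-t\delta)\,dt$ by the third-derivative bound in Assumption~\ref{assum:policy_regularity}. Integrating against $\varphi_\Sigma$ removes the linear term because $\mathbb E[\delta]=0$. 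Since $\mathbb E[\delta\delta^\top]=\Sigma$, the quadratic term becomes $\tfrac12\mathrm{tr}(\Sigma\nabla_a^2\tilde\pi(a\mid s))$. This gives
\begin{equation*}
\pi(a\mid s)-\tilde\pi(a\mid s)=\tfrac12\mathrm{tr}\big(\Sigma\nabla_a^2\tilde\pi(a\mid s)\big)+\bar R(a),\qquad \bar R(a)=\mathbb E_{\delta}[R_3(a,\delta)].
\end{equation*}

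Substituting into the $\chi^2$ bound and using $(x+y)^2\le 2x^2+2y^2$ splits it into two pieces. The main piece is $\tfrac12\int \mathrm{tr}(\Sigma\nabla_a^2\tilde\pi)^2/\tilde\pi\,da$. With $\Sigma=\sigma^2 I$ this equals $\tfrac12\sigma^4\int(\Delta_a\tilde\pi)^2/\tilde\pi\,da$. This is finite by the integrability condition \eqref{eq:app-kl-integrability}, and since $\sigma^4=\mathrm{tr}(\Sigma)^2/d^2$ it is exactly $\mathcal O(\mathrm{tr}(\Sigma)^2)$.

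The remainder piece is $2\int \bar R^2/\tilde\pi\,da$. Since $\mathbb E\|\delta\|^3=\mathcal O(\sigma^3)$, we have $\bar R=\mathcal O(\sigma^3)$ pointwise and hence $\bar R^2=\mathcal O(\sigma^6)=o(\mathrm{tr}(\Sigma)^2)$, provided the weighted integral is controlled. The consequence $\mathcal H(\pi,\tilde\pi)=\mathcal H(\pi)+\mathcal O(\mathrm{tr}(\Sigma)^2)$ then follows immediately from the identity $\mathcal H(\pi,\tilde\pi)=\mathcal H(\pi)+D_{\mathrm{KL}}(\pi\,\|\,\tilde\pi)$.

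The main obstacle is this remainder integral. The $L^1$ bound on $M_s$ controls $\int|\bar R|\,da=\mathcal O(\sigma^3)$, but not $\int \bar R^2/\tilde\pi\,da$: in the tails $\tilde\pi$ can be tiny while $M_s$ is not correspondingly small. I would try first to split the action space into a region where $\tilde\pi\ge\eta$ and its complement.
\begin{itemize}[leftmargin=1.2em, itemsep=1pt, topsep=1pt]
\item On $\{\tilde\pi\ge\eta\}$, I would bound $\int\bar R^2/\tilde\pi\,da\le \eta^{-1}\sup|\bar R|\int|\bar R|\,da=\mathcal O(\eta^{-1}\sigma^6)$.
\item On the tail $\{\tilde\pi<\eta\}$, I would control the contribution using the exact convolution form, via joint convexity of $(x,y)\mapsto x^2/y$, rather than the Taylor expansion.
\end{itemize}
Choosing $\eta=\eta(\sigma)$ to balance the two contributions should give $o(\sigma^4)$. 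If this balancing fails, I would instead strengthen the hypotheses with a local ratio condition of the form $M_s(a-t\delta)\lesssim \tilde\pi(a)\,e^{c\|\delta\|}$, in the spirit of the integrability condition already imposed for the Hessian term. This would make the remainder bound mirror the main-term bound directly.
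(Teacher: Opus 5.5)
\paragraph{There is a genuine gap: your reduction to $\chi^2$ can fail under the lemma's hypotheses.}
Your inequality $D_{\mathrm{KL}}(\pi\|\tilde\pi)\le\chi^2(\pi\|\tilde\pi)$ is correct. The problem is the step it leaves you with: showing $\int\bar R^2/\tilde\pi\,da=o(\mathrm{tr}(\Sigma)^2)$ by balancing a bulk region $\{\tilde\pi\ge\eta\}$ against a tail. This cannot be done under the lemma's hypotheses, because $\chi^2(\pi\|\tilde\pi)$ can be infinite there.

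Take $\tilde\pi(a)\propto e^{-\|a\|_2^4}$. It is smooth and strictly positive. Its third derivatives and $\Delta_a\tilde\pi$ are polynomials times $e^{-\|a\|_2^4}$, so $M_s\in L^1$ and $\int(\Delta_a\tilde\pi)^2/\tilde\pi\,da<\infty$. Yet with $Y\sim\tilde\pi$ you have $\pi(a)\ge\Pr(\|Y\|_2\le 1)\min_{\|y\|_2\le 1}\varphi_\Sigma(a-y)\gtrsim e^{-(\|a\|_2+1)^2/(2\sigma^2)}$. Hence $\pi^2/\tilde\pi\gtrsim e^{\|a\|_2^4-(\|a\|_2+1)^2/\sigma^2}\to\infty$, and $\chi^2(\pi\|\tilde\pi)=\infty$ for every $\sigma>0$.

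Your main term is finite, so $\int\bar R^2/\tilde\pi\,da$ must itself be infinite. No choice of $\eta(\sigma)$ and no convexity estimate on the tail can rescue it. Meanwhile the KL is finite and still of order $\sigma^4$ in this example. Gaussian smoothing gives $\pi$ Gaussian tails, which dominate any lighter-tailed $\tilde\pi$. $\chi^2$ charges $\pi^2/\tilde\pi$ there, while KL charges only $\pi\log(\pi/\tilde\pi)$, which is negligible because $\pi$ has negligible mass there. Bounding KL by $\chi^2$ globally throws away exactly that logarithm.

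\paragraph{How the paper handles this, and what your route needs.}
An extra hypothesis is unavoidable on your route, and the paper's proof also uses one. The paper expands $(1+x)\log(1+x)$ in the relative perturbation $x=(\pi-\tilde\pi)/\tilde\pi$. Inside the proof, not in the lemma statement, it imposes the uniform relative-closeness condition $\|x\|_\infty\le c\,\mathrm{tr}(\Sigma)$. This gives $D_{\mathrm{KL}}=\tfrac12\int\tilde\pi x^2\,da+\mathcal O\big(\|x\|_\infty\int\tilde\pi x^2\,da\big)$.

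Note that $\int\tilde\pi x^2\,da$ is precisely your $\chi^2$, and under that condition it is at most $c^2\,\mathrm{tr}(\Sigma)^2$ outright. So with the paper's assumption your inequality finishes the proof in one line, and you never need $|x|\le\tfrac12$.

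Your fallback ratio condition also closes the argument: it yields $|\bar R(a)|\le\mathcal O(\sigma^3)\,\tilde\pi(a)$, hence $\int\bar R^2/\tilde\pi\,da=\mathcal O(\sigma^6)$. State it as a hypothesis of the lemma rather than as a contingency. Be aware, though, that both your ratio condition and the paper's condition already fail for Gaussian $\tilde\pi$, where $\tilde\pi(a-u)/\tilde\pi(a)=e^{a^\top u-\|u\|_2^2/2}$ is unbounded in $a$.

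What your argument actually uses is the weighted bound
$\int\tilde\pi(a)^{-1}\big(\mathbb E_\delta\big[\|\delta\|_2^3\int_0^1 M_s(a-t\delta)\,dt\big]\big)^2\,da=\mathcal O(\sigma^6)$.
This does hold for Gaussian $\tilde\pi$. It also replaces the unstated assumption $\sup_a M_s(a)<\infty$ behind your pointwise claim $\bar R=\mathcal O(\sigma^3)$ and your bulk estimate.
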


\begin{proof}
The smoothing form \Eqref{eq: app-gaussian-conv-kernel-form} can be written as an expectation over the Gaussian perturbation:
\begin{equation}
\label{eq:app-smoothing-expectation}
\pi(a\mid s)=\mathbb E_{\delta\sim\mathcal N(0,\Sigma)}\left[\tilde\pi(a-\delta\mid s)\right].
\end{equation}
When $\Sigma$ is small, the random shift $\delta$ concentrates near $0$. We therefore Taylor-expand $\tilde\pi(\cdot \mid s)$ around $a$ and evaluate at $a - \delta$:
\begin{equation}
\label{eq:app-taylor}
\tilde\pi(a-\delta\mid s)
=
\tilde\pi(a\mid s)
-\nabla_a \tilde\pi(a\mid s)^\top \delta
+\frac{1}{2}\delta^\top \nabla_a^2 \tilde\pi(a\mid s)\,\delta
+\mathcal O(\|\delta\|_2^3).
\end{equation}
Taking expectation of \Eqref{eq:app-taylor} w.r.t.\ $\delta\sim\mathcal N(0,\Sigma)$ yields:
\begin{equation*}
    \pi(a\mid s) = \tilde \pi(a\mid s) - \mathbb{E}_{\delta \sim \mathcal N(0, \Sigma)} \left[\nabla_a \tilde\pi(a\mid s)^\top \delta \right] + \mathbb{E}_{\delta \sim \mathcal N(0, \Sigma)} \left[ \frac{1}{2}\delta^\top \nabla_a^2 \tilde\pi(a\mid s)\,\delta\right] + \mathcal O(\|\delta\|_2^3).
\end{equation*}
The first-order term vanishes because $\mathbb{E}[\delta]=0$, and the second-order term becomes a trace term because $\mathbb{E}[\delta\delta^\top]=\Sigma$:
\begin{equation*}
\mathbb{E}_{\delta \sim \mathcal N(0, \Sigma)}\!\left[ \delta^\top \nabla_a^2 \tilde \pi(a\mid s)\,\delta \right]
=
\mathrm{tr}\!\left( \Sigma\nabla_a^2\tilde \pi(a\mid s)\right).
\end{equation*}
Therefore, a third-order Taylor expansion yields
\begin{equation}
\label{eq:app-pi-expansion}
\pi(a\mid s)
=
\tilde \pi(a\mid s)
+\frac12\,\mathrm{tr}\Big(\Sigma \nabla_a^2 \tilde\pi(a\mid s)\Big)
+R_3(a,s),
\end{equation}
where the remainder satisfies $|R_3(a,s)|\le C(s)\,\mathbb E[\|\delta\|_2^3]$. Applying the Cauchy–Schwarz inequality and Isserlis' theorem for Gaussian moments, we obtain
\begin{equation*}
    \mathbb E[\|\delta\|_2^3] 
    \le \big(\mathbb E[\|\delta\|_2^2]\big)^{1/2} \big(\mathbb E[\|\delta\|_2^4]\big)^{1/2}
    = \mathcal O\!\left(\mathrm{tr}(\Sigma)^{1/2} \cdot \mathrm{tr}(\Sigma)\right)
    = \mathcal O\!\left(\mathrm{tr}(\Sigma)^{3/2}\right),
\end{equation*}
implying that the remainder term is of order $\mathcal O(\mathrm{tr}(\Sigma)^{3/2})$.
Note that the remainder's contribution to $(\pi - \tilde\pi)$ at order 
$\mathcal{O}(\mathrm{tr}(\Sigma)^{3/2})$ becomes $\mathcal{O}(\mathrm{tr}(\Sigma)^3)$ after squaring.

To establish the order of the KL divergence, we analyze the relative perturbation of the rollout policy $\pi$
with respect to the base density $\tilde{\pi}$:
\begin{equation}
\label{eq:app-x-def-refined}
x(a) \triangleq \frac{\pi(a \mid s) - \tilde{\pi}(a \mid s)}{\tilde{\pi}(a \mid s)} .
\end{equation}
By construction, $\pi(\cdot\mid s)$ is obtained by convolving $\tilde{\pi}(\cdot\mid s)$ with a small-variance Gaussian kernel.
In addition, we assume a uniform relative closeness condition: for sufficiently small $\Sigma$,
\begin{equation}
\label{eq:app-x-infty}
\|x\|_\infty \le c\,\operatorname{tr}(\Sigma)
\end{equation}
for some constant $c>0$, which is chosen arbitrarily. Hence, $ |x(a)| \le 1/2 $ for all $a \in \mathcal{A}$ whenever $ \operatorname{tr}(\Sigma) \le (2c)^{-1} $, justifying the uniform Taylor expansion
\begin{equation}
    \label{eq:app-log-uniform}
    \log(1+x)=x-\frac{x^2}{2}+R_3(x),\qquad |R_3(x)|\le C|x|^3\;\; \text{for }\;|x|\le \tfrac12 .
\end{equation}

Using $\pi=\tilde{\pi}(1+x)$, we expand
\begin{align}
    D_{\mathrm{KL}}(\pi\;\|\;\tilde{\pi})
    &=\int \pi(a\mid s)\log\frac{\pi(a\mid s)}{\tilde{\pi}(a\mid s)}\,da \notag\\
    &=\int \tilde{\pi}(a\mid s)(1+x(a))\log(1+x(a))\,da \notag\\
    &=\int \tilde{\pi}(a\mid s)\left(x(a)+\frac{x(a)^2}{2}\right)\,da
    +\int \tilde{\pi}(a\mid s)(1+x(a))R_3(x(a))\,da .
\label{eq:app-kl-expansion-final}
\end{align}
The linear term vanishes since $\int \tilde{\pi}x\,da=\int(\pi-\tilde{\pi})\,da=0$.
For the remainder term, $|x|\le 1/2$ implies $1+x\le 3/2$ and therefore
\begin{equation*}
\left|\int \tilde{\pi}(1+x)R_3(x)\,da\right|
\le \frac{3C}{2}\int \tilde{\pi}|x|^3\,da
\le \frac{3C}{2}\|x\|_\infty \int \tilde{\pi}x^2\,da .
\end{equation*}
Consequently,
\begin{equation}
\label{eq:app-kl-bound-x2}
D_{\mathrm{KL}}(\pi\;\|\;\tilde{\pi})
= \frac12 \int \tilde{\pi}(a\mid s)x(a)^2\,da
+\mathcal O\!\left(\|x\|_\infty \int \tilde{\pi}x^2\,da\right).
\end{equation}

Consequently, by \Eqref{eq:app-pi-expansion} and \Eqref{eq:app-kl-bound-x2}, the leading term of $x(a)^2$ satisfies
\begin{equation*}
x(a)^2
=
\frac14\left(\frac{\mathrm{tr}(\Sigma\nabla_a^2 \tilde{\pi}(a \mid s))}{\tilde{\pi}(a \mid s)}\right)^2
+\text{higher-order terms.}
\end{equation*}

Under the integrability condition in \Eqref{eq:app-kl-integrability}, this yields
\begin{equation*}
\int \tilde{\pi}x^2\,da
= \mathcal O\!\left(\operatorname{tr}(\Sigma)^2\right).
\end{equation*}
Combining this with \Eqref{eq:app-x-infty} and \Eqref{eq:app-kl-bound-x2} proves
$D_{\mathrm{KL}}(\pi\|\tilde{\pi})=\mathcal O\left(\operatorname{tr}(\Sigma)^2\right)$.
\end{proof}

\subsection{Monotonic Improvement by Soft Actor Critic}
\label{app: cov-schedule-justification}
In this section, we establish the SAC perspective of Theorem~\ref{thm:flag-monotonic-improvement}, which connects FLAG's moment-matching update to SAC-style soft policy improvement. The argument proceeds in two steps. First, we show that under a sufficiently small local covariance $\Sigma$, the composite policy $\pi$ remains close to the base flow policy $\tilde\pi$ in the sense of soft Q-function (Appendix~\ref{app:Q-ftn-Sigma-bound}). This allows us to treat the SAC soft Bellman backup on $\tilde\pi$ as the reference dynamics for policy improvement.
Second, we show that the moment-matching target $\mu_k^*(\hat s)$ from \Eqref{eq:SNIS-moment-matching} approximates the action gradient of the SAC objective $f_{\hat s, k}$ via $\operatorname{log-sum-exp}$ trick, producing a zeroth-order BPTT-free estimate of the update direction that SAC realizes through the reparameterization-based  action gradient
(Appendix~\ref{app:SNIS-and-zero-order}).
\subsubsection{Q-function Closeness between Composite Policy and Flow Policy}
\label{app:Q-ftn-Sigma-bound}
To show that the difference between the expected sum of returns with the policy $\pi$ and those for the base flow policy $\tilde \pi$, first we need to look at how difference the discounted state distributions $\rho_\pi(s), \rho_{\tilde\pi}(s)$ are. This section refers to the proof of TRPO~\citep{TRPO}.
\begin{lemma}
\label{lem: app-state-dist-diff-bound}
    The difference between the state distributions induced by $\pi$ and $\tilde \pi$ has the following bound:
    \begin{equation}
        \| \rho_\pi - \rho_{\tilde \pi}\|_1 \leq \frac{2\gamma\delta_\mathrm{TV}}{(1-\gamma)^2},
    \end{equation}
    where $\delta_\mathrm{TV}$ is defined as
    \begin{equation*}
        \delta_\mathrm{TV}
        \triangleq
        D_\mathrm{TV}^{\mathrm{max}}(\pi, \tilde \pi)=\sup_s D_\mathrm{TV}\left( \pi(\cdot\mid s), \tilde \pi(\cdot\mid s)\right)
        \leq
        \frac{1}{2}\left( \sup_s \| \nabla_a \tilde \pi(\cdot\mid s)\|_{L^1}\right)\cdot \sqrt{\mathrm{tr}\left(\Sigma\right)},
    \end{equation*}
    by leveraging Lemma~\ref{lem: app-TV-bound}
\end{lemma}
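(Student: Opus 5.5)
The plan follows the standard TRPO-style coupling argument for discounted state distributions. I write both as $\rho_\pi=(1-\gamma)\sum_{t\ge0}\gamma^t P_\pi(s_t=\cdot)$, matching the normalization in Lemma~\ref{lem:marginal-consistency}. I also write the one-step state transition kernels as $P_\pi(s'\mid s)=\int \pi(a\mid s)p(s'\mid s,a)\,da$, and similarly $P_{\tilde\pi}$. Both chains share the initial distribution $\rho_0$ and the dynamics $p$, so the whole difference comes from the action kernels.

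\textbf{Step 1 (one-step kernel bound).} For any state $s$,
\[
\|P_\pi(\cdot\mid s)-P_{\tilde\pi}(\cdot\mid s)\|_1\le \int |\pi(a\mid s)-\tilde\pi(a\mid s)|\int p(s'\mid s,a)\,ds'\,da = 2D_{\mathrm{TV}}(\pi(\cdot\mid s),\tilde\pi(\cdot\mid s))\le 2\delta_{\mathrm{TV}}.
\]
This holds uniformly in $s$, so for any probability measure $\nu$ we get $\|\nu P_\pi-\nu P_{\tilde\pi}\|_1\le 2\delta_{\mathrm{TV}}$.

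\textbf{Step 2 (telescoping over time).} Let $d_t=P_\pi(s_t=\cdot)$ and $\tilde d_t=P_{\tilde\pi}(s_t=\cdot)$. Then
\[
d_{t}-\tilde d_{t}=(d_{t-1}-\tilde d_{t-1})P_\pi+\tilde d_{t-1}(P_\pi-P_{\tilde\pi}).
\]
Markov kernels are $L^1$ contractions, so $\|d_t-\tilde d_t\|_1\le \|d_{t-1}-\tilde d_{t-1}\|_1+2\delta_{\mathrm{TV}}$. Since $d_0=\tilde d_0=\rho_0$, this gives $\|d_t-\tilde d_t\|_1\le 2t\,\delta_{\mathrm{TV}}$.

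An alternative route conditions on the event that the two coupled trajectories have agreed up to time $t$. Maximal coupling at each step gives agreement with probability at least $(1-\delta_{\mathrm{TV}})^t$, and the bound $1-(1-\delta)^t\le t\delta$ yields the same linear-in-$t$ estimate.

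\textbf{Step 3 (discounted sum).} Summing the per-step bounds,
\[
\|\rho_\pi-\rho_{\tilde\pi}\|_1\le(1-\gamma)\sum_{t\ge0}\gamma^t\,2t\,\delta_{\mathrm{TV}}=\frac{2\gamma\,\delta_{\mathrm{TV}}}{1-\gamma}.
\]
This is at least as strong as the claimed $2\gamma\delta_{\mathrm{TV}}/(1-\gamma)^2$, because $1-\gamma\le1$. If one uses the unnormalized $\sum_t\gamma^t d_t$ instead, the same computation gives exactly $2\gamma\delta_{\mathrm{TV}}/(1-\gamma)^2$. Either way, the stated inequality holds.

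\textbf{Step 4 (covariance dependence).} Taking the supremum over $s$ in Lemma~\ref{lem: app-TV-bound}, together with $\mathbb E\|\delta\|_2\le\sqrt{\mathrm{tr}(\Sigma)}$, gives the stated bound $\delta_{\mathrm{TV}}\le\tfrac12\sup_s\|\nabla_a\tilde\pi(\cdot\mid s)\|_{L^1}\sqrt{\mathrm{tr}(\Sigma)}$. This requires that the supremum be finite, which I will state as a uniform strengthening of Assumption~\ref{assum:integrable-gradient}.

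\textbf{Main obstacle.} The bookkeeping is routine. The step that needs care is Step 2 in continuous state spaces: the contraction $\|\mu P\|_1\le\|\mu\|_1$ must hold for signed measures, and the measurability and Fubini interchange in Step 1 must be justified. I will handle both with Tonelli's theorem, applied to nonnegative integrands after splitting the signed measures into positive and negative parts.
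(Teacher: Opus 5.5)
Your proposal is correct and follows essentially the paper's argument. The core is a per-timestep bound $\|d_t^{\pi}-d_t^{\tilde\pi}\|_1\le 2t\,\delta_{\mathrm{TV}}$ summed against $\gamma^t$, with $\delta_{\mathrm{TV}}$ controlled via Lemma~\ref{lem: app-TV-bound}. The paper gets the per-step bound from TRPO's coupling argument, which is your alternative route, rather than from your telescoping kernel decomposition.

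Your normalization remark is also accurate. The paper's proof uses the unnormalized $\sum_t\gamma^t d_t$, unlike the $(1-\gamma)$-normalized definition in Lemma~\ref{lem:marginal-consistency}. That is why your normalized computation gives the sharper bound $2\gamma\delta_{\mathrm{TV}}/(1-\gamma)$.
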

\begin{proof}

Let $d_t^\pi$ denote the state distribution in timestep $t$, then we can bound the total variation distance between $d_t^\pi$ and $d_t^{\tilde{\pi}}$ as follows
\begin{equation}
    D_\text{TV}(d_t^{\pi}, d_t^{\tilde \pi})\leq P(n_t>0)\leq  1-(1-\delta_\text{TV})^t\leq t\delta_\text{TV},
\end{equation}
where $n_t$ denote the number of times that $a_i\neq \tilde a_i$ for , i.e. the number of times that $\pi$ and $\tilde \pi$ disagree before timestep $t$.
See \citep{TRPO}, Lemma 3 for more details.

The total variation distance between the discounted visitation measure between $\pi$ and $\tilde \pi$ is represented with the state distribution $d_t$:
\begin{equation}
    \lVert \rho_\pi-\rho_{\tilde \pi}\rVert_1=\left\lVert
    \sum_{t=0}^{\infty}\gamma^t\left( 
    d_t^{\pi} - d_t^{\tilde \pi}
    \right)
    \right\rVert_1.
\end{equation}
By triangle inequality, 
\begin{equation}
    \lVert \rho_\pi-\rho_{\tilde \pi}\rVert_1\leq \sum_{t=0}^{\infty} \gamma^t \left\lVert d_t^\pi - d_t^{\tilde \pi}\right\rVert_1.
\end{equation}
Since $\lVert p-q\rVert_1=2D_\text{TV}(p, q)$, 
\begin{align}
    \lVert \rho_\pi-\rho_{\tilde \pi}\rVert_1&\leq \sum_{t=0}^{\infty} \gamma^t \left\lVert d_t^\pi - d_t^{\tilde \pi}\right\rVert_1 
    = 2\sum_{t=0}^{\infty}\gamma^t D_\text{TV}\left( 
    d_t^{\pi} , d_t^{\tilde \pi}
    \right) \\ 
    &=2\sum_{t=0}^{\infty}\gamma^t\left( t\delta_\text{TV} \right)
    = 2\gamma \delta_\text{TV}  \sum_{t=1}^{\infty} t\gamma^{t-1} = \frac{2\gamma \delta_\text{TV} }{(1-\gamma)^2}.
\end{align}
\end{proof}
This inequality states that if $\delta_\text{TV}$ is bounded $\delta_\text{TV}=\mathcal O\left( \sqrt{\text{tr}(\Sigma)}\right)$, then the difference of the discounted measure between $\pi$ and $\tilde \pi$ is bounded with $\mathcal O\left( \sqrt{\text{tr}(\Sigma)}\right)$.

Next, using Lemma~\ref{lem: app-state-dist-diff-bound}, we bound the Q-function discrepancy 
between the composite policy $\pi$ and the base flow policy $\tilde\pi$ under their 
respective soft Bellman operators.
\begin{lemma}[Q-function closeness between $\pi$ and $\tilde \pi$]
\label{lem: app-Q-func-bound}
    Let $Q^\pi$ and $Q^{\tilde\pi}$ denote the soft Q-functions associated with the composite 
    policy $\pi$ (under the \textbf{cross-entropy-regularized soft Bellman operator} $\mathcal{T}^\pi$) and the 
    base flow policy $\tilde\pi$ (under the \textbf{entropy-regularized soft Bellman operator} 
    $\mathcal{T}^{\tilde\pi}$), respectively. Under the standing assumptions,
    \begin{equation}
    \label{eq: app-sac-q-func-bound}
        \| Q^\pi - Q^{\tilde\pi} \|_\infty 
        \leq \frac{2 \delta_\mathrm{TV}}{1 - \gamma} \left(\alpha C_R + \frac{\gamma R_{\max}}{1 - \gamma} \right)
        = \mathcal{O}\!\left( \frac{\sqrt{\mathrm{tr}(\Sigma)}}{(1-\gamma)^2} \right),
    \end{equation}
    where $C_R = \sup_{s, a} |\log \tilde\pi(a \mid s)|$ and $\delta_\mathrm{TV}$ is the 
    maximum TV distance defined in Lemma~\ref{lem: app-state-dist-diff-bound}.
\end{lemma}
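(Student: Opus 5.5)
We compare the two fixed points through a simulation-lemma decomposition rather than through the state distributions directly. Write $Q^\pi = \mathcal T^\pi Q^\pi$ and $Q^{\tilde\pi} = \mathcal T^{\tilde\pi} Q^{\tilde\pi}$, and split
\begin{equation*}
Q^\pi - Q^{\tilde\pi} = \big(\mathcal T^\pi Q^\pi - \mathcal T^\pi Q^{\tilde\pi}\big) + \big(\mathcal T^\pi Q^{\tilde\pi} - \mathcal T^{\tilde\pi} Q^{\tilde\pi}\big).
\end{equation*}
The first bracket equals $\gamma\,\mathbb E_{s'}\mathbb E_{a'\sim\pi}[Q^\pi - Q^{\tilde\pi}]$. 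Its sup-norm is therefore at most $\gamma\|Q^\pi-Q^{\tilde\pi}\|_\infty$, because the $-\alpha\log\tilde\pi$ terms appear identically in both applications of $\mathcal T^\pi$ and cancel. Rearranging gives
\begin{equation*}
\|Q^\pi-Q^{\tilde\pi}\|_\infty \le \frac{1}{1-\gamma}\,\big\|\mathcal T^\pi Q^{\tilde\pi}-\mathcal T^{\tilde\pi}Q^{\tilde\pi}\big\|_\infty .
\end{equation*}
This produces the outer factor $1/(1-\gamma)$.

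Next we bound the one-step operator gap. Both operators apply the same reward and the same transition, so only the soft values at $s'$ differ:
\begin{equation*}
\gamma\Big(\mathbb E_{a'\sim\pi}\big[Q^{\tilde\pi}(s',a')-\alpha\log\tilde\pi(a'\mid s')\big]-\mathbb E_{a'\sim\tilde\pi}\big[Q^{\tilde\pi}(s',a')-\alpha\log\tilde\pi(a'\mid s')\big]\Big).
\end{equation*}
Here we use the fact that the entropy of $\tilde\pi$ coincides with its cross-entropy with itself. Each expectation gap is then controlled by $|\mathbb E_p g-\mathbb E_q g|\le 2D_{\mathrm{TV}}(p,q)\|g\|_\infty$.
\begin{itemize}
\item For the log term, take $\|g\|_\infty\le C_R$, which gives $2\delta_{\mathrm{TV}}\alpha C_R$.
\item For the $Q$ term, use $\|Q^{\tilde\pi}\|_\infty\le R_{\max}/(1-\gamma)$, taking $R_{\max}$ to be the augmented bound of Assumption~\ref{assum:bounded_reward}. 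This gives $2\delta_{\mathrm{TV}}R_{\max}/(1-\gamma)$.
\end{itemize}
Placing the factor $\gamma$ only where the stated constant has it yields $2\delta_{\mathrm{TV}}(\alpha C_R+\gamma R_{\max}/(1-\gamma))$. Where the $\gamma$ on the entropy term is dropped, it is simply absorbed, since $\gamma<1$.

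Finally we substitute $\delta_{\mathrm{TV}}\le\tfrac12\sup_s\|\nabla_a\tilde\pi\|_{L^1}\sqrt{\mathrm{tr}(\Sigma)}$ from Lemma~\ref{lem: app-TV-bound}, as restated in Lemma~\ref{lem: app-state-dist-diff-bound}. This gives the $\mathcal O(\sqrt{\mathrm{tr}(\Sigma)}/(1-\gamma)^2)$ rate. Lemma~\ref{lem: app-state-dist-diff-bound} offers an alternative route through state-distribution differences, but the contraction argument is cleaner and matches the constant.

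The main obstacle is bounding the log term, since $C_R=\sup|\log\tilde\pi|$ is not automatically finite. We would take it as a standing assumption, consistent with Assumption~\ref{assum:bounded_reward}, and confirm that $\|Q^{\tilde\pi}\|_\infty$ is finite via the same augmented-reward bound.
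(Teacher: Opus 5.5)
Your proposal is correct and follows essentially the same route as the paper: the triangle-inequality split of $\mathcal T^\pi Q^\pi-\mathcal T^{\tilde\pi}Q^{\tilde\pi}$, the $\gamma$-contraction on the first piece, and TV bounds of $2\alpha C_R\delta_{\mathrm{TV}}$ and $2\gamma\|Q^{\tilde\pi}\|_\infty\delta_{\mathrm{TV}}$ on the one-step operator gap. Like yours, the paper's proof uses only the definition of $\delta_{\mathrm{TV}}$ from Lemma~\ref{lem: app-state-dist-diff-bound}, not its state-distribution bound. The only cosmetic difference is that the paper's appendix operator puts the cross-entropy bonus into the augmented reward $r_\pi$ without a factor $\gamma$, which is why the stated constant has no $\gamma$ on $\alpha C_R$; your main-text operator yields a slightly tighter bound that implies the stated one.
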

\begin{proof}
With the cross-entropy augmented reward $r_\pi(s, a)$ (\Eqref{eq: app-aug-reward}), the soft Bellman operator $\mathcal T^\pi$ in \Eqref{eq: cross-entropy-soft-bellman-op},
\begin{equation*}
    (\mathcal T^\pi Q)(s, a) = r(s, a) + \mathbb{E}_{s'\sim p(\cdot\mid s, a)}\big[\mathcal \alpha H\big(\pi(\cdot\mid s'), \tilde \pi(\cdot\mid s')\big) + \gamma \mathbb{E}_{a'\sim \pi(\cdot\mid s')}[Q(s', a')]\big],
\end{equation*}
is a $\gamma$-contraction in the infinite norm:
\begin{equation*}
    \lVert \mathcal T^\pi Q-\mathcal T^\pi Q' \rVert_\infty\leq \gamma \lVert Q-Q'\rVert_\infty.
\end{equation*}
Please note that $\mathcal T^{\tilde \pi}$ is the soft Bellman operator in MaxEnt-RL, which uses true entropy, while $\mathcal T^\pi$ is the Soft Bellman operator using cross-entropy, i.e.,
\begin{equation*}
    \left(\mathcal T^{\tilde \pi}Q\right)(s, a)=r(s, a) + \mathbb{E}_{s'\sim p(\cdot\mid s, a)}\big[\alpha \mathcal H\big(\tilde \pi(\cdot\mid s')\big) + \gamma \mathbb{E}_{a'\sim \tilde\pi(\cdot\mid s')}[Q(s', a')]\big],
\end{equation*}
where the reward is augmented by the true entropy
\begin{equation*}
    r_{\tilde \pi}(s, a)=r(s, a)+\alpha \mathbb{E}_{a'\sim \tilde \pi(\cdot\mid s')}\left[ \mathcal H\big(\tilde \pi(\cdot\mid s')\big)\right].
\end{equation*}
 While the reward definitions differ, their discrepancy is controlled by the TV distance. Specifically, the difference is:
\begin{align}
    \left| r_\pi(s, a) - r_{\tilde{\pi}}(s, a) \right|
    &= \alpha \big| \mathbb{E}_{a'\sim \pi}[-\log \tilde{\pi}(a'|s)] - \mathbb{E}_{a'\sim \tilde{\pi}}[-\log \tilde{\pi}(a'|s)] \big| \notag \\
    &\leq 2\alpha \|\log \tilde{\pi}(\cdot|s)\|_\infty \cdot D_{\text{TV}}(\pi(\cdot|s), \tilde{\pi}(\cdot|s)) \notag \\
    &\leq 2\alpha C_R \delta_{\text{TV}},
\end{align}
where $C_R = \sup_{s,a} |\log \tilde{\pi}(a\mid s)|$ is a constant bounded by Assumption~\ref{assum:bounded_reward}.

Now, we decompose the difference between the optimal Q-functions:
\begin{align}
\label{eq: tri-ineq-q-diff}
    \left\| Q^\pi - Q^{\tilde \pi} \right\|_\infty
    &= \left\| \mathcal T^\pi Q^\pi - \mathcal T^{\tilde \pi} Q^{\tilde \pi} \right\|_\infty \notag \\
    &\leq \left\| \mathcal T^\pi Q^\pi - \mathcal T^\pi Q^{\tilde \pi} \right\|_\infty + \left\| \mathcal T^\pi Q^{\tilde \pi} - \mathcal T^{\tilde \pi} Q^{\tilde \pi} \right\|_\infty.
\end{align}
The first term is bounded by the contraction property of the Bellman operator:
\begin{equation*}
    \left\| \mathcal T^\pi Q^\pi - \mathcal T^\pi Q^{\tilde \pi} \right\|_\infty \leq \gamma \left\| Q^\pi - Q^{\tilde \pi} \right\|_\infty.
\end{equation*}
For the second term, we analyze the operator difference at any state-action pair $(s,a)$:
\begin{equation*}
    \left| \left(\mathcal T^\pi Q^{\tilde \pi} - \mathcal T^{\tilde \pi} Q^{\tilde \pi}\right)(s, a) \right|
    \leq |r_\pi(s, a) - r_{\tilde{\pi}}(s, a)| + \gamma \left| \mathbb{E}_{s'}\left[ \mathbb{E}_{a'\sim \pi}[Q^{\tilde \pi}(s', a')] - \mathbb{E}_{a'\sim \tilde \pi}[Q^{\tilde \pi}(s', a')] \right] \right|.
\end{equation*}
Using the reward bound derived above and the definition of total variation distance for the expectation term:
\begin{align}
    \| \mathcal T^\pi Q^{\tilde \pi} - \mathcal T^{\tilde \pi} Q^{\tilde \pi} \|_\infty
    &\leq 2\alpha C_R \delta_{\text{TV}} + \gamma \cdot 2 \| Q^{\tilde \pi} \|_\infty \delta_{\text{TV}} \notag \\
    &= 2 \delta_{\text{TV}} \left( \alpha C_R + \gamma \| Q^{\tilde \pi} \|_\infty \right).
\end{align}
Substituting these back into the triangle inequality (\Eqref{eq: tri-ineq-q-diff}) yields:
\begin{equation}
    (1 - \gamma) \| Q^\pi - Q^{\tilde \pi} \|_\infty \leq 2 \delta_{\text{TV}} \left( \alpha C_R + \gamma \| Q^{\tilde \pi} \|_\infty \right).
\end{equation}
Since $\| Q^{\tilde \pi} \|_\infty \leq \frac{R_{\max}}{1-\gamma}$ and $\delta_{\text{TV}} = \mathcal{O}(\sqrt{\text{tr}(\Sigma)})$ completes the proof.
\end{proof}

\subsubsection{SNIS Approach and Zeroth-Order Gradient}
\label{app:SNIS-and-zero-order}
Fix an augmented state $\hat s=(s,z)$ and let the current local policy be Gaussian:
\begin{equation*}
a = \mu + \delta,\qquad
\mu = T_{\theta_k}(s,z),\qquad
\delta\sim \mathcal N(0,\Sigma_k).
\end{equation*}
In the E-step of iteration $k$, the non-parametric target $q_k$ takes the form
\begin{equation}
    q_k(a\mid \hat s)\propto\hat \pi(a\mid \hat s;\theta_k)\cdot \exp\left( f_{\hat s, k}(a) / \lambda\right), \qquad f_{\hat s, k}(a)=Q^{\pi_k}(s, a)-\log \tilde \pi_{\theta_k}(a\mid s), \notag
\end{equation}
\paragraph{Moment matching via SNIS.}
The target mean in iteration $k$ is given in \Eqref{eq:SNIS-moment-matching}.
Using the reparameterization $a=\mu+\delta$, define $w(\delta):=\exp\big( f_{\hat s,k}(\mu+\delta)/\lambda\big)$.
Then
\begin{equation}
    \mu^* = \frac{\mathbb E_{\delta\sim \mathcal N(0,\Sigma_k)}\left[w(\delta)\,(\mu_k+\delta)\right]}{\mathbb E_{\delta\sim \mathcal N(0,\Sigma_k)}\left[w(\delta)\right]}.
\end{equation}
Hence the mean shift admits the normalized weighted-residual form
\begin{equation}
\label{eq:delta_mu_def}
\Delta(\mu)
:=\mu^*-\mu
=
\frac{\mathbb E_{\delta\sim \mathcal N(0,\Sigma_k)}\left[w(\delta)\,\delta\right]}
     {\mathbb E_{\delta\sim \mathcal N(0,\Sigma_k)}\left[w(\delta)\right]}.
\end{equation}
\paragraph{Connection to a zeroth-order gradient.}
In iteration $k$, we define the log-partition function $g(\mu)$:
\begin{equation}
\label{eq: g-func-definition}
    g(\mu):=
    \log \mathbb{E}_{\delta\sim \mathcal N(0, \Sigma_k)} \left[\exp\left( f(\mu+\delta)/\lambda\right)\right]
    =
    \log \mathbb{E}_{\delta \sim \mathcal N(0, \Sigma_k)}[w(\delta)]
\end{equation}
Differentiating $g$ with respect to $\mu$ gives
\begin{equation}
\label{eq: app-delta-u-log-part}
    \nabla_{\mu}g(\mu)
    =
    \frac{\nabla_{\mu}\mathbb{E}_{\delta\sim \mathcal N(0, \Sigma_k)} \left[ w(\delta) \right]}
    {\mathbb{E}_{\delta\sim \mathcal N(0, \Sigma_k)} \left[ w(\delta) \right]}
    =
    \frac{\mathbb{E}_{\delta\sim \mathcal N(0, \Sigma_k)} \left[\nabla_{\mu}w(\delta)\right]}{\mathbb{E}_{\delta\sim \mathcal N(0, \Sigma_k)} \left[w(\delta)\right]}
\end{equation}
Since $a=\mu + \delta$ implies
\begin{equation*}
    \frac{\partial a}{\partial\delta}=I, \; \frac{\partial a}{\partial \mu}=I \quad
    \longrightarrow \quad \nabla_\delta w(\delta)=\nabla_\mu w(\delta).
\end{equation*}
When we assume that $w(\delta)$ is sufficiently smooth and integrable with the boundary term equal to zero, we apply Stein's identity for
$\delta\sim\mathcal N(0,\Sigma_k)$
\begin{equation}
\label{eq:stein}
    \mathbb E_{\delta}\left[\delta\, w(\delta)\right]
    =
    \Sigma_k\,\mathbb E_{\delta}\left[\nabla_\delta w(\delta)\right].
\end{equation}
Combining \Eqref{eq: app-delta-u-log-part} and \Eqref{eq:stein} yields
\begin{equation}
\nabla_\mu g(\mu)
=
\frac{\mathbb E_{\delta}\left[\nabla_\delta w(\delta)\right]}
     {\mathbb E_{\delta}\left[w(\delta)\right]}
=
\Sigma_k^{-1}\,
\frac{\mathbb E_{\delta}\left[\delta\, w(\delta)\right]}
     {\mathbb E_{\delta}\left[w(\delta)\right]}
=
\Sigma_k^{-1}\,\Delta(\mu).
\end{equation}
Therefore, the direction to the target mean is the $\operatorname{log-sum-exp}$ estimate of zeroth-order gradient of $f_{\hat s, k}(a)$:
\begin{align}
\label{eq: zeroth-order-gradient-final}
    \Delta(\mu) &=\Sigma_k\nabla_{\mu}g(\mu) \notag \\
    &=\Sigma_k
    \nabla_{\mu}
    \log
    \mathbb{E}_{\delta\sim \mathcal N(0,\Sigma_k)}
    \left[
        \exp\left( f_{\hat s,k}(\mu+\delta)/\lambda\right)
    \right].
\end{align}

\begin{figure*}[t]
\centering
\input{figures/tikz/sac}
\caption{Comparison of SAC and FLAG at the $k$-th iteration. SAC realizes the
soft policy improvement through reparameterization-based action
gradients propagated to policy parameters via BPTT. FLAG instead solves an
EM variational lower bound whose reward is regularized by cross-entropy.
The two approaches are connected by (i) Q-function closeness
between the composite policy $\pi$ and base flow policy $\tilde\pi$ and
(ii) the relation between FLAG's moment-matching update and a zeroth-order
approximation of SAC's first-order action gradient.}
\label{fig:sac_vs_flag}
\end{figure*}

\begin{remark*}
\textbf{From SAC's soft policy improvement to FLAG.}
The soft policy improvement theorem of SAC~\citep{sac} states that any policy 
$\pi_\mathrm{new}$ minimizing
\begin{equation*}
    D_\mathrm{KL}\left(
    \pi(a\mid s) \,\middle\|\, \frac{\exp(Q^{\pi_\mathrm{old}}/\alpha)}{Z}
    \right)
\end{equation*}

satisfies $Q^{\pi_\mathrm{new}} \geq Q^{\pi_\mathrm{old}}$. 
Importantly, the theorem is agnostic to how this KL minimizer is obtained. 
SAC restricts the policy class $\Pi$ to a parametric Gaussian family and 
optimizes its parameters by gradient descent via reparameterization. 
In contrast, FLAG inherits the same improvement principle but solves the 
KL minimization through a non-parametric, action-level update 
(Figure~\ref{fig:sac_vs_flag}).

\medskip
\noindent
\textbf{Connection to the soft policy improvement theorem.}
Two ingredients align FLAG's update with the soft policy improvement theorem 
applied to the reference policy 
$\pi_\mathrm{old} = \tilde\pi_{\theta_k}$.

\begin{itemize}[leftmargin=1.2em, itemsep=2pt, topsep=2pt]
    \item \textbf{Approximation of the soft objective.}
    Lemma~\ref{lem: app-Q-func-bound} shows that 
    $Q^\pi$ and $Q^{\tilde\pi_{\theta_k}}$ differ only by 
    $\mathcal{O}(\sqrt{\mathrm{tr}(\Sigma)})$. Therefore, FLAG's energy 
    $f_{\hat s,k}$ effectively realizes SAC's 
    $\tilde\pi$-based soft objective.

    \item \textbf{Approximation of the action-gradient update.}
    Appendix~\ref{app:SNIS-and-zero-order} shows that the moment-matching 
    direction
    \begin{equation*}
        \mu_k^*(\hat s) - \mu_k = \Sigma_k \nabla_\mu g_k
    \end{equation*}
    is a zeroth-order approximation of SAC's first-order action gradient
    \begin{equation*}
        \nabla_a 
        \left(
        Q^{\tilde\pi}(s,a)
        - \alpha \log \tilde\pi_{\theta_k}(a \mid s)
        \right),
    \end{equation*}
    via Stein's identity~(\Eqref{eq:stein}) and log-sum-exp 
    smoothing~(\Eqref{eq: zeroth-order-gradient-final}).
\end{itemize}

\noindent
Together, these two ingredients establish the SAC-based part of 
Theorem~\ref{thm:flag-monotonic-improvement}. The distinction between SAC's 
parametric route and FLAG's non-parametric route lies in the policy class 
$\Pi$ and the corresponding optimization loss, not in the underlying soft 
policy improvement theorem.

\medskip
\noindent
\textbf{What is distinctive about FLAG.}
The soft policy improvement theorem guarantees only that the target actions 
$\{\mu_k^*(\hat s)\}_{\hat s}$ define an improved policy. It does not specify 
how a flow-based generative model should be trained to produce these actions. 
This is precisely where FLAG contributes: the M-step CFM 
distillation~(\Eqref{eq: M-step-CFM}) provides a supervised, BPTT-free 
mechanism for training the flow policy to realize the improved target actions. 
In this way, FLAG provides a practical realization of SAC-style policy 
improvement for expressive flow-based policies.
\end{remark*}

\subsection{Monotonic Improvement by Maximum a Posteriori Policy Optimisation}
\label{appendix:monotonic-improvement}
We now state a practically relevant monotonic-improvement result for FLAG in the latent-augmented MDP. We follow MPO \citep{mpo} to prove the monotonic improvement in \Eqref{eq:em-objective} and make some modifications since we are using cross-entropy augmented reward. 

\paragraph{Setup and Notation.}
We work in a tabular $z$-MDP with finite augmented state space
$\hat{\mathcal S}$ and a finite action grid $\mathcal A$.
Each discrete action $a\in\mathcal A$ is identified with a bin
$B_a\subset\mathbb R^{d_a}$ and a representative point $\bar a\in B_a$.
The local Gaussian policy is defined on the underlying continuous action
variable $x\in\mathbb R^{d_a}$ and then projected onto the finite action grid:
\begin{equation}
\label{eq:projected-gaussian-policy}
    \hat\pi_{\theta_k}(a\mid \hat s)
    :=
    \int_{B_a}
    \varphi_{\Sigma_k}\big(x-\mu_{\theta_k}(\hat s)\big)\,dx,
    \qquad
    \Sigma_k=\sigma_k^2 I.
\end{equation}
All Gaussian mean-shift and covariance calculations below are performed in
this underlying continuous action space. we identify each
discrete action $a\in\mathcal A$ with its representative point $\bar a$.
In particular, $\mu_k^*(\hat s):= \sum_{a\in\mathcal A} q_k(a\mid\hat s)\,\bar a$. The induced discrete policy is
obtained by bin projection.
For notational simplicity, we write 
\begin{equation*}
    \hat\pi_k(\cdot\mid \hat s) := \hat \pi_{\theta_k}(\cdot\mid \hat s), \qquad \tilde \pi_k(\cdot\mid s) :=\tilde \pi_{\theta_k}(\cdot\mid s).
\end{equation*}
The reward in iteration $k$ is given as 
\begin{equation}
\label{eq:flag-reward}
\hat r_k(\hat s,a)
:=
r(s,a)-\alpha \log \tilde\pi_k(a\mid s).
\end{equation}
For a reference policy $\hat\pi$ and a non-parametric policy $q$, define the $\lambda$-regularized reward
\begin{equation}
\label{eq:flag-regularized-reward}
\hat r_{k,\lambda}^{\hat\pi_k,q}(\hat s,a) := \hat r_k(\hat s,a)-\lambda \log \frac{q(a\mid \hat s)}{\hat\pi_k(a\mid \hat s)}.
\end{equation}

The corresponding Bellman operators are
\begin{align}
(\hat{\mathcal T}_k^q \hat V)(\hat s) &= \mathbb E_{a\sim q(\cdot\mid \hat s)} \Big[
\hat r_k(\hat s,a) +\gamma \mathbb E_{\hat s'\sim \hat p(\cdot\mid \hat s,a)}\big[\hat V(\hat s')\big] \Big], \\
(\hat{\mathcal T}_{k,\lambda}^{\hat\pi_k,q}\hat V)(\hat s)
&= \mathbb E_{a\sim q(\cdot\mid \hat s)}
\Big[ \hat r_{k,\lambda}^{\hat\pi_k,q}(\hat s,a) +\gamma \mathbb E_{\hat s'\sim \hat p(\cdot\mid \hat s,a)}[\hat V(\hat s')] \Big].
\end{align}
We also define the corresponding value functions
\begin{align}
\hat V_k^q(\hat s)
&= \mathbb E_q\!\left[ \sum_{t=0}^{\infty}\gamma^t \hat r_k(\hat s_t,a_t)
\,\middle|\,\hat s_0=\hat s \right], \\
\hat V_{k,\lambda}^{\hat\pi_k,q}(\hat s)
&=
\mathbb E_q\!\left[ \sum_{t=0}^{\infty}\gamma^t \left(
\hat r_k(\hat s_t,a_t) -\lambda \log \frac{q(a_t\mid \hat s_t)}{\hat\pi_k(a_t\mid \hat s_t)}
\right) \,\middle|\,\hat s_0=\hat s
\right].
\end{align}
Using the energy $f_{\hat s, k}(a)$ defined in \Eqref{eq: energy-func},
the target of constrained E-step in \Eqref{eq: const-e-step} is given by
\begin{equation}
\label{eq:flag-exact-e-step}
q_k(\cdot\mid \hat s) = \arg\max_q
\mathbb E_{a\sim q(\cdot\mid \hat s)}\big[f_{\hat s,k}(a)\big]
-\lambda \big(D_{\mathrm{KL}}\big(q(\cdot\mid \hat s)\,\|\,\hat\pi_k(\cdot\mid \hat s) \big)-\epsilon\big),
\end{equation}
which admits the closed form
\begin{equation}
\label{eq:flag-qstar-closed-form}
q_k(a\mid \hat s) = \frac{\hat\pi_k(a\mid \hat s)\exp\!\big(f_{\hat s,k}(a)/\lambda\big)}{Z_k(\hat s)}, \qquad 
Z_k(\hat s) = \sum_{a\in\mathcal A} \hat\pi_k(a\mid \hat s)\exp\!\big(f_{\hat s,k}(a)/\lambda\big).
\end{equation}

Finally, define the exact KL-projection objective
\begin{equation}
\label{eq:flag-exact-kl-projection-objective}
h(\hat \pi, q, \theta) :=
\mathbb E_{\hat \pi, \hat p}\left[ 
\sum_{t=0}^{\infty} \gamma^t D_\text{KL}\big( q(\cdot\mid \hat s_t) \;\|\; \hat \pi(\cdot\mid \hat s_t;\theta)\big)\middle|\hat s_0, \hat \pi
\right].
\end{equation}

\begin{proposition}[Bellman operator and Monotonicity]
\label{prop:flag-policy-eval-monotone}
For every iteration $k$, the E-step target $q_k$ satisfies
\begin{equation}
\label{eq:flag-policy-eval-ineq}
\hat{\mathcal{T}}_{k,\lambda}^{\hat{\pi}_k,q_k}\hat{V}_k^{\hat{\pi}_k} \ge \hat{V}_k^{\hat{\pi}_k}.
\end{equation}
Consequently, by the monotonicity of the Bellman operator, the regularized value function satisfies the following improvement for all states $\hat{s}$:
\begin{equation}
\label{eq:flag-regularized-value-improvement}
\hat{V}_{k,\lambda}^{\hat{\pi}_k,q_k} \ge \hat{V}_k^{\hat{\pi}_k}.
\end{equation}
\end{proposition}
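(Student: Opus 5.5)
Our plan is the standard MPO policy-evaluation argument, adapted to the frozen cross-entropy reward $\hat r_k$. Fix $\hat s$ and expand the left-hand side of \Eqref{eq:flag-policy-eval-ineq}:
\begin{equation*}
(\hat{\mathcal T}_{k,\lambda}^{\hat\pi_k,q_k}\hat V_k^{\hat\pi_k})(\hat s)
=\mathbb E_{a\sim q_k}\Big[\hat r_k(\hat s,a)+\gamma\mathbb E_{\hat s'}\big[\hat V_k^{\hat\pi_k}(\hat s')\big]\Big]-\lambda D_{\mathrm{KL}}\big(q_k(\cdot\mid\hat s)\,\|\,\hat\pi_k(\cdot\mid\hat s)\big).
\end{equation*}
The bracketed term is exactly the one-step lookahead $Q$-value of $\hat\pi_k$ under the frozen reward $\hat r_k$. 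By Corollary~\ref{cor: q-func-consistency} and the Remark in Appendix~\ref{app: EM-derivations}, this equals the energy $f_{\hat s,k}(a)=Q^{\pi_k}(s,a)-\alpha\log\tilde\pi_k(a\mid s)$, with $\theta_k$ held fixed. We will verify this identification against the definition of $\hat V_k^{\hat\pi_k}$ via the Bellman equation $\hat V_k^{\hat\pi_k}=\hat{\mathcal T}_k^{\hat\pi_k}\hat V_k^{\hat\pi_k}$. The right-hand side therefore becomes $\mathbb E_{q_k}[f_{\hat s,k}]-\lambda D_{\mathrm{KL}}(q_k\|\hat\pi_k)$.

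Next we use the variational characterization of the E-step closed form \Eqref{eq:flag-qstar-closed-form}. On the finite grid, the Gibbs variational principle gives
\begin{equation*}
\max_q\;\mathbb E_q[f_{\hat s,k}]-\lambda D_{\mathrm{KL}}(q\|\hat\pi_k)=\lambda\log Z_k(\hat s),
\end{equation*}
and the maximum is attained at $q_k$. Evaluating the objective at the feasible choice $q=\hat\pi_k$ gives $\mathbb E_{\hat\pi_k}[f_{\hat s,k}]$, which by the Bellman equation equals $\hat V_k^{\hat\pi_k}(\hat s)$. Hence the value at $q_k$ is at least the value at $\hat\pi_k$, which is \Eqref{eq:flag-policy-eval-ineq}. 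Alternatively, the same bound follows from Jensen: $\lambda\log\mathbb E_{\hat\pi_k}[e^{f/\lambda}]\ge\mathbb E_{\hat\pi_k}[f]$. Note that the fixed $\lambda$ here plays the role of the dual variable, so the $\epsilon$ term in \Eqref{eq:flag-exact-e-step} is a constant and can be dropped.

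For \Eqref{eq:flag-regularized-value-improvement}, observe that $\hat{\mathcal T}_{k,\lambda}^{\hat\pi_k,q_k}$ is monotone, since it has nonnegative transition weights, and is a $\gamma$-contraction in $\|\cdot\|_\infty$. Its reward is bounded because the grid is finite, $\hat\pi_k>0$ on bins (Gaussian mass), and Assumption~\ref{assum:bounded_reward} holds. Its unique fixed point is $\hat V_{k,\lambda}^{\hat\pi_k,q_k}$. Applying the operator repeatedly to \Eqref{eq:flag-policy-eval-ineq} yields an increasing sequence $\hat V_k^{\hat\pi_k}\le\hat{\mathcal T}\hat V_k^{\hat\pi_k}\le\hat{\mathcal T}^2\hat V_k^{\hat\pi_k}\le\cdots$, which converges to the fixed point. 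This proves the claim for every $\hat s$.

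The main obstacle is the first step: making the identification of the lookahead term with $f_{\hat s,k}$ rigorous. We must check two things. First, the reward $\hat r_k(\hat s,a)=r-\alpha\log\tilde\pi_k(a\mid s)$ must be consistent with how the cross-entropy bonus enters $Q^{\pi_k}$; the two conventions place the bonus at the current versus the next step, and they could differ by $\mathbb E_{\hat s'}$ shifts. Second, $\hat V_k^{\hat\pi_k}$ must indeed equal $\mathbb E_{\hat\pi_k}[f_{\hat s,k}]$. If the bookkeeping differs, we will instead define $f$ directly as $\hat r_k+\gamma\mathbb E\hat V_k^{\hat\pi_k}$, which the Remark permits, and the Gibbs argument goes through unchanged.
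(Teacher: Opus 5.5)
Your proposal is correct and follows essentially the same route as the paper. Both arguments compare $q_k$ with the feasible choice $q=\hat\pi_k$ in the KL-regularized one-step lookahead, where $\hat{\mathcal T}^{\hat\pi_k,\hat\pi_k}_{k,\lambda}\hat V_k^{\hat\pi_k}=\hat V_k^{\hat\pi_k}$, and then iterate the monotone $\gamma$-contraction to its fixed point $\hat V_{k,\lambda}^{\hat\pi_k,q_k}$.

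The identification you flag as the main obstacle holds exactly under the main-text conventions, a step the paper takes for granted. Since $\mathbb E_{z'\sim p_z}\hat V_k^{\hat\pi_k}(s',z')=V^{\pi_k}(s')$, one gets $\hat r_k+\gamma\,\mathbb E_{\hat s'}\hat V_k^{\hat\pi_k}=r-\alpha\log\tilde\pi_k+\gamma\,\mathbb E_{s'}V^{\pi_k}=f_{\hat s,k}$. So your fallback, which would in general change the E-step target $q_k$, is not needed.
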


\begin{proof}
The proof closely follows the policy evaluation argument presented in MPO, adapted to our latent-augmented $z$-MDP and cross-entropy reward structure.
By the definition of $q_k$ in \Eqref{eq:flag-qstar-closed-form}, for every $\hat{s} \in \hat{\mathcal{S}}$, we have $(\hat{\mathcal{T}}_{k,\lambda}^{\hat{\pi}_k,q_k}\hat{V}_k^{\hat{\pi}_k})(\hat{s}) \ge (\hat{\mathcal{T}}_{k,\lambda}^{\hat{\pi}_k,\hat{\pi}_k}\hat{V}_k^{\hat{\pi}_k})(\hat{s}) = \hat{V}_k^{\hat{\pi}_k}$.
Since the $z$-MDP is finite and the augmented reward $\hat{r}_k$ is bounded (Assumption~\ref{assum:bounded_reward}), the operator $\hat{\mathcal{T}}_{k,\lambda}^{\hat{\pi}_k,q_k}$ remains monotone and $\gamma$-contractive in the $\infty$-norm. Therefore, repeatedly applying this operator and invoking the standard fixed-point theorem yields the final inequality $\hat{V}_{k,\lambda}^{\hat{\pi}_k,q_k} \ge \hat{V}_k^{\hat{\pi}_k}$.
\end{proof}

\paragraph{Objective decomposition.}
Next, we establish an MPO-style one-step improvement bound for the EM update
in Section~\ref{subsection: local-policy-update}. In standard MPO~\citep{mpo}, the reward is
fixed across the E-step and M-step, so the EM update can be interpreted as a
coordinate-ascent step on a KL-regularized variational objective. In FLAG, the
augmented reward
\begin{equation*}
    \hat r_{\color{myteal}k}(\hat s,a) = r(s,a)-\alpha\log\tilde\pi_{\color{myteal}\theta_k}(a\mid s)
\end{equation*}
also depends on the base flow policy. Therefore, after the base flow is updated
from $\theta_k$ to $\theta_{k+1}$, the reward changes from $\hat r_k$ to
$\hat r_{k+1}$. The proof below separates these two effects:
\begin{equation}
\label{eq: mpo-obj-decomp}
    \Big(\;\underbrace{\textit{frozen-reward MPO improvement}}_{\tcircled{1}}\;\Big)
    \qquad+\qquad
    \Big(\;\underbrace{\textit{cross-entropy reward drift}}_{\tcircled{2}}\;\Big).
\end{equation}

For brevity, define
\begin{equation}
\label{eq:flag-Jk-definition}
    \mathcal J_k:=\mathcal J(q_k, \theta_k) = \mathbb{E}_{\hat s_0\sim \hat p}\left[ \hat V_{k,\lambda}^{\hat\pi_k,q_k}(\hat s_0)\right] 
\end{equation}
where $q_k$ denotes the exact non-parametric E-step target at iteration $k$.
For theoretical analysis, we first consider the ideal KL-projection update
\begin{equation}
\label{eq:flag-kl-update}
    \theta_{k+1}^{\mathrm{KL}}
    =
    \theta_k-\beta\nabla_\theta h(\hat\pi_k,q_k,\theta_k),
\end{equation}
and denote the corresponding policy by
$\hat\pi_{k+1}^{\mathrm{KL}}(\cdot\mid\hat s)
:=
\hat\pi(\cdot\mid\hat s;\theta_{k+1}^{\mathrm{KL}})$.
The practical CFM projection update is handled later in
Lemma~\ref{lem:flag-full-monotone-cfm}.

\begin{assumption}[A Standard MPO regularity~\citep{mpo}]
\label{ass:flag-mpo-style}
For each iteration $k$, the following hold:
\begin{enumerate}[leftmargin=1.2em]
    \item The KL objective $h(\hat\pi_k, q_k, \theta_k)$ is $L$-smooth in a neighborhood of $\theta_k$.
    \item There exists $L_{\tilde\pi} > 0$ such that
        $\|\nabla_\theta \log \tilde\pi_\theta(a\mid s)\|\le L_{\tilde\pi}$ for all
        $\theta$ in a neighborhood of $\theta_k$.
    \item The E-step target and the KL-projected policy change only to first order in the M-step size:
    \begin{equation*}
    \sup_{\hat s}\|q_{k+1}(\cdot\mid \hat s)-q_k(\cdot\mid \hat s)\|_1=\mathcal O(\beta),
    \qquad
    \sup_{\hat s}\|\hat\pi_{k+1}^{\mathrm{KL}}(\cdot\mid \hat s)-\hat\pi_k(\cdot\mid \hat s)\|_1=\mathcal O(\beta).
    \end{equation*}
\end{enumerate}
\end{assumption}

\begin{assumption}[Variance-scaled drift alignment]
\label{ass:flag-drift-alignment}
Assume that, within iteration $k$, the local covariance is fixed with respect
to $\theta$, state-independent, and isotropic:
\begin{equation*}
    \Sigma_k=\sigma_k^2 I,\qquad \sigma_k^2>0.
\end{equation*}
Here $\Sigma_k$ denotes the covariance matrix for notational convenience, while
$\sigma_k^2$ denotes its scalar variance. Let the first-order reward-drift score be
\begin{equation*}
\mathsf G_k^{\tilde\pi}
:=
\mathbb E_{q_k}\!\left[
\sum_{t=0}^{\infty}
\gamma^t
\nabla_\theta \log \tilde\pi_{\theta_k}(a_t\mid s_t)
\,\middle|\,
\hat s_0
\right].
\end{equation*}
There exists a structural constant $C_\Sigma>0$, independent of the step size
$\beta$, such that
\begin{equation}
\label{eq:flag-drift-alignment}
\Big|
\big\langle
\mathsf G_k^{\tilde\pi},
\nabla_\theta h(\hat\pi_k,q_k,\theta_k)
\big\rangle
\Big|
\le
C_\Sigma\sigma_k^2\mathcal G_k .
\end{equation}
\end{assumption}

Given Assumption~\ref{ass:flag-mpo-style} then
for any $0\le \beta\le 1/L$, the descent lemma~\citep{nesterov2013introductory} gives
\begin{equation}
\label{eq:flag-exact-m-step-shifted}
h(\hat\pi_k,q_k,\theta_{k+1}^{\mathrm{KL}})
\le
h(\hat\pi_k,q_k,\theta_k)
-
\beta \mathcal G_k,
\qquad
\mathcal G_k
:=
\tfrac{1}{2}
\left\|
\nabla_\theta h(\hat\pi_k,q_k,\theta_k)
\right\|^2 .
\end{equation}
This decrease in the KL projection objective will produce the standard
MPO-style improvement when the reward is fixed at $\hat r_k$. The additional difficulty is that the
cross-entropy term $-\alpha\log\tilde\pi_{\theta}(a\mid s)$ also changes
after the update. The following two assumptions isolate these two components.

\paragraph{Justification on Assumption~\ref{ass:flag-drift-alignment}}
While the following derivation relies on approximations, it establishes a structural bound on the drift term based on the zeroth-order approximation from Appendix~\ref{app:SNIS-and-zero-order}. This arises because the M-step parameter update $\Delta\theta_k$, which governs the drift, is shaped by the zeroth-order moment-matching direction. Following Section~\ref{subsection: policy-parameterization}, we assume an isotropic covariance $\Sigma_k=\sigma_k^2 I$.

Let $\kappa_k(\hat s; \theta) = D_{\mathrm{KL}}\big( q_k(\cdot\mid \hat s) \;\|\; \hat \pi(\cdot\mid \hat s; \theta)\big)$ denote the statewise KL projection in the M-step. Using the zeroth-order approximation $\mu_k^*(\hat s)-\mu_k(\hat s) \approx \sigma_k^2 \nabla_\mu g_k(\mu_k(\hat s))$ from \Eqref{eq: zeroth-order-gradient-final}, the statewise M-step gradient with respect to $\theta$ is simplified to:
\begin{equation}
\label{eq:flag-statewise-gaussian-grad-zo}
-\nabla_\theta \kappa_k(\hat s;\theta_k) =
J_\theta(\hat s)^\top \sigma_k^{-2} \big(\mu_k^*(\hat s)-\mu_k(\hat s)\big) \approx J_\theta(\hat s)^\top v_k(\hat s),
\end{equation}
where $J_\theta(\hat s) := \partial \mu_k(\hat s)/\partial \theta$ and $v_k(\hat s) := \nabla_\mu g_k(\mu_k(\hat s))$. 
Consequently, the squared statewise gradient norm is independent of $\sigma_k^{-2}$:
\begin{equation}
\label{eq:flag-statewise-grad-norm}
\|\nabla_\theta \kappa_k(\hat s; \theta_k)\|^2 \approx v_k(\hat s)^\top J_\theta(\hat s)J_\theta(\hat s)^\top v_k(\hat s).
\end{equation}

In contrast, applying a first-order Taylor expansion to the smoothed energy yields the statewise base-flow drift direction:
    \begin{align}
    J_\theta(\hat s)^\top \Delta\mu
    &= J_\theta(\hat s)^\top \Big( \mathbb E_{a\sim q_k(\cdot\mid \hat s)}[a] - \mu_k(\hat s) \Big) \notag \\
    &= J_\theta(\hat s)^\top \frac{ \mathbb E_{\delta\sim \mathcal N(0,\Sigma_k)} \left[
    \delta \exp\left(f_{\hat s,k}(\mu_k+\delta)/\lambda\right) \right] }
    { \mathbb E_{\delta\sim \mathcal N(0,\Sigma_k)} \left[ \exp\left(f_{\hat s,k}(\mu_k+\delta)/\lambda\right)\right]}
    \notag\\
    &= J_\theta(\hat s)^\top \Sigma_k \nabla_\mu g_k(\mu_k(\hat s)) \notag\\
    &= J_\theta(\hat s)^\top \Sigma_k v_k(\hat s).
\label{eq:flag-drift-zo-expansion}
\end{align}

To lift the statewise calculation to the global objective, we follow the
standard MPO auxiliary-objective construction~\citep{mpo}. In the E-step,
$q_k$ is obtained as a statewise non-parametric improvement of the reference
policy $\hat\pi_k$, while the subsequent projection is evaluated on states
visited by $\hat\pi_k$. Therefore, we use the frozen discounted visitation
measure $\hat\rho_k(\hat s) \equiv \hat\rho_{\hat\pi_k}(\hat s)$
for the state weighting, and use $q_k(\cdot\mid \hat s)$ only as the improved
conditional action distribution at each visited state.
Define the visitation-weighted mean-drift surrogate
\begin{equation}
\label{eq:flag-mean-drift-surrogate}
    \bar{\mathsf G}_k^{\tilde\pi}
    :=
    \sum_{\hat s\in\hat{\mathcal S}}
    \hat\rho_k(\hat s)
    J_\theta(\hat s)^\top
    \Delta\mu_k(\hat s).
\end{equation}
This surrogate is the leading-order mean-shift approximation of the actual
reward-drift score $\mathsf G_k^{\tilde\pi}$ in
Assumption~\ref{ass:flag-drift-alignment}.

Let $\mathbf J$ and $\mathbf v$ denote the corresponding
state-stacked objects with weights $\sqrt{\hat\rho_k(\hat s)}$. Then
\begin{equation*}
    U=\mathbf J^\top \mathbf v,
    \qquad
    D\approx \sigma_k^2 \mathbf J^\top\mathbf v,
    \qquad
    2\mathcal G_k=\|U\|^2
    =
    \mathbf v^\top\mathbf J\mathbf J^\top\mathbf v .
\end{equation*}
Hence the leading-order global alignment is
\begin{equation}
\label{eq:flag-global-matrix-inner-product}
    \big\langle
    \bar{\mathsf G}_k^{\tilde\pi},
    -\nabla_\theta h(\hat\pi_k,q_k,\theta_k)
    \big\rangle
    \approx
    D^\top U
    =
    \sigma_k^2
    \mathbf v^\top
    (\mathbf J\mathbf J^\top)
    \mathbf v =2\sigma_k^2 \mathcal G_k.
\end{equation}
Thus the leading-order calculation gives the variance-scaled drift alignment.
The factor $2$, the mismatch between the actual reward-drift score
$\mathsf G_k^{\tilde\pi}$ and the mean-drift surrogate
$\bar{\mathsf G}_k^{\tilde\pi}$, together with the zeroth-order, Taylor, and
projection approximation errors, is absorbed into the constant
$C_\Sigma$ in Assumption~\ref{ass:flag-drift-alignment}.

\begin{proposition}[Monotone Improvement in the z-MDP]
\label{prop:flag-full-monotone-exact}
Under Assumptions~\ref{ass:flag-mpo-style} and ~\ref{ass:flag-drift-alignment}, there exists a constant $C>0$, independent of $\beta$, such that the ideal KL update satisfies
\begin{equation}
\label{eq:flag-full-monotone-exact}
\mathcal J_{k+1}^{\mathrm{KL}}
\ge \mathcal J_k + (\lambda-\alpha C_\Sigma \sigma_k^2)\beta \mathcal G_k
- C\beta^2,
\end{equation}
where
\begin{equation*}
    \mathcal J_{k+1}^{\mathrm{KL}} := \mathbb{E}_{\hat s_0\sim \hat p_0}\Big[\hat V_{k+1,\lambda}^{\hat\pi_{k+1}^{\mathrm{KL}},q_{k+1}}(\hat s_0)\Big].
\end{equation*}
In particular, if
\begin{equation}
\lambda > \alpha C_\Sigma \sigma_k^2,
\end{equation}
then, for sufficiently small $\beta$,
\begin{equation}
\mathcal J_{k+1}^{\mathrm{KL}} \ge \mathcal J_k.
\end{equation}
\end{proposition}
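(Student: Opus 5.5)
The plan is to split $\mathcal J_{k+1}^{\mathrm{KL}}-\mathcal J_k$ along the decomposition in \Eqref{eq: mpo-obj-decomp}. We insert an intermediate quantity that evaluates the updated policy under the \emph{frozen} reward $\hat r_k$:
\begin{equation*}
\mathcal J_{k+1}^{\mathrm{KL}}-\mathcal J_k
=\underbrace{\big(\mathcal J^{\mathrm{KL}}_{k+1}-\mathcal J^{\mathrm{frz}}_{k+1}\big)}_{\text{reward drift}}
+\underbrace{\big(\mathcal J^{\mathrm{frz}}_{k+1}-\mathcal J_k\big)}_{\text{frozen MPO}}.
\end{equation*}
Here $\mathcal J^{\mathrm{frz}}_{k+1}:=\mathbb E_{\hat s_0}\big[\hat V_{k,\lambda}^{\hat\pi^{\mathrm{KL}}_{k+1},q_{k+1}}(\hat s_0)\big]$ uses $\hat r_k$ but the new reference $\hat\pi^{\mathrm{KL}}_{k+1}$ and target $q_{k+1}$. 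Each piece will be expanded to first order in $\beta$. All $\mathcal O(\beta^2)$ remainders are collected into $C\beta^2$ using the $L$-smoothness and $\mathcal O(\beta)$ policy-change bounds of Assumption~\ref{ass:flag-mpo-style}.

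For the frozen-reward term, we follow MPO's coordinate-ascent argument. With $\hat r_k$ fixed, we write $\hat V_{k,\lambda}^{\hat\pi,q}=\hat V_k^{q}-\lambda\,\mathbb E_q[\sum_t\gamma^t D_{\mathrm{KL}}(q\|\hat\pi)]$. The regularized value therefore depends on the reference $\hat\pi$ only through $-\lambda h(\cdot,q,\theta)$, up to the change of the rollout measure from $\hat\pi_k$ to $q$. That mismatch is first order in the policy change and contributes only at $\mathcal O(\beta^2)$ when multiplied by the $\mathcal O(\beta)$ parameter step. The descent lemma \Eqref{eq:flag-exact-m-step-shifted} then gives a gain of at least $\lambda\beta\mathcal G_k-\mathcal O(\beta^2)$ from moving $\theta_k\to\theta^{\mathrm{KL}}_{k+1}$ with $q_k$ fixed. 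Next, replacing $q_k$ by $q_{k+1}$ does not decrease the objective. The reason is that $q_{k+1}$ is the exact E-step maximizer of the regularized objective for the new reference, as in Proposition~\ref{prop:flag-policy-eval-monotone} and \Eqref{eq:flag-exact-e-step}. So the coordinate ascent over $q$ is monotone, and the only residual comes from the frozen versus updated energy, which belongs to the drift term.

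For the drift term, the two values differ only in the reward: $\hat r_{k+1}-\hat r_k=-\alpha(\log\tilde\pi_{\theta^{\mathrm{KL}}_{k+1}}-\log\tilde\pi_{\theta_k})$. A first-order Taylor expansion in $\theta$, with the Hessian remainder controlled by the bound $L_{\tilde\pi}$ and smoothness, gives
\begin{equation*}
\mathcal J^{\mathrm{KL}}_{k+1}-\mathcal J^{\mathrm{frz}}_{k+1}
=-\alpha\big\langle \mathsf G_k^{\tilde\pi},\,\theta^{\mathrm{KL}}_{k+1}-\theta_k\big\rangle+\mathcal O(\beta^2)
=\alpha\beta\big\langle \mathsf G_k^{\tilde\pi},\nabla_\theta h(\hat\pi_k,q_k,\theta_k)\big\rangle+\mathcal O(\beta^2).
\end{equation*}
Replacing $\mathsf G^{\tilde\pi}$ evaluated under $q_{k+1}$ by the one under $q_k$ costs another $\mathcal O(\beta)\cdot\beta$, by item 3 of Assumption~\ref{ass:flag-mpo-style}. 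Assumption~\ref{ass:flag-drift-alignment} then bounds the absolute value of the inner product by $C_\Sigma\sigma_k^2\mathcal G_k$. This yields a drift contribution of at least $-\alpha C_\Sigma\sigma_k^2\beta\mathcal G_k$.

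Summing the two pieces gives \Eqref{eq:flag-full-monotone-exact}. If $\lambda>\alpha C_\Sigma\sigma_k^2$, the linear term is positive whenever $\mathcal G_k>0$, so it dominates $C\beta^2$ for $\beta$ small. If $\mathcal G_k=0$, the update is stationary and the inequality is trivial.

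The main obstacle is the frozen-reward step: showing rigorously that the change of reference policy enters $\hat V_{k,\lambda}$ exactly as $-\lambda h$ to first order. The difficulty is that $h$ is weighted by the $\hat\pi$-visitation while $\hat V_{k,\lambda}$ rolls out under $q$. I would handle this with a performance-difference-style identity on the finite $z$-MDP, bounding the visitation mismatch via Lemma~\ref{lem: app-state-dist-diff-bound}-type arguments and absorbing it into $C\beta^2$.
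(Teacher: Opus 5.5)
Your decomposition is the paper's (\Eqref{eq:flag-proof-split}). Your drift argument also matches the paper's Step~3: Taylor expansion in $\theta$, then swapping the score expectation from $q_{k+1}$ to $q_k$ at $\mathcal O(\beta^2)$ cost via item~3 of Assumption~\ref{ass:flag-mpo-style}, then Assumption~\ref{ass:flag-drift-alignment}. One small point there: $L_{\tilde\pi}$ bounds the score, not the second derivative the Taylor remainder needs, though the paper leaves that implicit too.

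The gap is in the frozen-reward term, which is exactly where you depart from the paper. The paper never differentiates $\hat V_{k,\lambda}^{\hat\pi_\theta,q}$ directly. It passes to the MPO auxiliary functional $\mathcal H_k$ of \Eqref{eq:flag-frozen-auxiliary-functional}, whose outer expectation is over the \emph{reference} policy's rollouts. Its $\theta$-gradient is therefore exactly $-\lambda\nabla_\theta h(\hat\pi_k,q_k,\theta_k)$, and the descent lemma \Eqref{eq:flag-exact-m-step-shifted} applies as is; the passage back to the regularized value is then imported from MPO's Appendix~A.2. In your route, the KL term of $\hat V_{k,\lambda}^{\hat\pi_\theta,q_k}$ is weighted by the $q_k$-visitation, while $h$ is weighted by the $\hat\pi_k$-visitation, and you plan to absorb this mismatch into $C\beta^2$. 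That step fails. The distance between $q_k$ and $\hat\pi_k$ is the size of the E-step (set by $\epsilon$ or $\lambda$), not $\mathcal O(\beta)$. A Lemma~\ref{lem: app-state-dist-diff-bound}-type visitation bound therefore gives a gradient discrepancy that is $\mathcal O(1)$ in $\beta$. The first-order gain you actually get is $\lambda\beta\langle\nabla_\theta\tilde h,\nabla_\theta h\rangle$, with $\tilde h$ the $q_k$-weighted KL. Its difference from $2\lambda\beta\mathcal G_k$ is a genuine $\mathcal O(\beta)$ term that nothing in the assumptions controls.

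The second half of your frozen step has a similar problem. Proposition~\ref{prop:flag-policy-eval-monotone} only gives $\hat V_{\lambda}^{\hat\pi,q^\star}\ge\hat V^{\hat\pi}$ for the one-step target $q^\star$ of a reference $\hat\pi$. It does not make $q_{k+1}$ a maximizer of $q\mapsto\hat V_{k,\lambda}^{\hat\pi_{k+1}^{\mathrm{KL}},q}$, because the E-step is a one-step lookahead against the fixed energy $f_{\hat s,k+1}$ (built from $Q^{\pi_{k+1}}$), not against the regularized value of the candidate $q$. Since $\|q_{k+1}-q_k\|_1=\mathcal O(\beta)$, the swap $q_k\to q_{k+1}$ is itself an $\mathcal O(\beta)$ term whose sign one-step optimality does not fix. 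So ``coordinate ascent in $q$ is monotone'' is not available to you.

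There is also a bookkeeping issue. The residual from $q_{k+1}$ being optimal for $\hat r_{k+1}$ rather than $\hat r_k$ does not belong to your drift term, which only compares rewards at the fixed pair $(\hat\pi_{k+1}^{\mathrm{KL}},q_{k+1})$. The paper books it as a separate $\mathcal O(\beta^2)$ cross term inside $C_1$.

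To close the argument, you can route the frozen term through $\mathcal H_k$ and the MPO-imported bound \Eqref{eq:flag-frozen-mpo-bound}, as the paper does. Alternatively, add explicit hypotheses bounding both first-order discrepancies (the visitation mismatch and the $q$-swap) by a fraction of $\lambda\beta\mathcal G_k$.
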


\begin{proof} \mbox{} \\
\textbf{Step 1: } We first write the decomposition point-wise for a fixed initial state $\hat s_0$.
Taking expectation over $\hat s_0\sim \hat p_0$ then gives the corresponding statement for
$\mathcal J_{k+1}^{\mathrm{KL}}-\mathcal J_k$.
\begin{align}
\label{eq:flag-proof-split}
\mathcal J_{k+1}^{\mathrm{KL}}-\mathcal J_k &=\mathbb{E}_{\hat s_0\sim \hat p_0}\left[\hat V_{k+1, \lambda}^{\hat \pi_{k+1}^{\text{KL}}, q_{k+1}}(\hat s_0) - \hat V_{k, \lambda}^{\hat \pi_{k}, q_{k}}(\hat s_0)\right] \notag \\
&=\mathbb{E}_{\hat s_0\sim \hat p_0}\bigg[\underbrace{
\Big(
\hat V_{k,\lambda}^{\hat\pi_{k+1}^{\mathrm{KL}},q_{k+1}}(\hat s_0)
-
\hat V_{k,\lambda}^{\hat\pi_k,q_k}(\hat s_0)
\Big)
}_{\ttcircled{1}\;\text{in }\textnormal{\Eqref{eq: mpo-obj-decomp}}}
+
\underbrace{
\Big(
\hat V_{k+1,\lambda}^{\hat\pi_{k+1}^{\mathrm{KL}},q_{k+1}}(\hat s_0)
-
\hat V_{k,\lambda}^{\hat\pi_{k+1}^{\mathrm{KL}},q_{k+1}}(\hat s_0)
\Big)
}_{\ttcircled{2}\;\text{in }\textnormal{\Eqref{eq: mpo-obj-decomp}}}\bigg].
\end{align}
The first bracket freezes the reward at $\hat r_k$, so it has exactly the same form as the objective analyzed in MPO. The second bracket is the only new term in FLAG, and it appears because the reward changes when $\tilde\pi_k$ is updated to $\tilde\pi_{k+1}$.

\textbf{Step 2: }
Fix the iteration $k$ and freeze the augmented reward $\hat r_k$. In this case, the z-MDP objective
has the same KL-regularized coordinate-ascent structure as MPO, with the substitutions
$s \mapsto \hat s$, $\pi \mapsto \hat\pi$, and $r \mapsto \hat r_k$.
Accordingly, define the frozen-reward auxiliary functional
\begin{equation}
\label{eq:flag-frozen-auxiliary-functional}
\mathcal H_k(\hat\pi,q,\theta,\hat\pi') := \mathbb E_{\hat\pi,\hat p}\left[
\sum_{t=0}^{\infty}\gamma^t \left( \mathbb E_{a_t\sim q(\cdot\mid \hat s_t)}
\big[\hat Q_{k}^{\hat\pi'}(\hat s_t,a_t)\big] -
\lambda D_{\mathrm{KL}}\!\big(q(\cdot\mid \hat s_t)\,\|\,\hat\pi(\cdot\mid \hat s_t;\theta)\big)
\right) \,\middle|\, \hat s_0 \right].
\end{equation}
where
$Q_{k}^{\hat\pi'}(\hat s,a) := \hat r_k(\hat s,a) + \gamma \mathbb E_{\hat s'\sim \hat p(\cdot\mid \hat s,a)}
\big[\hat V_{k}^{\hat\pi'}(\hat s')\big]$.
By construction, $\mathcal H_k$ is exactly the MPO auxiliary objective specialized to the tabular z-MDP
with frozen reward $\hat r_k$. Therefore, under Assumption~\ref{ass:flag-mpo-style}, the argument of
\citep[Appendix A.2]{mpo} applies to \Eqref{eq:flag-frozen-auxiliary-functional} and yields
\begin{equation}
\label{eq:flag-frozen-mpo-bound}
    \hat V_{k,\lambda}^{\hat\pi_{k+1}^{\mathrm{KL}},q_{k+1}}(\hat s_0) -
    \hat V_{k,\lambda}^{\hat\pi_k,q_k}(\hat s_0) \ge \lambda \beta \mathcal G_k - C_1\beta^2,
\end{equation}
for some constant $C_1>0$ independent of $\beta$.
Equivalently, the frozen-reward part of the FLAG update inherits the same first-order improvement
term $\lambda \beta \mathcal G_k$ as MPO, up to a second-order remainder $C_1\beta^2$.

Strictly, the MPO argument exploits the one-step optimality of the E-step
target with respect to the frozen reward $\hat r_k$, while the actual
$q_{k+1}$ in FLAG is optimal with respect to the iteration-$(k{+}1)$ reward
$\hat r_{k+1}$. The induced discrepancy is controlled by two first-order
quantities: the Q-function drift and the E-step target drift. Since
$\Delta\theta_k = \mathcal{O}(\beta)$, the reward drift
$\hat r_{k+1}-\hat r_k$ is $\mathcal{O}(\beta)$, and combining this with
Assumption~\ref{ass:flag-mpo-style} gives
\begin{equation*}
    \big\|\hat Q_{k+1}^{\hat\pi_{k+1}}-\hat Q_{k}^{\hat\pi_{k+1}}\big\|_\infty
    = \mathcal{O}(\beta),
    \qquad
    \sup_{\hat s}\,\|q_{k+1}(\cdot\mid\hat s)-q_k(\cdot\mid\hat s)\|_1
    = \mathcal{O}(\beta).
\end{equation*}
Transferring the optimality of $q_{k+1}$ from the iteration-$(k{+}1)$
objective to the frozen-reward objective then introduces a single cross-term
of these two quantities,
\begin{equation*}
    \big(\mathbb{E}_{q_{k+1}}-\mathbb{E}_{q_k}\big)
    \big[\hat Q_{k}^{\hat\pi_{k+1}}-\hat Q_{k+1}^{\hat\pi_{k+1}}\big]
    = \mathcal{O}(\beta^2),
\end{equation*}
which is absorbed into the constant $C_1$ in \Eqref{eq:flag-frozen-mpo-bound}.

\textbf{Step 3: }
Due to the moving base flow policy, the reward-drift term in \Eqref{eq:flag-proof-split} is given by
\begin{align}
\label{eq:app-reward-drift-expand}
&\hat V_{k+1,\lambda}^{\hat\pi_{k+1}^{\mathrm{KL}},q_{k+1}}(\hat s_0)
-
\hat V_{k,\lambda}^{\hat\pi_{k+1}^{\mathrm{KL}},q_{k+1}}(\hat s_0)
=
\alpha\,
\mathbb E_{q_{k+1}} \left[
\sum_{t=0}^{\infty}\gamma^t
\Big(
\log \tilde\pi_k(a_t\mid s_t)-\log \tilde\pi_{k+1}^{\text{KL}}(a_t\mid s_t)
\Big)
\,\middle|\,\hat s_0
\right].
\end{align}
We define this quantity as the reward-drift term $\Delta_k^{\mathrm{drift}}$.

Let the parameter update be $\Delta\theta_k := \theta_{k+1}^{\mathrm{KL}}-\theta_k = -\beta \nabla_\theta h(\hat\pi_k,q_k,\theta_k)$. By Taylor expanding $\log \tilde\pi_{\theta_{k+1}^{\mathrm{KL}}}(a\mid s)$ around $\theta_k$, we obtain:
\begin{align}
\Delta_k^{\mathrm{drift}}
&=
-\alpha\,
\mathbb E_{q_{k+1}} \left[
\sum_{t=0}^{\infty}\gamma^t
\nabla_\theta \log \tilde\pi_{\theta_k}(a_t\mid s_t)
\,\middle|\,\hat s_0
\right]^\top
\Delta\theta_k
+
\mathcal O(\|\Delta\theta_k\|^2)
\notag\\
&=
\alpha\beta\,
\mathbb E_{q_{k+1}} \left[
\sum_{t=0}^{\infty}\gamma^t
\nabla_\theta \log \tilde\pi_{\theta_k}(a_t\mid s_t)
\,\middle|\,\hat s_0
\right]^\top
\nabla_\theta h(\hat\pi_k,q_k,\theta_k)
+
\mathcal O(\beta^2).
\label{eq:flag-drift-first-order-qkp1}
\end{align}

By Assumption~\ref{ass:flag-mpo-style}, we have $\sup_{\hat s} \|q_{k+1}(\cdot\mid \hat s)-q_k(\cdot\mid \hat s)\|_1 = \mathcal O(\beta)$. By Assumption~\ref{ass:flag-mpo-style} and the boundedness of the base-flow
score, replacing the discounted score expectation under $q_{k+1}$ by that
under $q_k$ changes the first-order Taylor coefficient by $\mathcal O(\beta)$.
Since this coefficient is multiplied by $\beta$, the resulting error is
$\mathcal O(\beta^2)$.
\begin{equation}
\label{eq:flag-drift-rigorous-bound}
\Delta_k^{\mathrm{drift}} = \alpha\beta \big\langle \mathsf G_k^{\tilde\pi},
\nabla_\theta h(\hat\pi_k,q_k,\theta_k)
\big\rangle + \mathcal O(\beta^2).
\end{equation}
Plugging the Assumption~\ref{ass:flag-drift-alignment} into \Eqref{eq:flag-drift-rigorous-bound}, the final bound on the reward-drift magnitude is
\begin{equation}
\label{eq:flag-drift-aligned-bound}
\big|\Delta_k^{\mathrm{drift}}\big|
\le \alpha C_\Sigma \sigma_k^2\beta \mathcal G_k + C_d\beta^2
\end{equation}
for some constant $C_d>0$ independent of $\beta$.

Substituting the reward-freeze improvement (\ttcircled{1} in \Eqref{eq:flag-frozen-mpo-bound}) and reward-drift bound (\ttcircled{2} in \Eqref{eq:flag-drift-aligned-bound}) into the objective decomposition (\ttcircled{1}$\;+\;$\ttcircled{2} in \Eqref{eq:flag-proof-split}), we conclude:
\begin{align}
\mathcal J_{k+1}^{\mathrm{KL}}-\mathcal J_k
& \geq \lambda \beta \mathcal G_k - C_1\beta^2 - \big|\Delta_k^{\mathrm{drift}}\big| \notag \\
&\geq \lambda \beta \mathcal G_k - C_1\beta^2
-\big( \alpha C_\Sigma \sigma_k^2\beta \mathcal G_k + C_d\beta^2\big) \notag\\
&= (\lambda-\alpha C_\Sigma \sigma_k^2)\beta \mathcal G_k - (C_1+C_d)\beta^2.
\label{eq:flag-full-monotone-final}
\end{align}
Hence, if
\begin{equation}
\label{eq:flag-sufficient-gain}
\lambda > \alpha C_\Sigma \sigma_k^2,
\end{equation}
then, for a sufficiently small step size $\beta$, the overall objective is nondecreasing:
\begin{equation*}
    \mathcal J_{k+1}^{\mathrm{KL}} \ge \mathcal J_k.
\end{equation*}
\end{proof}

\begin{figure*}[t]
\centering
\input{figures/tikz/mpo}
\caption{
Proof roadmap for the MPO-style monotonic improvement of FLAG.
The objective gap is decomposed into the standard MPO-style improvement term
(\protect\ttcircled{1} in \Eqref{eq: mpo-obj-decomp}) and the FLAG-specific reward-drift term (\protect\ttcircled{2} in \Eqref{eq: mpo-obj-decomp}).
Combining the two yields Proposition~\ref{prop:flag-full-monotone-exact},
and adding the practical CFM projection error yields
Lemma~\ref{lem:flag-full-monotone-cfm}.
The bottom row shows two practical implementation choices that ensure the
sufficient conditions: hyperparameter scaling
$\lambda = \alpha\lambda_{\mathrm{ref}}$ and the guidance buffer for keeping
$\epsilon_k^{\mathrm{proj}}$ small
(Section~\ref{subsection: implementation-details}).
}
\label{fig:flag-mpo-flow}
\end{figure*}

\paragraph{Practical CFM update.}
Proposition~\ref{prop:flag-full-monotone-exact} analyzes the KL-based M-step.
In practice, however, FLAG distills the resulting
target into the base flow policy via the CFM loss (\Eqref{eq: M-step-CFM}).
To bound the gap, we relate the practical update to the KL update through an excess projection error,
rather than claiming that the CFM loss itself decreases $h(\hat \pi_k, q_k, \theta_k)$.

\begin{assumption}[Approximate realization of the ideal KL update by CFM]
\label{ass:flag-cfm-proj}
There exists $\epsilon_k^{\mathrm{proj}} \ge 0$ such that the practical CFM update satisfies
\begin{equation}
\label{eq:flag-cfm-proj-assumption}
\mathcal J_{k+1}^{\mathrm{CFM}} \ge \mathcal J_{k+1}^{\mathrm{KL}} - \lambda \epsilon_k^{\mathrm{proj}},
\end{equation}
where
\begin{equation*}
    \mathcal J_{k+1}^{\mathrm{CFM}} :=  \mathbb{E}_{\hat s_0\sim \hat p_0} \left[\hat V_{k+1,\lambda}^{\hat\pi_{k+1}^{\mathrm{CFM}},q_{k+1}}(\hat s_0)\right],
\qquad \mathcal J_{k+1}^{\mathrm{KL}} :=  \mathbb{E}_{\hat s_0\sim \hat p_0}\left[ \hat V_{k+1,\lambda}^{\hat\pi_{k+1}^{\mathrm{KL}},q_{k+1}}(\hat s_0) \right].
\end{equation*}

\end{assumption}

\begin{lemma}[Approximate monotone improvement under the practical CFM update]
\label{lem:flag-full-monotone-cfm}
Under Assumptions~\ref{ass:flag-mpo-style},~\ref{ass:flag-drift-alignment} and~\ref{ass:flag-cfm-proj}, there exists a constant $C>0$,
independent of $\beta$, such that
\begin{equation}
\label{eq:flag-full-monotone-cfm}
\mathcal J_{k+1}^{\mathrm{CFM}}
\ge \mathcal J_k + (\lambda-\alpha C_\Sigma \sigma_k^2)\beta \mathcal G_k -
C\beta^2 - \lambda \epsilon_k^{\mathrm{proj}}.
\end{equation}
In particular, if
\begin{equation}
\label{eq:flag-cfm-monotone-condition}
(\lambda-\alpha C_\Sigma \sigma_k^2)\beta \mathcal G_k
\;\ge\;
C\beta^2+\lambda \epsilon_k^{\mathrm{proj}},
\end{equation}
then
\begin{equation*}
    \mathcal J_{k+1}^{\mathrm{CFM}} \ge \mathcal J_k.
\end{equation*}
\end{lemma}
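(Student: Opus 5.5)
The plan is to chain Proposition~\ref{prop:flag-full-monotone-exact} with Assumption~\ref{ass:flag-cfm-proj}; no new analysis of the CFM loss itself is needed. First, I will invoke Proposition~\ref{prop:flag-full-monotone-exact}, which applies under Assumptions~\ref{ass:flag-mpo-style} and~\ref{ass:flag-drift-alignment}. It gives a constant $C>0$, independent of $\beta$, with
\begin{equation*}
\mathcal J_{k+1}^{\mathrm{KL}} \ge \mathcal J_k + (\lambda-\alpha C_\Sigma\sigma_k^2)\beta\mathcal G_k - C\beta^2 .
\end{equation*}
I will keep the same $C$; it was built as $C_1+C_d$ from the frozen-reward MPO remainder and the reward-drift remainder.

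Second, I will apply Assumption~\ref{ass:flag-cfm-proj}, $\mathcal J_{k+1}^{\mathrm{CFM}} \ge \mathcal J_{k+1}^{\mathrm{KL}} - \lambda\epsilon_k^{\mathrm{proj}}$. Before combining, I need to check that both quantities are defined consistently. Both are expectations over $\hat s_0\sim\hat p_0$ of the regularized value $\hat V_{k+1,\lambda}^{\cdot,q_{k+1}}$. Both use the same E-step target $q_{k+1}$ and the same reward $\hat r_{k+1}$; they differ only in the reference policy. Hence the two inequalities chain directly, giving
\begin{equation*}
\mathcal J_{k+1}^{\mathrm{CFM}} \ge \mathcal J_k + (\lambda-\alpha C_\Sigma\sigma_k^2)\beta\mathcal G_k - C\beta^2 - \lambda\epsilon_k^{\mathrm{proj}},
\end{equation*}
which is \Eqref{eq:flag-full-monotone-cfm}.

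Third, the ``in particular'' clause follows by rearranging. Under condition \Eqref{eq:flag-cfm-monotone-condition}, the last three terms sum to something nonnegative, so $\mathcal J_{k+1}^{\mathrm{CFM}}\ge\mathcal J_k$. The same bound read with $\mathcal O(\beta^2)$ in place of $-C\beta^2$ also yields the inequality stated in Theorem~\ref{thm:flag-monotonic-improvement}.

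The only delicate point is one of consistency. In the CFM case, the next E-step target $q_{k+1}$ and the reward $\hat r_{k+1}$ are computed from $\theta_{k+1}^{\mathrm{CFM}}$ rather than from $\theta_{k+1}^{\mathrm{KL}}$. I will handle this by reading Assumption~\ref{ass:flag-cfm-proj} as absorbing all such discrepancies into $\epsilon_k^{\mathrm{proj}}$, since it is stated directly at the level of the objective values. With that reading no further estimate is required; otherwise one would need a Lipschitz bound of the value in $\theta$, which the assumption is designed to sidestep.
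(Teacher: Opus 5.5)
Your proposal is correct and matches the paper's proof. Like the paper, you chain Assumption~\ref{ass:flag-cfm-proj} with Proposition~\ref{prop:flag-full-monotone-exact}, and you get the ``in particular'' clause by rearranging. Your remark about the mismatch is also a fair reading that the paper leaves implicit: $q_{k+1}$ and $\hat r_{k+1}$ really depend on $\theta_{k+1}^{\mathrm{CFM}}$ rather than $\theta_{k+1}^{\mathrm{KL}}$, and this is absorbed because the assumption is stated at the level of objective values.
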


\begin{proof}
By Assumption~\ref{ass:flag-cfm-proj},
\begin{equation*}
    \mathcal J_{k+1}^{\mathrm{CFM}} \ge \mathcal J_{k+1}^{\mathrm{KL}} -
    \lambda \epsilon_k^{\mathrm{proj}}.
\end{equation*}
Applying Proposition~\ref{prop:flag-full-monotone-exact} to the ideal KL iterate yields
\begin{equation*}
    \mathcal J_{k+1}^{\mathrm{KL}} \ge \mathcal J_k + (\lambda-\alpha C_\Sigma \sigma_k^2)\beta \mathcal G_k - C\beta^2.
\end{equation*}
Combining the two inequalities proves \Eqref{eq:flag-full-monotone-cfm}. The last statement
follows immediately from \Eqref{eq:flag-cfm-monotone-condition}.
\end{proof}

\begin{remark*}
\textbf{Remark: Practical implications of the bound.}
The monotonic improvement guarantee in Proposition~\ref{prop:flag-full-monotone-exact} and Lemma~\ref{lem:flag-full-monotone-cfm} carries 
two practical implications, both naturally satisfied by FLAG's design 
(see~\Figref{fig:flag-mpo-flow}). 
First, using the effective temperature $\lambda = \alpha\lambda_{\mathrm{ref}}$ 
(Section~\ref{subsection: implementation-details}), the sufficient condition reduces to 
$\lambda_{\mathrm{ref}} > C_\Sigma \sigma_k^2$, which holds because covariance scheduling 
or clipping keeps $\sigma_k^2$ small in practice. Second, the projection error 
$\epsilon_k^{\mathrm{proj}}$ is kept modest by the guidance buffer, which caches recently 
improved targets and provides dense, repeated supervision for the flow policy across 
off-policy updates.
\end{remark*}

%% file: figures/tikz/sac.tex
\begin{tikzpicture}[
    font=\footnotesize,
    >={Stealth[length=2mm, width=1.5mm]},
    sac box/.style={
        rectangle, rounded corners=2pt,
        fill=myblue!8, draw=myblue!60, line width=0.4pt,
        minimum width=44mm, minimum height=16mm,
        inner sep=2mm, align=center
    },
    sac obj/.style={
        rectangle, rounded corners=2pt,
        fill=mygray!7, draw=mygray!50, line width=0.4pt,
        minimum width=140mm, minimum height=14mm,
        inner sep=2mm, align=center
    },
    bptt box/.style={
        rectangle, rounded corners=2pt,
        fill=myorange!10, draw=myorange!60, line width=0.4pt,
        minimum width=44mm, minimum height=16mm,
        inner sep=2mm, align=center
    },
    flag box/.style={
        rectangle, rounded corners=2pt,
        fill=myteal!10, draw=myteal!70, line width=0.4pt,
        minimum width=44mm, minimum height=16mm,
        inner sep=2mm, align=center
    },
    flag obj/.style={
        rectangle, rounded corners=2pt,
        fill=mygray!7, draw=mygray!50, line width=0.4pt,
        minimum width=140mm, minimum height=16mm,
        inner sep=2mm, align=center
    },
    cfm box/.style={
        rectangle, rounded corners=2pt,
        fill=myamber!12, draw=myamber!70, line width=0.4pt,
        minimum width=44mm, minimum height=16mm,
        inner sep=2mm, align=center
    },
    just box/.style={
        rectangle, rounded corners=2pt,
        fill=mypurple!8, draw=mypurple!60, line width=0.4pt,
        minimum width=68mm, minimum height=20mm,
        inner sep=2mm, align=center
    },
    header/.style={font=\footnotesize\bfseries, anchor=west},
    arr/.style={->, thick, color=black!70},
    sep line/.style={dashed, color=black!30, line width=0.4pt},
]

\node[header] (sac header) at (0, 0)
    {SAC: parametric route via reparameterization};

\node[sac obj, below=2mm of sac header.south west, anchor=north west]
    (sac obj) {%
    \textbf{State-wise Objective}\\[2pt]
    $\displaystyle \mathcal J_\text{MaxEnt}(\theta)= \mathbb{E}_{s\sim \rho_{\tilde \pi}}\left[\mathbb{E}_{a\sim\tilde\pi_\theta(\cdot|s)}\!\left[ Q^{\tilde\pi}(s,a) - \alpha \log\tilde\pi_\theta(a\mid s) \right]\right]$
};

\node[sac box, below=4mm of sac obj.south west, anchor=north west]
    (sac1) {%
    \textbf{Reparameterization}\\[2pt]
    $z\sim\mathcal{N}(0, I), \quad a = T_{\theta_k}(s, z)$
};

\node[sac box, right=4mm of sac1]
    (sac2) {%
    \textbf{First-order action grad}\\[2pt]
    $\nabla_a \big(Q^{\tilde\pi}(s,a) - \alpha\log\tilde\pi_{\theta_k}(a\cmid s)\big)$
};

\node[bptt box, right=4mm of sac2]
    (sac3) {%
    \textbf{Chain rule}\\[2pt]
    $\partial a / \partial \theta$,~
    \textit{requires BPTT} \\
    \textcolor{black!60}{\scriptsize(Section~\ref{app: flow-maxent-rl})}
};

\draw[arr] (sac1) -- (sac2);
\draw[arr] (sac2) -- (sac3);

\draw[sep line] ([yshift=-3mm]sac1.south west) -- ([yshift=-3mm]sac3.south east);

\node[header, below=4mm of sac1.south west, anchor=north west]
    (flag header) {FLAG: non-parametric route via EM algorithm};

\node[flag obj, below=2mm of flag header.south west, anchor=north west]
    (flag obj) {%
    \textbf{Objective (EM variational lower bound)}\\[2pt]
    $\displaystyle \mathcal{J}_{\text{EM}}(q,\xi) = \mathbb{E}_q\!\left[ \sum\nolimits_{t=0}^\infty \gamma^t \big( r_t - \alpha \log\tilde\pi_\theta(a_t|s_t) \big) / \lambda \right] - D_{\text{KL}}\big( p_q(\hat\tau) \,\|\, p_{\hat\pi}(\hat\tau) \big)$
};

\node[flag box, below=4mm of flag obj.south west, anchor=north west]
    (flag1) {%
    \textbf{Sample on $z$-MDP}\\[2pt]
    $(s, z) \sim \hat\rho_{\hat\pi}$\\
    $a = T_{\theta_k}(s, z) + \delta$
};

\node[flag box, right=4mm of flag1]
    (flag2) {%
    \textbf{E-step + Moment matching}\\[2pt]
    $\begin{aligned}
        &q_k \propto \hat\pi \exp(f_{\hat s,k}/\lambda) && \!\!\textnormal{\textcolor{black!60}{\scriptsize(\Eqref{eq: e-step-target})}} \\
        &\mu_k^*(\hat s) = \mathbb{E}_{q_k}[a] && \!\!\textnormal{\textcolor{black!60}{\scriptsize(\Eqref{eq:SNIS-moment-matching})}}
    \end{aligned}$
};

\node[cfm box, right=4mm of flag2]
    (flag3) {%
    \textbf{CFM distillation (M-step)}\\[2pt]
    $\tilde\pi_\theta$ trained to produce $\mu_k^*(\hat s)$ \\ 
    \textit{BPTT-free}~~\textcolor{black!60}{\scriptsize(\Eqref{eq: M-step-CFM})}
};

\draw[arr] (flag1) -- (flag2);
\draw[arr] (flag2) -- (flag3);
\draw[sep line] ([yshift=-3mm]flag1.south west) -- ([yshift=-3mm]flag3.south east);

\node[header, below=4mm of flag1.south west, anchor=north west] 
    (just header) {How FLAG inherits SAC's soft policy improvement};
    
\node[just box, below=2mm of just header.south west, anchor=north west] 
    (just1) {%
    \textbf{Energy alignment}\\[2pt]
    $Q^\pi \approx Q^{\tilde\pi}$ up to $\mathcal{O}(\sqrt{\mathrm{tr}(\Sigma)})$\\
    FLAG energy aligns with SAC $\tilde\pi$-based objective \\
    \textcolor{black!60}{\scriptsize(Section~\ref{app:Q-ftn-Sigma-bound})}
};

\node[just box, right=4mm of just1]
    (just2) {%
    \textbf{E-step + M-step $\approx$ Zeroth-Order grad}\\[2pt]
    $\mu_k^*(\hat s) - \mu_k = \Sigma_k \nabla_\mu g_k$ approximates\\
    the Zeroth-Order grad of 
    $\nabla_a (Q^{\tilde\pi} - \alpha\log\tilde\pi)$\\
    \textcolor{black!60}{\scriptsize(Section~\ref{app:SNIS-and-zero-order})}
};
\end{tikzpicture}

%% file: figures/tikz/mpo.tex
            

\begin{tikzpicture}[
    font=\footnotesize,
    >={Stealth[length=2mm, width=1.5mm]},
    mpo badge/.style={
        circle,
        draw=black!70,
        fill=white,
        line width=0.35pt,
        inner sep=0.3pt,
        minimum size=1mm,
        font=\scriptsize,
        align=center
    },
    mpo badge2/.style={
        circle,
        draw=black!70,
        fill=white,
        line width=0.35pt, %
        inner sep=0.6pt,   %
        minimum size=2mm,  %
        font=\normalsize,  %
        align=center
    },
    flag obj/.style={
        rectangle, rounded corners=2pt,
        fill=mygray!7, draw=mygray!50, line width=0.4pt,
        minimum width=140mm, minimum height=16mm,
        inner sep=2mm, align=center
    },
    flag obj2/.style={
        rectangle, rounded corners=2pt,
        fill=myteal!10, draw=myteal!70, line width=0.4pt,
        minimum width=140mm, minimum height=16mm,
        inner sep=2mm, align=center
    },
    flag half/.style={
        rectangle, rounded corners=2pt,
        fill=teal!10, draw=teal!70, line width=0.4pt,
        minimum width=68mm, minimum height=24mm,
        inner sep=2mm, align=center
    },
    drift half1/.style={
        rectangle, rounded corners=2pt,
        fill=myorange!10, draw=myorange!60, line width=0.4pt,
        minimum width=68mm, minimum height=18mm,
        inner sep=2mm, align=center
    },
    drift half2/.style={
        rectangle, rounded corners=2pt,
        fill=myblue!8, draw=myblue!60, line width=0.4pt,
        minimum width=68mm, minimum height=18mm,
        inner sep=2mm, align=center
    },
    cfm obj/.style={
        rectangle, rounded corners=2pt,
        fill=myamber!12, draw=myamber!70, line width=0.4pt,
        minimum width=140mm, minimum height=15mm,
        inner sep=2mm, align=center
    },
    practical box/.style={
        rectangle, rounded corners=2pt,
        fill=mypurple!8, draw=mypurple!60, line width=0.4pt,
        minimum width=68mm, minimum height=22mm,
        inner sep=2mm, align=center
    },
    header/.style={font=\footnotesize\bfseries, anchor=west},
    arr/.style={->, thick, color=black!70},
    sep line/.style={dashed, color=black!30, line width=0.4pt},
]

\node[header] (header) at (0,0)
    {FLAG: MPO-style monotonic improvement};

\node[flag obj, below=2mm of header.south west, anchor=north west]
    (decomp) {%
    \textbf{Objective decomposition}\\[2pt]
    $\displaystyle
    \mathcal J_{k+1}^{\mathrm{KL}}-\mathcal J_k
    = \underbrace{ \Big(
    {\color{myorange}\hat V_{k,\lambda}^{\hat\pi_{k+1}^{\mathrm{KL}},q_{k+1}} -
    \hat V_{k,\lambda}^{\hat\pi_k,q_k} \Big)}}_{%
    \tikz[baseline=-1.2ex]\node[mpo badge] {1}; \;\textcolor{black!60}{\scriptsize\text{in }\textnormal{\Eqref{eq: mpo-obj-decomp}}}%
    } + \underbrace{ \Big(
    {\color{myblue}\hat V_{k+1,\lambda}^{\hat\pi_{k+1}^{\mathrm{KL}},q_{k+1}} - \hat V_{k,\lambda}^{\hat\pi_{k+1}^{\mathrm{KL}},q_{k+1}} \Big)}}_
    {%
    \tikz[baseline=-1.2ex]\node[mpo badge] {2}; \;\textcolor{black!60}{\scriptsize\text{in }\textnormal{\Eqref{eq: mpo-obj-decomp}}}%
    }
    $
};

\node[drift half1, below=5mm of decomp.south west, anchor=north west]
    (term1) {%
    \textbf{MPO-style improvement}\\[1ex]
    Reward index fixed at \(k\)\\[1.5ex]
    \tikz[baseline=-1.7ex]\node[mpo badge2] {1};
    $\displaystyle
    \;\ge\;
    \lambda\beta\mathcal G_k
    -
    C_1\beta^2
    $
};

\node[drift half2, right=4mm of term1]
    (term2) {%
    \textbf{Reward drift}\\[1ex]
    Reward index moves \(k\to k+1\)\\[1.5ex]
    $\displaystyle
    \Big|\;\tikz[baseline=-0.8ex]\node[mpo badge2] {2}; \;\Big| \;\le\;
    \alpha C_\Sigma\Lambda_k\beta\mathcal G_k
    +
    C_d\beta^2
    $
};

\draw[arr] (term1.north |- decomp.south) -- (term1.north);
\draw[arr] (term2.north |- decomp.south) -- (term2.north);

\node[flag obj2, below=5mm of term1.south west, anchor=north west]
    (prop) {%
    \textbf{Combine 
    \tikz[baseline=-1.4ex]\node[mpo badge] {1}; and
    \tikz[baseline=-1.4ex]\node[mpo badge] {2};
    \textcolor{black!60}{\scriptsize(Proposition~\ref{prop:flag-full-monotone-exact})}} \\[1ex]
    $\displaystyle
    \mathcal J_{k+1}^{\mathrm{KL}}
    \ge
    \mathcal J_k
    +
    (\lambda-\alpha C_\Sigma\Lambda_k)\beta\mathcal G_k
    -
    C\beta^2,
    \qquad
    C:=C_1+C_d.
    $\\[2pt]
    If \(\lambda>\alpha C_\Sigma\Lambda_k\), then for sufficiently small \(\beta\),
    \(\mathcal J_{k+1}^{\mathrm{KL}}\ge \mathcal J_k\).
};

\draw[arr] (term1.south) -- (term1.south |- prop.north);
\draw[arr] (term2.south) -- (term2.south |- prop.north);

\node[cfm obj, below=5mm of prop.south west, anchor=north west]
    (cfm) {%
    \textbf{Add practical CFM projection error 
    \textcolor{black!60}{\scriptsize(Lemma~\ref{lem:flag-full-monotone-cfm})}}\\[2pt]
    $\displaystyle
    \mathcal J_{k+1}^{\mathrm{CFM}}
    \ge
    \mathcal J_k
    +
    (\lambda-\alpha C_\Sigma\Lambda_k)\beta\mathcal G_k
    -
    C\beta^2
    -
    \lambda\epsilon_k^{\mathrm{proj}}.
    $
};

\draw[arr] (prop.south) -- (cfm.north);
\draw[sep line] ([yshift=-3mm]cfm.south west) -- ([yshift=-3mm]cfm.south east);

\node[header, below=4mm of cfm.south west, anchor=north west]
    (impl header) {FLAG: Practical implementation
    \textcolor{black!60}{\scriptsize(Section~\ref{subsection: implementation-details})}};

\node[practical box, below=5mm of impl header.south west, anchor=north west]
    (impl1) {%
    \textbf{Hyperparameter scaling}\\[2pt]
    Set $\lambda = \alpha \lambda_{\mathrm{ref}}$, then the sufficient condition reduces to\\[2pt]
    $\lambda_{\mathrm{ref}} > C_\Sigma \Lambda_k$
};
\node[practical box, right=4mm of impl1]
    (impl2) {%
    \textbf{Guidance buffer}\\[2pt]
    Caches recently improved targets and provides\\
    repeated supervision, keeping $\epsilon_k^{\mathrm{proj}}$ small
};
\coordinate (split) at ($(cfm.south) + (0, -9mm)$);
\draw[arr] (cfm.south) -- (split);
\draw[arr] (split) -| (impl1.north);
\draw[arr] (split) -| (impl2.north);
\end{tikzpicture}

%% file: sections/appendix/details.tex
\section{Implementation Details} \label{app:implementation_details}

\subsection{Action scaling and SNIS details}

We map pre--action samples $u$ to bounded actions $a \in [-1,1]^{|\mathcal{A}|}$ via an element--wise $\tanh$, where $|\mathcal{A}|$ is the cardinality of the action space.
Let us denote the pre--action sampling distribution at iteration $k$ as $r_k(u \mid \hat{s})$, which is a Gaussian centered on the flow policy output $\mu(\hat{s};\theta_k)$ and has a scheduled covariance $\Sigma_k$.
To apply change of variables to $r_k$, we need the determinant of the Jacobian (of the element--wise $\tanh$), $|\det{J_{\tanh}(u)}| = \prod_{i=1}^{|\mathcal{A}|}(1 - a_i^2)$ where $a = \tanh(u)$.
Applying change of variables leads to the following:
\begin{equation}
\label{eq: app-pushforward}
\hat \pi_k(a\mid \hat s) = \frac{r_k(u\mid \hat s)}{|\det J_{\tanh}(u)|},
\qquad
\log \hat \pi_k(a\mid \hat s)\;=\;\log r_k(u\mid \hat s)-\sum_{i=1}^D \log\!\big(1-a_i^2\big).
\end{equation}
We form the E-step target distribution following \Eqref{eq: e-step-target}, only difference being the parameterization where we do not parameterize the covariance $\Sigma_k$, thus $\xi_k=\theta_k$.
Pulling back to pre--action space via \Eqref{eq: app-pushforward} yields
\begin{equation}
\label{eq:pullback}
    q_k(u\mid \hat s) =
    q_k(\tanh^{-1}(a) \mid \hat{s}) \, |\det J_{\tanh}(u)|
    \propto r_k(u \mid \hat s)\, \exp \left( \frac{f_{\hat s, k}(\tanh(u))}{\alpha_k\lambda_\text{ref}} \right)
\end{equation}
Using $r_k$ as the proposal distribution, we define unnormalized importance weight $w$ and subsequently self-normalized importance weight $\bar{w}$ with $N$ pre--action samples.
The moment matching in the pre--action space ($u$--space) is estimated by self-normalized importance sampling:
\begin{equation}
    w(u) :=
    \exp \left( \frac{f_{\hat s, k}(\tanh(u))}{\alpha_k\lambda_\text{ref}}\right), \quad
    \bar w_i=\frac{w(u_i)}{\sum_{j=1}^N w(u_j)}, \quad
    \mu^*(\hat s)=\sum_{i=1}^{N}\bar w_i u_i .
\end{equation}

\subsection{Critic networks}

We adapt the distributional Q-function from \citep{distributionalrl} to account for the entropy of the policy.
The TD-target $y$ in \Eqref{eq: cross-entropy-soft-bellman-op} is computed by
\begin{equation}
    y = r(s, a) +
    \gamma \mathbb{E}_{a'\sim \pi(\cdot\mid s')}\Bigg[\sum_{i=0}^{b-1} \left( Q_{\min} + \left(\frac{Q_{\max}-Q_{\min}}{b - 1}\right) i \right) \cdot p_i -\alpha \log \tilde \pi_\theta(a'\mid s') \Bigg],
\end{equation}
where $b$ is the number of bins and $p_i$ denotes the probability of the $i$-th bins which is the output of the critic network. 
After calculating the target we project it to discrete bins, $\hat{y} = \mathtt{two\_hot}(y)$.
The critic minimizes the following loss, which was proposed in \citep{dime}.
\begin{equation}
\label{eq: critic-loss}
    \mathcal L_Q(\phi)=-\sum_{i=1}^{b} \hat y\log p_i - 0.005\sum_{i=1}^{b} p_i\log p_i,
\end{equation}
Following \citep{nauman2024theory}, we use the \emph{mean} of the two TD-targets from two Q-function instead of \emph{min}.

\subsection{Variance Reduction in Hutchinson Trace Estimation} \label{subsection:hutchinson-variance-reduction}

\paragraph{Rademacher Distribution.}

We sample $\epsilon$ from the Rademacher distribution, which typically yields a lower variance estimator \citep{hutchinsontraceestimator,avron2011randomized}.

\paragraph{Common Random Numbers (CRN).}

When performing the estimation in Eq.~\ref{eq:hutch_logpi}, we evaluate the difference in log-probabilities between an action $a$ and its perturbed neighbors $a + \delta$ to reduce the variance of the estimate compared to the case of estimating them individually.
Specifically, the variance of the paired difference estimate can be minimized by employing the \textit{Common Random Numbers} (CRN) technique \citep{carlo2001monte}.
This technique uses the exact same noise vector $\epsilon$ for both $a$ and $a + \delta$, inducing a strong coupling between the two stochastic estimates.

Let $\hat{L}(a; \epsilon) = \epsilon^\top \nabla u(a) \epsilon$ denote the stochastic estimator for the trace term at $a$ using a noise vector $\epsilon$, where $u$ is a vector field. 
The variance of the paired difference estimator is given by:
\begin{equation}
    \mathrm{Var}(\hat{L}(a + \delta; \epsilon) - \hat{L}(a; \epsilon)) =
    \mathrm{Var}(\hat{L}(a + \delta; \epsilon)) + \mathrm{Var}(\hat{L}(a; \epsilon)) - 2 \, \mathrm{Cov}(\hat{L}(a+\delta; \epsilon), \hat{L}(a; \epsilon)).
\end{equation}
Assuming that the Jacobian $\nabla u(a)$ varies smoothly when $\delta$ is small, $\hat{L}(a + \delta; \epsilon)$ and $\hat{L}(a; \epsilon)$ are positively correlated under the shared $\epsilon$.
Consequently, the large covariance term cancels a substantial portion of the individual variances, yielding a lower-variance estimate of the difference.

\paragraph{Application to Weight Calculation.}

To leverage the variance-reducing property of CRN, we formulate the importance sampling weights in terms of \emph{differences}.
We hereafter omit the iteration index $k$ for brevity.
Ideally, the updated action $\mu^*$ is computed as the weighted average of candidate actions using weights $w(a) = \exp( f_{\hat s}(a))$. 
We observe that the SNIS ratio is invariant to a constant baseline shift.
Dividing $w(a)$ by $\exp(f_{\hat s}(\mu))$, we define the baseline subtracted weight $\tilde{w}(a) := \exp(f_{\hat{s}}(a) - f_{\hat{s}}(\mu))$:
\begin{equation}
    \mu^* =
    \sum_{i=1}^{N} \frac{\exp \big( f_{\hat s}(a_i)\big)\cdot a_i}{\sum_{j=1}^{N}\exp \big( f_{\hat s}(a_j)\big) } =
    \sum_{i=1}^{N} \frac{\tilde w(a_i)\cdot a_i}{\sum_{j=1}^{N} \tilde w(a_j)}.
\end{equation}
$\exp(f_{\hat{s}}(a) - f_{\hat{s}}(\mu))$ contains a log-probability difference term $\log \tilde{\pi}(a \mid s) - \log \tilde{\pi}(\mu \mid s)$, allowing us to directly apply the CRN technique described above.

\subsection{Learning the Covariance Network} \label{subsection:covariance_learning}

Covariance $\Sigma_\psi$ in \Eqref{eq:local-policy} is learnable via second moment matching, leveraging the target mean estimates from \ref{subsection:hutchinson-variance-reduction}.
We assume diagonal covariance, where a neural network predicts $\sigma = f_\psi(s) \in \mathbb{R}^{|\mathcal{A}|}$ and equivalently $\Sigma = \sigma^2 I \in \mathbb{R}^{|\mathcal{A}| \times |\mathcal{A}|}$.
The target variance of the $d$-th action as the weighted second moment is given by
\begin{equation}
    (\sigma_d^*)^2 \approx \sum_{i=1}^{N} \frac{\bar w(a_i)\, (a_{i,d} - \mu_d^*)^2}{\sum_{j=1}^{N}\bar w(a_j)}.
\end{equation}

\paragraph{Preventing Deterministic Collapse via Entropy Regularization.}

When the temperature $\alpha$ becomes small, the normalized weights $\bar{w}(a_i)$ can collapse toward a one-hot distribution, causing $\sigma_i \to 0$.
This drives the covariance network toward a nearly deterministic policy and eliminates local exploration.
To counteract this effect, we add an explicit entropy-preserving force based on the entropy of the local Gaussian policy.
For $\Sigma = \mathrm{diag}(\sigma_1^2,\dots,\sigma_{|\mathcal{A}|}^2)$, the entropy of the local policy is $\mathcal{H}_\text{local} = \sum_{i=1}^{|\mathcal{A}|} \log \sigma_i + \text{const.}$, whose derivative w.r.t. $\sigma_i$ is $1 / \sigma_i$.
Combining this entropy regularization term with the term induced by moment matching yields the following update:
\begin{equation}
    \nabla_{\sigma_i} \mathcal{J}_{\text{total}} \approx
    \underbrace{\frac{(\sigma_i^*)^2 - \sigma_i^2}{\sigma_i^3}}_{\text{second moment matching}} +
    \underbrace{\beta \frac{1}{\sigma_i}}_{\text{entropy regularization}}.
\end{equation}
The equilibrium yields $\sigma_i^2 = \frac{(\sigma_i^*)^2}{1-\beta}$ where larger $\beta \in [0, 1)$ enlarges the target variance relative to the pure moment matching and helps prevent variance collapse.

\paragraph{Log-Space Target Matching.}

In practice, the covariance network predicts the log-standard deviation $\varsigma_i = \log \sigma_i$. Applying the chain rule gives
\begin{equation}
    \frac{\partial \mathcal{J}_{\text{total}}}{\partial \varsigma_i}
    =
    \frac{\partial \mathcal{J}_{\text{total}}}{\partial \sigma_i}
    \frac{\partial \sigma_i}{\partial \varsigma_i}
    =
    \frac{(\sigma_i^*)^2}{\sigma_i^2} - 1 + \beta.
    \label{eq:logstd_gradient}
\end{equation}
\Eqref{eq:logstd_gradient} identifies the desired equilibrium, but directly optimizing this asymmetric gradient can be numerically unstable in bounded parameterizations of $\varsigma_i$. Instead, we construct a surrogate loss whose minimizer matches the same equilibrium in log-space.
From the equilibrium $\sigma_i^2 = \frac{(\sigma_i^*)^2}{1-\beta}$, the corresponding target log standard deviation is $\varsigma_{\text{target},i} = \frac{1}{2}\log\big((\sigma_i^*)^2\big) - \frac{1}{2}\log(1-\beta)$.
To avoid the singularity at $(\sigma_i^*)^2 = 0$, we introduce a minimum variance floor $\sigma_{\min}^2$ and define the stabilized exploitation target covariance $\hat{\Sigma}_i = \max ((\sigma_i^*)^2,\sigma_{\min}^2)$.
We then compute the bounded target
\begin{equation}
    \varsigma_{\text{target},i}
    =
    \mathrm{clip}\!\left(
        \frac{1}{2}\log \hat{\Sigma}_i
        -
        \frac{1}{2}\log(1-\beta),
        \varsigma_{\min},
        \varsigma_{\max}
    \right).
\end{equation}

Finally, we update the covariance network using the symmetric log-space MSE objective
\begin{equation}
    \mathcal{L}(\psi)
    =
    \mathbb{E}_{s \sim \mathcal{D}}
    \left[
        \frac{1}{D}
        \sum_{i=1}^{D}
        \Big(
            \varsigma_i(s) - \mathtt{sg}(\varsigma_{\text{target},i}(s))
        \Big)^2
    \right],
\end{equation}
where $\mathtt{sg}(\cdot)$ is a stop-gradient operator. 

\begin{algorithm}[tb]
\caption{FLAG: Flow Policy MaxEnt-RL by Latent Augmented Guidance}
\label{algorithm::FLAG}
\begin{algorithmic}[1]
\STATE Initialize critic networks $Q_{\phi_1}$, $Q_{\phi_2}$, and vector field network $u_\theta$ with random parameters $\phi_1, \phi_2, \theta$.
\STATE Initialize environment replay buffer $\mathcal{D}_{\text{ENV}}$ and small guidance buffer $\mathcal{D}_\text{FLAG}$
\FOR{$k=1$ to $M$}
    \IF{$k\,\%\, \text{UTD}=0$}
    \STATE Sample $a\sim \pi(a \mid s;\theta_k)$
    \STATE Step environment: $s'\sim p(s' \mid  {s},  {a})$
    \STATE Store $( {s},  {a}, r,  {s}')$ in $\mathcal{D}_\text{ENV}$
    \ENDIF 
    \STATE Sample $B$ transitions $(s, a, s', r)\sim\mathcal{D}$
    \FOR{each update step}
    \STATE Sample $B$ transitions $( {s},  {a}, r,  {s}')$ from $\mathcal{D}$
    \STATE Sample $ {a}' \sim \pi(a'\mid  {s}';\theta_k)$
    \STATE Compute the log probability of $\tilde \pi$ (\Eqref{eq:hutch_logpi})
    \STATE Update critics: $\phi\leftarrow \phi - \eta_\phi\nabla_\phi \mathcal L(\phi)$ (\Eqref{eq: critic-loss})
    
    \IF{$K\,\%\,\text{POLICY DELAY}$}
    \STATE Sample $N$ latent vectors $z_1, \dots z_N\sim p_z(z)$ and transport to actions $\{T_{\theta_k}(s, z_i)\}_{i=1}^{N}$
    \STATE Compute the log probability of $\tilde \pi$ (Eq.~\ref{eq:hutch_logpi})
    \STATE Estimate $f_{\hat s, k}(a_i)$ (\Eqref{eq: energy-func}) and normalize with $\alpha$ 
    \STATE Estimate $\mu^*$ (\Eqref{eq:M-step-final}) and Update base flow policy: $\theta_{k+1}\leftarrow\theta_k-\eta_k\nabla_\theta\mathcal L(\theta)$ (\Eqref{eq: M-step-CFM})
    \STATE Store $(s, z, \mu^*)$ in $\mathcal{D}_\text{FLAG}$
    \STATE Update $\alpha_{k+1}\leftarrow\alpha_k-\eta_\alpha\mathcal J(\alpha)$ (\Eqref{eq: alpha-objective})
    \ENDIF
    \STATE Sample $B$ augmented-state--action pairs $(s, z, \mu)\sim \mathcal{D}_\text{FLAG}$
    \STATE Update base flow policy: $\theta_{k+1}\leftarrow\theta_k-\eta_\theta\nabla_\theta\mathcal L(\theta)$ (\Eqref{eq: M-step-CFM})
    \ENDFOR
    
\ENDFOR
\vskip -0.2in
\end{algorithmic}
\end{algorithm}

%% file: sections/appendix/baselines.tex
\section{Baseline}
\subsection{Baselines in Section~\ref{subsection: global policy sampling}}
\label{app: Baseline_5.1}

\textbf{MaxEntDP\footnote{\url{https://github.com/diffusionyes/MaxEntDP}}}
utilizes the Q-weighted Noise Estimation (QNE) method to approximate the exponential of the target distribution of MaxEnt-RL. In the original paper, They employ $N=500$ action samples to ensure a low-variance and accurate training target for the noise prediction network, which is expensive. The policy naturally balances local and global update characteristics because the diffusion time step $t$ controls the noise schedule $\alpha_t$, which directly determines the scale of the sampled action area $a_t$ during the training process. Our implementation follows their official Github repository.

\textbf{DPMD\footnote{\url{https://github.com/mahaitongdae/diffusion_policy_online_rl}}} utilizes \textbf{Reweighted Score Matching (RSM)} to optimize diffusion policies without requiring direct samples from the optimal policy. The algorithm weights the score matching objective with the exponential of the $Q$-function, $\exp(Q(s, a)/\lambda)$. To ensure efficient exploitation and manage the inherent randomness of the diffusion process, DPMD adopts batch action sampling---a form of soft rejection sampling---where the action $a = \arg\max_i Q(s, a_{(i)})$ is selected from numerous generated candidates ($P=32$) for environment interaction, whereas $N=1$ action is sampled in the policy update. The inference time is more expensive than FLAG, because we use "policy delay" for computational efficiency in policy update, while the batch action sampling cannot. To match the sampling budget with FLAG, we modify the original DPMD so that the algorithm does not leverage batch action sampling heuristics ($P=1$) and use multiple action samples ($N\le64$) during policy update. Our implementation follows their official Github repository.

\textbf{QVPO\footnote{\url{https://github.com/wadx2019/qvpo}}} derives its policy update by leveraging a lower bound on the policy gradient objective. Specifically, rather than directly maximizing $\mathbb{E}_{a \sim \pi_\theta}[Q(s, a)]$ via gradient ascent, QVPO shows that this objective admits a tractable lower bound expressible as a Q-weighted score matching loss, enabling the use of a diffusion model as the policy. Concretely, at each update step, $N=64$ candidate actions are drawn from the current diffusion policy for each state, and the top-$k$ ($k=1$) actions by the minimum of two Q-networks are retained as positive targets. Only candidates whose Q-value exceeds a threshold ($Q > 1.0$) receive a nonzero weight, acting as a quality filter. The diffusion policy is then trained via the weighted denoising score matching objective. For exploration, QVPO augments the training batch with $N_{\text{neg}}=10$ actions per positive sample drawn uniformly at random from $\mathcal{U}(-1, 1)$; these negative samples are assigned weights $w_i = \alpha \cdot Q_{\text{pos}}$ where $\alpha$ follows a linear decay schedule from $0.02$ to $0.002$ over $2\times10^5$ steps, encouraging broad exploration early in training. Our implementation follows their official Github repository.

For CrossQ update and distributional critic architecture we follows DIME official Github.
\subsection{Baselines in Section~\ref{subsection: comparison_with_diff-flow-policy}}

\paragraph{DIME~\citep{dime}.}
For MuJoCo tasks, we run DIME\footnote{\url{https://github.com/ALRhub/DIME}} using the implementation released in the official
GitHub repository. 
For DMC Dog and MyoSuite tasks, we directly use the results reported in the
original paper, together with the reported CrossQ and QSM baselines.

\paragraph{QSM~\citep{qsm}.}
For MuJoCo tasks, we run QSM\footnote{\url{https://github.com/escontra/score_matching_rl}} using the official implementation.
For a fair comparison, we align its hyperparameters with those used in DIME
whenever applicable.

\paragraph{DACERv2~\citep{dacerv2} and DIPO~\citep{dipo}.}
Our implementations of DACERv2\footnote{\url{https://github.com/happy-yan/DACER-Diffusion-with-Online-RL}} and DIPO\footnote{\url{https://github.com/BellmanTimeHut/DIPO}} are based on their official GitHub
repositories

Following DIME, we additionally incorporate the CrossQ update for the critic and
use a distributional critic architecture.

\paragraph{FlowRL~\citep{flowrl}.}
Our implementation of FlowRL\footnote{\url{https://github.com/bytedance/FlowRL}} follows the official GitHub repository.
FlowRL uses layer normalization~\citep{ba2016layer} in the critic architecture
and employs a three-layer MLP with hidden dimensions $[512,512,512]$, which is
architecturally different from the wider CrossQ critic with hidden dimensions
$[2048,2048]$. Therefore, we do not apply the CrossQ modification to FlowRL and
instead retain the original critic architecture.

All hyperparameters are listed in Table~\ref{tab:diffusion_flow_hyperparameters}.
\newpage

%% file: sections/appendix/additional_ablations.tex
\section{Additional Ablation Studies} \label{app: additional_ablation_studies}

\begin{figure}[t]
    \centering
    \begin{subfigure}[b]{0.48\linewidth}
        \centering
        \includegraphics[width=\linewidth]{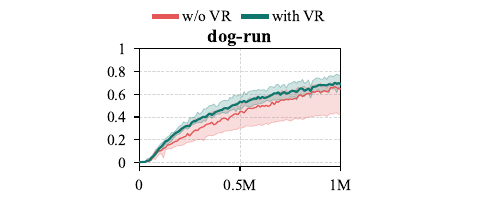}
        \caption{Variance Reduction on Trace Estimation}
        \label{fig: hutchinson-vr}
    \end{subfigure}
    \hfill
    \begin{subfigure}[b]{0.48\linewidth}
        \centering
        \includegraphics[width=\linewidth]{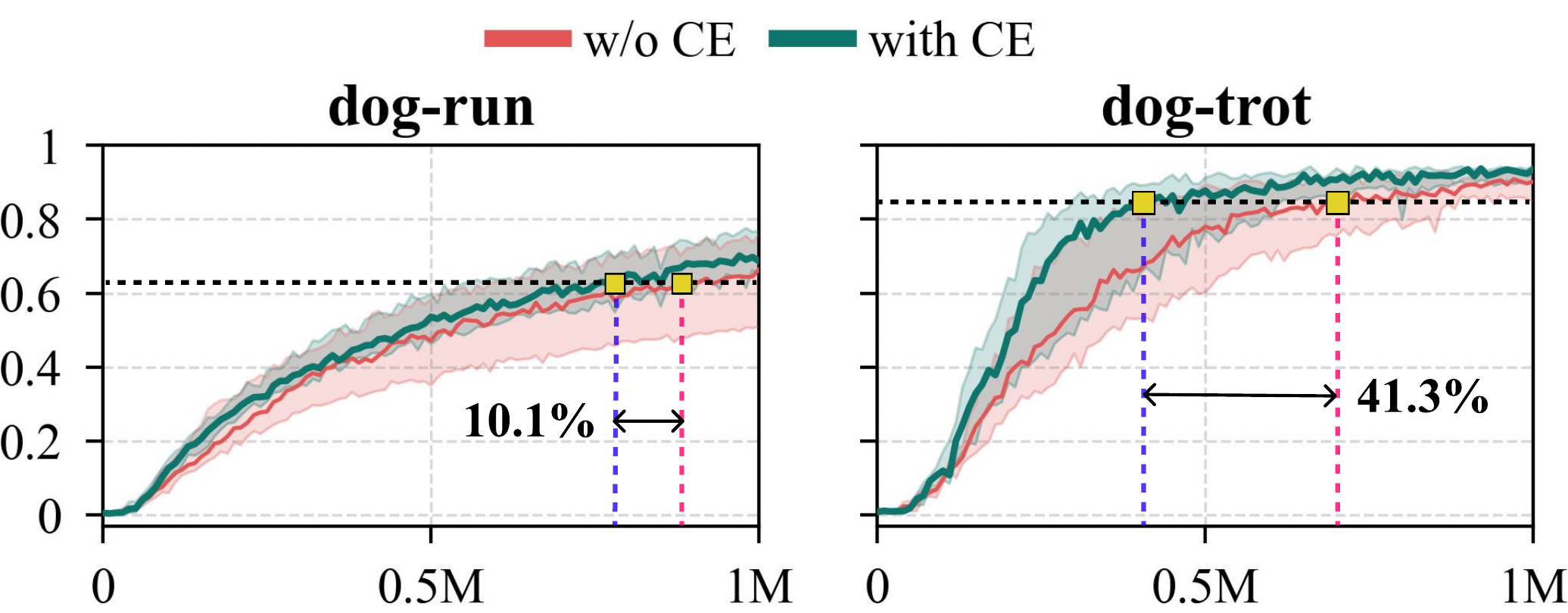}
        \caption{Effect of Cross-Entropy}
        \label{fig: entropy}
    \end{subfigure}
    \caption{
    \textbf{Variance Reduction and Cross-Entropy Ablations.}
    \emph{(a)} Training stability with and without the variance reduction technique for Hutchinson's trace estimator.
    \emph{(b)} We compare FLAG against a variant trained without the cross-entropy term. The horizontal dashed line represents $90\%$ of peak performance, with intersection points marked by \protect\raisebox{-0.15em}{\includegraphics[height=0.8em]{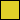}}.
    }
    \label{fig:vr_entropy_row}
\end{figure}

\begin{figure}[t]
    \centering
    \begin{subfigure}[b]{0.39\linewidth}
        \centering
        \includegraphics[width=\linewidth]{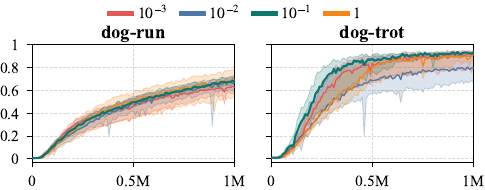}
        \caption{Sensitivity to $\lambda_\text{ref}$}
        \label{fig:lambda-ref}
    \end{subfigure}
    \hfill
    \begin{subfigure}[b]{0.39\linewidth}
        \centering
        \includegraphics[width=\linewidth]{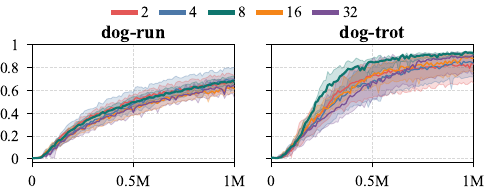}
        \caption{Effect of ODE solver steps}
        \label{fig:num-solver-steps}
    \end{subfigure}
    \hfill
    \begin{subfigure}[b]{0.19\linewidth}
        \centering
        \includegraphics[width=\linewidth]{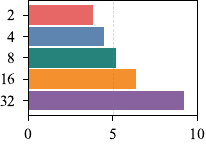}
        \caption{Mean runtime (H)}
        \label{fig:runtime}
    \end{subfigure}

    \caption{
    \textbf{Ablation and Efficiency Analysis.}
    \emph{(a)} Performance curves across different values of the KL-divergence constraint parameter $\lambda_\text{ref}$.
    \emph{(b)} Aggregate performance as a function of the number of integration steps used during flow inference.
    \emph{(c)} Runtime comparison.
    }
    \label{fig:ablation_row}
\end{figure}

\subsection{Variance Reduction in Hutchinson's Trace Estimator}
\label{app:hutchinson}
To efficiently estimate the log-likelihood of the flow policy, we employ Hutchinson's trace estimator.
To mitigate the stochasticity of this approximation, we adopt a variance reduction technique that shares the random tangent vectors across the action samples, as detailed in Appendix \ref{subsection:hutchinson-variance-reduction}.
We ablate this design choice to quantify its impact on training stability (Figure \ref{fig: hutchinson-vr}).
The results demonstrate that this variance reduction strategy not only accelerates learning but also significantly reduces the variance across random seeds, leading to more robust convergence.

\subsection{Cross-entropy.}
\label{app:cross_entropy}
Although MaxEnt RL promotes systematic exploration, exact entropy computation for expressive policies is infeasible.
We circumvent this by introducing a cross-entropy term as a tractable surrogate.
To verify this design, we conduct an ablation study comparing our approach with a standard formulation that lacks the MaxEnt component.
We omit the log-probability from the
soft Q-function update (Eq. \ref{eq: cross-entropy-soft-v}) and the weight calculation (Eq. \ref{eq: e-step-target}).
The model with cross-entropy reaches the $90\%$ of peak performance approximately $10.1\%$ and $41.3\%$ faster in respective tasks compared to the baseline in \Figref{fig: entropy}. This shows that the inclusion of cross-entropy regularizes the policy by preventing overfitting to early value overestimation, as evidenced by the reduced variance across random seeds.

\subsection{Sensitivity to $\lambda_\text{ref}$}
\label{app:lambda_ref}
We examine the sensitivity of FLAG to the hyperparameter $\lambda_\text{ref}$, reporting performance on the DMC dog-run and dog-trot tasks in Figure \ref{fig:lambda-ref}.
This parameter controls the allowable divergence between the updated local policy and the reference policy; intuitively, a higher $\lambda_\text{ref}$ permits more aggressive updates, while a lower value enforces a tighter trust region.
As illustrated, performance is sensitive to this choice.
Extreme values—either overly restrictive or excessively unstable—degrade performance.
For our main experiments, we found $\lambda_\text{ref} = 10$ to yield the most consistent results across tasks.

\subsection{Influence of ODE Solver Steps}
\label{app:ode_steps}
We analyze the trade-off between inference precision and computational cost by varying the number of ODE solver steps used for the flow policy.
The results are summarized in Figure \ref{fig:num-solver-steps}. This result shows that the performance is relatively robust to the number of ODE solver steps. Since the training duration increases proportionally with the number of steps (\Figref{fig:runtime}), we fixed the solver steps to $8$ for all experiments to achieve computational efficiency.

\subsection{Covariance Learning and Connections to Zeroth-Order Gradient Methods}
\label{app:variants}
\begin{table}[t]
    \centering
    \caption{\textbf{Learned covariance variant.} The learned covariance variant does not consistently outperform FLAG and shows sensitivity to the hyperparameters $\beta$ and $\log\sigma_{\text{final}}$, making it less practical despite its closer alignment with the theoretical results.}
    \label{tab:learned-covariance}
    \footnotesize
    \begin{tabular}{cccccc}
    \toprule
    \multirow{2}{*}{} & \multirow{2}{*}{$\bm{\log\sigma_{\text{final}}}$} & \multicolumn{4}{c}{$\bm{\beta}$} \\ \cmidrule(lr){3-6}
    & & \textbf{0.0} & \textbf{0.1} & \textbf{0.5} & \textbf{0.9} \\
    \midrule
    \multirow{4}{*}{\textbf{Dog-Run}}
    & \multirow{2}{*}{$1\text{e-}3$} & 684 & 646 & 672 & 651 \\
    & & [645, 738] & [613, 693] & [635, 735] & [571, 724] \\ \cmidrule{2-6}
    & \multirow{2}{*}{$5\text{e-}4$} & 484 & 597 & 645 & 618 \\
    & & [344, 672] & [562, 704] & [562, 698] & [550, 716] \\
    \midrule
    \multirow{4}{*}{\textbf{Dog-Trot}}
    & \multirow{2}{*}{$1\text{e-}3$} & 916 & 910 & 931 & 914 \\
    & & [885, 939] & [664, 929] & [909, 945] & [765, 933] \\ \cmidrule{2-6}
    & \multirow{2}{*}{$5\text{e-}4$} & 891 & 876 & 923 & 899 \\
    & & [767, 922] & [746, 930] & [901, 947] & [753, 930] \\
    \bottomrule
    \end{tabular}
\end{table}

\begin{figure}[t]
    \centering
    \includegraphics[width=0.4\linewidth]{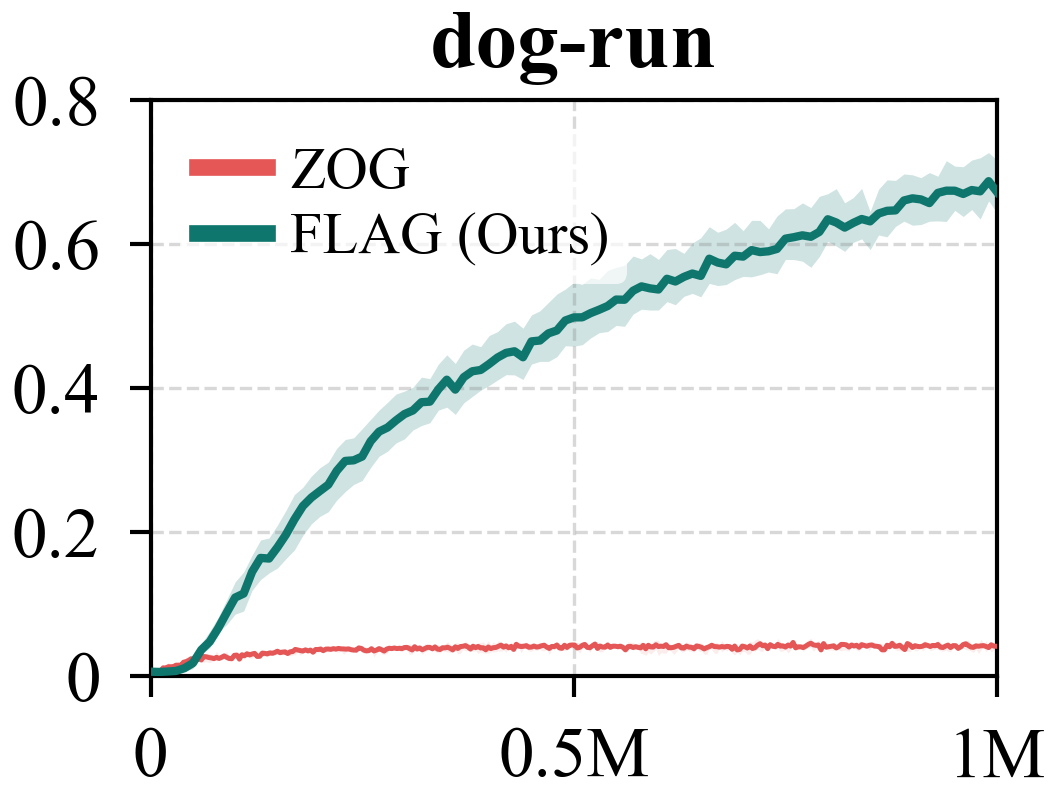}
    \caption{\textbf{Zeroth-order gradient variant.} The zeroth-order gradient variant of FLAG performs worse than FLAG, showing that our SNIS-based moment matching outperforms and is more stable than the zeroth-order gradient variant.}
    \label{fig:zeroth-order-gradient}
\end{figure}

FLAG's moment-matching update admits a zeroth-order gradient interpretation.
The mean shift $\mu^*_k(\hat{s}) - \mu_k$ from the E-step is equivalent to $\Sigma_k \nabla_\mu g_k(\mu_k)$, where $g_k(\mu) = \log \mathbb{E}_{\delta \sim \mathcal{N}(0, \Sigma_k)}[\exp(f_{\hat{s},k}(\mu + \delta)/\lambda)]$ is the log-sum-exp smoothing of the energy function, connecting FLAG's update to a zeroth-order estimate of the true action gradient $\nabla_a (Q^{\tilde{\pi}}(s,a) - \alpha \log \tilde{\pi}(a|s))$.
From this perspective, $\Sigma_k$ plays a dual role: it directly scales the zeroth-order gradient estimate, and simultaneously determines the width of the local Gaussian proposal $\hat{\pi}(a|\hat{s};\theta) = \mathcal{N}(a; T_\theta(s,z), \Sigma_k)$, controlling the search region around each anchor action.

Theoretically, a smaller $\Sigma_k$ is preferable as it reduces both the Q-function discrepancy between $\pi$ and $\tilde{\pi}$ and the smoothing bias of the gradient estimate, motivating the use of the learned covariance variant described in \Cref{subsection:covariance_learning}.
However, as shown in \Cref{tab:learned-covariance}, the learned covariance variant does not consistently outperform FLAG and exhibits sensitivity to the choice of $\beta$ and $\log\sigma_{\text{final}}$, making it less practical despite its closer theoretical alignment.
Simple linear annealing therefore remains our design choice, as it maintains a well-calibrated gradient scale and a sufficiently wide search region without additional hyperparameter sensitivity.

One might naturally ask whether FLAG's moment-matching update could be replaced by a direct zeroth-order gradient update, which estimates the action gradient from samples without the EM framework.
However, as shown in \Cref{fig:zeroth-order-gradient}, this zeroth-order gradient variant performs substantially worse than FLAG, failing to discover high-rewarding action sequences.
This highlights a key advantage of FLAG's SNIS-based moment matching over na\"{i}ve sample-based gradient estimation: by reweighting samples according to the E-step target distribution rather than following a raw gradient direction, FLAG produces more informed and stable policy updates, while also benefiting from the broader search region of the annealing schedule as an implicit exploratory mechanism.

\subsection{GPU Memory Allocation Comparison with DIME}
\label{app:gpu_memory}
DIME computes updates through direct gradient flow, which necessitates BPTT and leads to memory costs that grow rapidly with model size.
FLAG avoids this bottleneck entirely by replacing gradient-based updates with a supervision loss, decoupling memory consumption from the temporal depth of the computation graph.
As shown in \Cref{tab:gpu-memory-allocation}, BPTT causes memory usage to escalate sharply as the actor grows larger, whereas FLAG scales substantially more efficiently.
At 1B parameters, FLAG reduces memory consumption by over 43\% relative to DIME, and savings reach as high as 75\% at the 10M scale.
These results demonstrate that a supervision-based update scheme is a practical necessity for scaling actor networks into the billion-parameter regime.

\begin{table}[htbp]
\centering
\caption{GPU memory usage across actor parameter scales}
\label{tab:gpu-memory-allocation}
\footnotesize
\begin{tabular}{@{}cccccc@{}}
\toprule
 & \multicolumn{5}{c}{\textbf{Actor Parameters}} \\ 
 \cmidrule(l){2-6}
\textbf{Algorithm} & \textbf{100K} & \textbf{1M} & \textbf{10M} & \textbf{100M} & \textbf{1B} \\ \midrule
\textbf{DIME (MB)}             & 146           & 329         & 1133         & 1771          & 14220       \\
\textbf{FLAG (MB)}             & 110           & 117         & 283          & 1381          & 8094        \\
\midrule
\textbf{Memory Reduction (\%)} & 24.66         & 64.44       & 75.02        & 22.02         & 43.08       \\ \bottomrule
\end{tabular}
\end{table}

%% file: sections/appendix/experiment_details.tex
\section{Experiment Details} \label{app:experiment_details}

We conduct all experiments using JAX~\citep{jax2018github}. For FlowRL~\citep{flowrl}, we follow their implementation with PyTorch~\citep{paszke2019pytorch}.
\begin{table}[]
\centering
\caption{Environment state and action space dimensions}
\label{tab:env_state_action_dim}
\scriptsize
\begin{tabular}{@{}lcccccc@{}}
\toprule
                          & Ant-v5 & HalfCheetah-v5 & Humanoid-v5 & Walker2d-v5 & DMC dog & MyoSuite \\
\midrule
$\text{dim}(\mathcal{S})$ & 105    & 17             & 348         & 17          & 223     & 93       \\
$\text{dim}(\mathcal{A})$ & 8      & 6              & 17          & 6           & 38      & 39       \\
\bottomrule
\end{tabular}
\end{table}

\begin{table}[]
\centering
\caption{Hyperparameters of experiments in \Figref{fig:5_1}}
\label{tab:proposal_sampling_performance_hyperparameters}
\scriptsize
\begin{tabular}{@{}lccccc@{}}
\toprule
                        & \textbf{SDAC}  & \textbf{DPMD} & \textbf{QVPO} & \textbf{MaxEntDP} & \textbf{FLAG} \\
\midrule
$H_\text{target}$       & $-0.9\text{dim}(\mathcal{A})$ & $-0.9\text{dim}(\mathcal{A})$ & N/A & N/A & $-\text{dim}(\mathcal{A})$ \\
Temp. Learn. Rate       & 7e-3            & 7e-3            & N/A                 & N/A        & 1e-3 \\
Critic Learn. Rate      & 3e-4          & 3e-4            & 3e-4            & 3e-4  & 3e-4 \\
Actor/Score Learn. Rate & 3e-4 $\to$ 3e-5 & 3e-4 $\to$ 3e-5 & 3e-4 $\to$ 3e-5 & 3e-4   & 3e-4 \\
Diffusion Steps         & 20                & 20                & 20                & 20       & 8      \\
Discount                & 0.99              & 0.99              & 0.99              & 0.99     & 0.99   \\
Batch size              & 256               & 256               & 256               & 256      & 256    \\
Buffer size             & 1e6             & 1e6             & 1e6             & 1e6    & 1e6  \\
Critic Hidden Depth     & 2                 & 2                 & 3                 & 2        & 2      \\
Critic Hidden Size      & 2048              & 2048              & 2048              & 2048     & 2048   \\
Critic CrossQ           & True              & True              & True              & True     & True   \\
Critic LayerNorm        & False             & False             & False             & False    & False  \\
Dist. Critic Bins       & 101               & 101               & 101               & 101      & 101    \\
Actor/Score Depth       & 3                 & 3                 & 3                 & 3        & 3      \\
Actor/Score Size        & 256               & 256               & 256               & 256      & 256    \\
Update-to-data Ratio    & 2                 & 2                 & 2                 & 2        & 2      \\
Policy delay            & 3                 & 3                 & 3                 & 3        & 3      \\
Exploration Steps       & 10000             & 10000             & 10000             & 10000    & 10000  \\ 
\bottomrule
\end{tabular}
\end{table}

\begin{table}[]
\centering
\caption{Hyperparameters of experiments in \Cref{tab:table4_dmc_myo_split}}
\label{tab:diffusion_flow_hyperparameters}
\scriptsize
\begin{tabular}{@{}lccccccc@{}}
\toprule
                        & \textbf{DIME} & \textbf{FlowRL} & \textbf{DACERv2} & \textbf{QSM} & \textbf{DIPO} & \textbf{CrossQ} & \textbf{FLAG} \\
\midrule
$H_\text{target}$       & $4\text{dim}(\mathcal{A})$ & N/A                 & $-0.9\text{dim}(\mathcal{A})$ & N/A                 & N/A                 & $-\text{dim}(\mathcal{A})$ & $-\text{dim}(\mathcal{A})$ \\
Temp. Learn. Rate       & 1e-3                       & N/A                 & 3e-2                          & N/A                 & N/A                 & 3e-4                       & 1e-3                       \\
Critic Learn. Rate      & 3e-4                       & 3e-4                & 1e-4                          & 3e-4                & 3e-4                & 7e-4                       & 3e-4                       \\
Actor/Score Learn. Rate & 3e-4                       & 3e-4                & 1e-4                          & 3e-4                & 3e-4                & 7e-4                       & 3e-4                       \\
Diffusion Steps         & 16                         & 1                   & 20                            & 15                  & 100                 & N/A                        & 8                          \\
Discount                & 0.99                       & 0.99                & 0.99                          & 0.99                & 0.99                & 0.99                       & 0.99                       \\
Batch size              & 256                        & 256                 & 256                           & 256                 & 256                 & 256                        & 256                        \\
Buffer size             & 1e6                        & 1e6                 & 1e6                           & 1e6                 & 1e6                 & 1e6                        & 1e6                        \\
Critic hidden depth     & 2                          & 3                   & 2                             & 2                   & 2                   & 2                          & 2                          \\
Critic hidden size      & 2048                       & 512                 & 2048                          & 2048                & 2048                & 2048                       & 2048                       \\
Critic use CrossQ       & True                       & False               & True                          & True                & True                & True                       & True                       \\
Critic use LayerNorm    & False                      & True                & False                         & False               & False               & False                      & False                      \\
Num. Bin/Quantiles      & 101                        & N/A                 & 2                             & N/A                 & 101                 & N/A                        & 101                        \\
Actor/Score depth       & 3                          & 2                   & 3                             & 3                   & 4                   & 3                          & 3                          \\
Actor/Score size        & 256                        & 512                 & 256                           & 256                 & 256                 & 256                        & 256                        \\
Update-to-data ratio    & 2                          & 1                   & 1                             & 1                   & 2                   & 2                          & 2                          \\
Policy delay            & 3                          & 1                   & 1                             & 1                   & 3                   & 3                          & 3                          \\
Exploration Steps       & 5000                       & 10000               & 10000                         & 10000               & 10000               & 5000                       & 10000                      \\
Prior Distr.            & $\mathcal{N}(0, 2.5)$      & $\mathcal{N}(0, 1)$ & $\mathcal{N}(0, 1)$           & $\mathcal{N}(0, 1)$ & $\mathcal{N}(0, 1)$ & N/A                        & $\mathcal{N}(0, 1)$        \\
Optimizer               & Adam                       & Adam                & Adam                          & Adam                & Adam                & Adam                       & Adam                       \\
Score-Q align. factor   & N/A                        & N/A                 & N/A                           & 50                  & N/A                 & N/A                        & N/A                        \\
\bottomrule
\end{tabular}
\end{table}

\begin{table}[]
\centering
\caption{List of FLAG-specific hyperparameters}
\label{tab:FLAG_hyperparameters}
\scriptsize
\begin{tabular}{cccc}
\toprule
$\alpha_\text{init}$ 
& $\lambda_\text{ref}$ 
& \makecell{$\log \sigma_\text{init}$\\ / $\log \sigma_\text{min}$} 
& \makecell{$\log \sigma$ Warmup\\ / Decay Steps} \\
\midrule
$0.01$ 
& $10$ 
& $-2$ / $-3$ 
& $2e5$ / $8e5$ \\
\midrule
\makecell{Supervision Warmup\\ / Ramp Steps} 
& Flow Buffer Size 
& \makecell{Number of $z$\\ given $s$ in update} 
& \makecell{\# of samples\\ for Variance Reduction} \\
\midrule
$1e5$ / $1e5$ 
& $10240$ 
& $1$ 
& $1$ \\
\bottomrule
\end{tabular}
\end{table}

\subsection{Multigoal Environment} \label{multigoal}
We conduct a small-scale experiment in the \texttt{MultiGoalEnv}, adapted from~\cite{softqlearning}, to qualitatively assess whether SDAC, DPMD, QVPO, MaxEntDP, and FLAG can recover a multimodal action distribution from a shared value function.
An optimal Q-function $Q(s,a)$ is first computed via dynamic programming on a discrete grid and then frozen across all methods as a common scoring oracle, isolating policy extraction from representation learning.
Each method optimizes a diffusion or flow matching policy to approximate the Boltzmann target $\pi^*(a \mid s) \propto \exp(Q(s,a)/\tau)$ with $\tau = 0.1$, and we visualize the learned action distributions at two probe states, $(0, 0)$ and $(3, 3)$, via kernel density estimation.
By fixing the Q-function, this setup directly measures each method's ability to capture the target distribution as a function of the number of importance samples $N \in \{2, 8, 32\}$ per gradient step, without confounding factors from Q-learning or environment interaction.

\subsection{Comparison with Target Matching Methods}
\subsubsection{\Cref{fig:5_1}.}
\label{app: subsub-fig4}
Scores are averaged over four tasks per benchmark: HalfCheetah, Ant, Walker2d, and Humanoid for MuJoCo; Dog-run, Dog-trot, Dog-walk, and Dog-stand for DMC Dog; and reach-hard, obj-hold-hard, key-turn-hard, and pen-twirl-hard for MyoSuite.
Environment state and action space dimensions are summarized in \Cref{tab:env_state_action_dim}.
All methods are evaluated at 1M environment steps using 5 evaluation episodes, with IQM returns and confidence intervals computed over 10 seeds for $P = 1$ and 5 seeds for $P = 32$ due to the computational burden.
GPU hours are measured on a single NVIDIA L40S GPU.
Hyperparameters are reported in Table~\ref{tab:proposal_sampling_performance_hyperparameters}, and FLAG-specific hyperparameters shared across all experiments are listed in Table~\ref{tab:FLAG_hyperparameters}.
Full learning curves corresponding to \Figref{fig:5_1} are provided in Figures~\ref{fig:5_1_learning_curves_p1} and~\ref{fig:5_1_learning_curves_p32}.

\subsubsection{\Figref{fig:wo_crossq}}
We report performance at 1M steps using 5 evaluation episodes, with mean and standard deviation over 10 seeds for $P = 1$ and 5 seeds for $P = 32$, consistent with \Figref{fig:5_1}.
Hyperparameters follow those used for \Figref{fig:5_1} and \Cref{tab:table4_dmc_myo_split}, except for CrossQ usage and critic depth and hidden size. Specifically, we use $[256, 256, 256]$ critic architectures without Layer Normalization or Batch Renormalization. Please see Table~\ref{tab:dog_trot_run_performance_no_crossq} for detailed results.

\subsection{Comparisons with Diffusion and Flow-based Algorithms}
\paragraph{\Cref{tab:table4_dmc_myo_split}.}
We report performance at 1M environment interaction steps using 5 evaluation episodes with 10 random seeds per algorithm.
Hyperparameters are reported in Table~\ref{tab:diffusion_flow_hyperparameters}, and additional learning curves for the four MuJoCo tasks are provided in Figure~\ref{fig:5_2_learning_curves}.

\begin{figure}[]
    \centering
    \includegraphics[width=\textwidth]{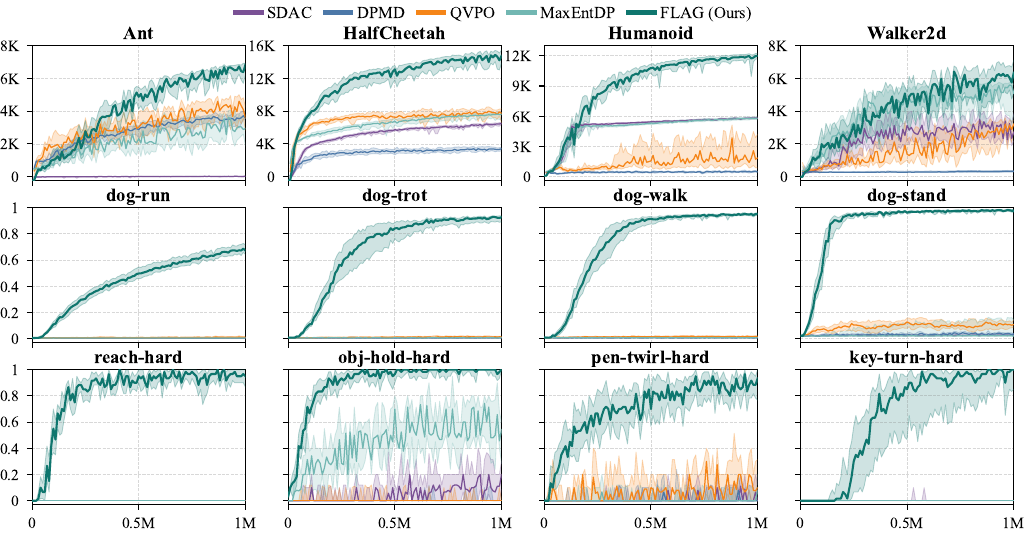}
    \caption{Full learning curves of \Figref{fig:5_1} (P=1)}
    \label{fig:5_1_learning_curves_p1}
\end{figure}

\begin{figure}[]
    \centering
    \includegraphics[width=\textwidth]{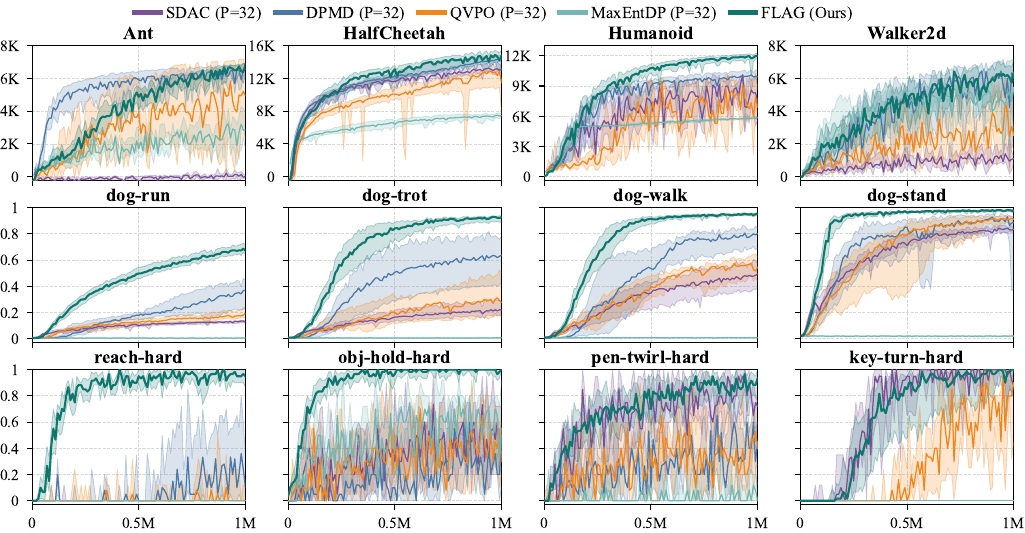}
    \caption{Full learning curves of \Figref{fig:5_1} (P=32)}
    \label{fig:5_1_learning_curves_p32}
\end{figure}

\begin{figure}[]
    \centering
    \includegraphics[width=\textwidth]{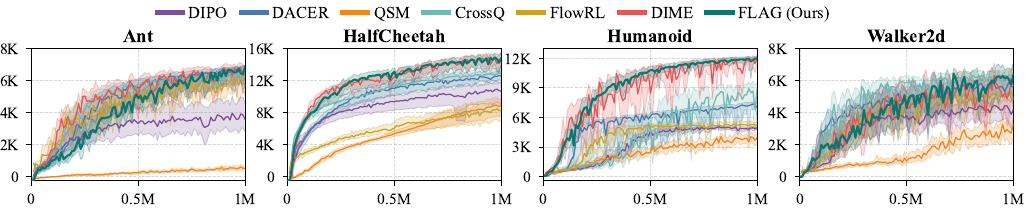}
    \caption{Learning curves other diffusion \& flow-matching algorithms in 4 MuJoCo tasks}
    \label{fig:5_2_learning_curves}
\end{figure}

\begin{table}[]
\centering
\caption{
\textbf{DMC Dog performance without CrossQ in Section~\ref{subsection: global policy sampling}.}
$\Delta_{\mathrm{F}}$ denotes the relative performance drop from FLAG,
defined as $(1 - \text{Ret}_\text{Alg} / \text{Ret}_\text{FLAG}) \times 100$.
$\dagger$ denotes the baselines' policy update relevant default hyperparameters used in the original papers.
The \capbest{} and \capsecond{} scores are highlighted.
}
\label{tab:dog_trot_run_performance_no_crossq}
\footnotesize
\renewcommand{\arraystretch}{0.95}
\setlength{\tabcolsep}{2pt}

\newcommand{\gap}[1]{#1}
\newcommand{\bestgap}[1]{\cellcolor{hltealbest}{#1}}
\newcommand{\secondgap}[1]{\cellcolor{hltealsecond}#1}
\newcommand{\unithead}[1]{\textcolor{statgray}{\scriptsize #1}}
\newcommand{\algname}[1]{{\footnotesize #1}}

\begin{tabular}{@{}c c l c r c r@{}}
\toprule
\multirow{2}{*}{\textbf{P}} &
\multirow{2}{*}{\textbf{N}} &
\multirow{2}{*}{\textbf{Alg.}} &
\multicolumn{2}{c}{\textbf{Dog-trot}} &
\multicolumn{2}{c}{\textbf{Dog-run}} \\
\cmidrule(lr){4-5}
\cmidrule(l){6-7}
&
&
&
\subhead{Return \unithead{(1k)}$\uparrow$} &
\subhead{$\Delta_{\mathrm{F}} \downarrow$} &
\subhead{Return \unithead{(1k)}$\uparrow$} &
\subhead{$\Delta_{\mathrm{F}} \downarrow$} \\
\midrule

\multirow{4}{*}{1}
& 64  & \algname{SDAC}
& \res{0.008}{0.001} & \gap{0.984}
& \res{0.006}{0.000} & \gap{0.980} \\

& 64  & \algname{DPMD}
& \res{0.013}{0.008} & \gap{0.974}
& \res{0.006}{0.002} & \gap{0.980} \\

& 64  & \algname{QVPO}
& \res{0.020}{0.022} & \gap{0.960}
& \res{0.039}{0.036} & \gap{0.872} \\

& 500 & \algname{$\text{MaxEntDP}^\dagger$}
& \res{0.007}{0.001} & \gap{0.986}
& \res{0.005}{0.001} & \gap{0.984} \\

\cmidrule{1-7}

32 & 64 & \algname{$\text{SDAC}^\dagger$}
& \res{0.019}{0.023} & \gap{0.962}
& \res{0.007}{0.002} & \gap{0.977} \\

32 & 1 & \algname{$\text{DPMD}^\dagger$}
& \second{0.117}{0.125} & \secondgap{0.766}
& \res{0.023}{0.016} & \gap{0.924} \\

4 & 64 & \algname{$\text{QVPO}^\dagger$}
& \res{0.077}{0.082} & \gap{0.846}
& \res{0.045}{0.054} & \gap{0.852} \\

\cmidrule{1-7}

-- & -- & \algname{$\text{DIME}^\dagger$}
& \res{0.088}{0.073} & \gap{0.824}
& \res{0.025}{0.033} & \gap{0.918} \\

-- & -- & \algname{$\text{DACERv2}^\dagger$}
& \res{0.107}{0.043} & \gap{0.786}
& \second{0.065}{0.059} & \secondgap{0.786} \\

\cmidrule{1-7}

1 & 8 & \algname{$\text{FLAG}^\dagger$(ours)}
& \best{0.500}{0.026} & \bestgap{0.0}
& \best{0.304}{0.027} & \bestgap{0.0} \\

\bottomrule
\end{tabular}
\end{table}

%% file: checklist.tex
\section*{NeurIPS Paper Checklist}

\begin{enumerate}

\item {\bf Claims}
    \item[] Question: Do the main claims made in the abstract and introduction accurately reflect the paper's contributions and scope?
    \item[] Answer: \answerYes{}
    \item[] Justification:
    The three claims stated in the introduction are each substantiated in the paper.
    \begin{itemize}
        \item The latent-augmented MDP and the proxy MaxEnt-RL objective with theoretical consistency are introduced in \Cref{subsection: z-MDP} and proved in Appendix \ref{app: derivations}.
        \item The EM-based FLAG algorithm with a monotonic improvement guarantee is presented in \Cref{subsection: local-policy-update} and proved in Appendix \ref{app: proofs}.
        \item Empirical superiority over global IS baselines and state-of-the-art performance are demonstrated across DMC Dog, MyoSuite, and MuJoCo benchmarks in \Cref{section:experiments} (\Cref{fig:5_1,fig:wo_crossq}, \Cref{tab:ablation_num_training_samples_dmc_dog,tab:table4_dmc_myo_split,tab:ablation_buffer_covariance}).
    \end{itemize}
    \item[] Guidelines:
    \begin{itemize}
        \item The answer \answerNA{} means that the abstract and introduction do not include the claims made in the paper.
        \item The abstract and/or introduction should clearly state the claims made, including the contributions made in the paper and important assumptions and limitations. A \answerNo{} or \answerNA{} answer to this question will not be perceived well by the reviewers.
        \item The claims made should match theoretical and experimental results, and reflect how much the results can be expected to generalize to other settings.
        \item It is fine to include aspirational goals as motivation as long as it is clear that these goals are not attained by the paper.
    \end{itemize}

\item {\bf Limitations}
    \item[] Question: Does the paper discuss the limitations of the work performed by the authors?
    \item[] Answer: \answerYes{}
    \item[] Justification:
    \Cref{section:conclusion} explicitly identifies the main limitation: the specific combination of a base flow policy and a local Gaussian head is one instantiation of the proposed framework, and a more principled construction via ODE-to-SDE conversion~\citep{domingoadjoint} is left for future work.
    In addition, the monotonic improvement guarantee (\Cref{thm:flag-monotonic-improvement}) relies on the approximation assumptions stated from \ref{ass:flag-mpo-style} to  \ref{ass:flag-cfm-proj}, whose violation in practice is discussed in the surrounding remarks.
    \item[] Guidelines:
    \begin{itemize}
        \item The answer \answerNA{} means that the paper has no limitation while the answer \answerNo{} means that the paper has limitations, but those are not discussed in the paper.
        \item The authors are encouraged to create a separate ``Limitations'' section in their paper.
        \item The paper should point out any strong assumptions and how robust the results are to violations of these assumptions (e.g., independence assumptions, noiseless settings, model well-specification, asymptotic approximations only holding locally). The authors should reflect on how these assumptions might be violated in practice and what the implications would be.
        \item The authors should reflect on the scope of the claims made, e.g., if the approach was only tested on a few datasets or with a few runs. In general, empirical results often depend on implicit assumptions, which should be articulated.
        \item The authors should reflect on the factors that influence the performance of the approach. For example, a facial recognition algorithm may perform poorly when image resolution is low or images are taken in low lighting. Or a speech-to-text system might not be used reliably to provide closed captions for online lectures because it fails to handle technical jargon.
        \item The authors should discuss the computational efficiency of the proposed algorithms and how they scale with dataset size.
        \item If applicable, the authors should discuss possible limitations of their approach to address problems of privacy and fairness.
        \item While the authors might fear that complete honesty about limitations might be used by reviewers as grounds for rejection, a worse outcome might be that reviewers discover limitations that aren't acknowledged in the paper. The authors should use their best judgment and recognize that individual actions in favor of transparency play an important role in developing norms that preserve the integrity of the community. Reviewers will be specifically instructed to not penalize honesty concerning limitations.
    \end{itemize}

\item {\bf Theory assumptions and proofs}
    \item[] Question: For each theoretical result, does the paper provide the full set of assumptions and a complete (and correct) proof?
    \item[] Answer: \answerYes{}
    \item[] Justification:
    All standing assumptions (from \ref{assum:bounded_reward} to \ref{assum:finite_partition}) and iteration-specific assumptions (from \ref{ass:flag-mpo-style} to \ref{ass:flag-cfm-proj}) are explicitly stated in \Cref{app: proofs}.
    Full proofs are provided for every formal result: Corollaries \ref{cor:marginal-consistency} and \ref{cor: q-func-consistency} in Appendix \ref{app: derivations}, Proposition \ref{prop:surrogate-validity} in Appendix \ref{appendix:TV-KL-bound-proof}, the SAC-perspective connection in Appendix \ref{app: cov-schedule-justification}, and the MPO-style monotonic improvement (\ref{prop:flag-full-monotone-exact}, \ref{lem:flag-full-monotone-cfm}) in Appendix \ref{appendix:monotonic-improvement}.
    Proof sketches are embedded in the main text (\Cref{subsection: local-policy-update}) and a visual roadmap is provided in \Cref{fig:flag-mpo-flow}.
    \item[] Guidelines:
    \begin{itemize}
        \item The answer \answerNA{} means that the paper does not include theoretical results.
        \item All the theorems, formulas, and proofs in the paper should be numbered and cross-referenced.
        \item All assumptions should be clearly stated or referenced in the statement of any theorems.
        \item The proofs can either appear in the main paper or the supplemental material, but if they appear in the supplemental material, the authors are encouraged to provide a short proof sketch to provide intuition.
        \item Inversely, any informal proof provided in the core of the paper should be complemented by formal proofs provided in appendix or supplemental material.
        \item Theorems and Lemmas that the proof relies upon should be properly referenced.
    \end{itemize}

    \item {\bf Experimental result reproducibility}
    \item[] Question: Does the paper fully disclose all the information needed to reproduce the main experimental results of the paper to the extent that it affects the main claims and/or conclusions of the paper (regardless of whether the code and data are provided or not)?
    \item[] Answer: \answerYes{}
    \item[] Justification:
    The complete training procedure is given in \Cref{algorithm::FLAG} and Appendix \ref{app:implementation_details}.
    All FLAG-specific hyperparameters are listed in \Cref{tab:FLAG_hyperparameters}, and full hyperparameter tables for the comparison experiments are provided in \Cref{tab:proposal_sampling_performance_hyperparameters,tab:diffusion_flow_hyperparameters}.
    Baseline implementations reference official GitHub repositories (Appendix \ref{app: Baseline_5.1}), and any deviations from the originals are described in detail.
    Environment dimensions are reported in \Cref{tab:env_state_action_dim}, and evaluation protocols (number of seeds, evaluation episodes, metric definitions) are specified in \Cref{section:experiments} and Appendix~\ref{app:experiment_details}.
    \item[] Guidelines:
    \begin{itemize}
        \item The answer \answerNA{} means that the paper does not include experiments.
        \item If the paper includes experiments, a \answerNo{} answer to this question will not be perceived well by the reviewers: Making the paper reproducible is important, regardless of whether the code and data are provided or not.
        \item If the contribution is a dataset and\slash or model, the authors should describe the steps taken to make their results reproducible or verifiable.
        \item Depending on the contribution, reproducibility can be accomplished in various ways. For example, if the contribution is a novel architecture, describing the architecture fully might suffice, or if the contribution is a specific model and empirical evaluation, it may be necessary to either make it possible for others to replicate the model with the same dataset, or provide access to the model. In general. releasing code and data is often one good way to accomplish this, but reproducibility can also be provided via detailed instructions for how to replicate the results, access to a hosted model (e.g., in the case of a large language model), releasing of a model checkpoint, or other means that are appropriate to the research performed.
        \item While NeurIPS does not require releasing code, the conference does require all submissions to provide some reasonable avenue for reproducibility, which may depend on the nature of the contribution. For example
        \begin{enumerate}
            \item If the contribution is primarily a new algorithm, the paper should make it clear how to reproduce that algorithm.
            \item If the contribution is primarily a new model architecture, the paper should describe the architecture clearly and fully.
            \item If the contribution is a new model (e.g., a large language model), then there should either be a way to access this model for reproducing the results or a way to reproduce the model (e.g., with an open-source dataset or instructions for how to construct the dataset).
            \item We recognize that reproducibility may be tricky in some cases, in which case authors are welcome to describe the particular way they provide for reproducibility. In the case of closed-source models, it may be that access to the model is limited in some way (e.g., to registered users), but it should be possible for other researchers to have some path to reproducing or verifying the results.
        \end{enumerate}
    \end{itemize}

\item {\bf Open access to data and code}
    \item[] Question: Does the paper provide open access to the data and code, with sufficient instructions to faithfully reproduce the main experimental results, as described in supplemental material?
    \item[] Answer: \answerYes{}
    \item[] Justification:
    A codebase to reproduce the results introduced in this paper is presented with the paper.
    All environments used (DMC, MyoSuite, MuJoCo) are publicly available benchmarks, and all baseline implementations are linked to their official repositories in Appendix~\ref{app: Baseline_5.1}.
    \item[] Guidelines:
    \begin{itemize}
        \item The answer \answerNA{} means that paper does not include experiments requiring code.
        \item Please see the NeurIPS code and data submission guidelines (\url{https://neurips.cc/public/guides/CodeSubmissionPolicy}) for more details.
        \item While we encourage the release of code and data, we understand that this might not be possible, so \answerNo{} is an acceptable answer. Papers cannot be rejected simply for not including code, unless this is central to the contribution (e.g., for a new open-source benchmark).
        \item The instructions should contain the exact command and environment needed to run to reproduce the results. See the NeurIPS code and data submission guidelines (\url{https://neurips.cc/public/guides/CodeSubmissionPolicy}) for more details.
        \item The authors should provide instructions on data access and preparation, including how to access the raw data, preprocessed data, intermediate data, and generated data, etc.
        \item The authors should provide scripts to reproduce all experimental results for the new proposed method and baselines. If only a subset of experiments are reproducible, they should state which ones are omitted from the script and why.
        \item At submission time, to preserve anonymity, the authors should release anonymized versions (if applicable).
        \item Providing as much information as possible in supplemental material (appended to the paper) is recommended, but including URLs to data and code is permitted.
    \end{itemize}

\item {\bf Experimental setting/details}
    \item[] Question: Does the paper specify all the training and test details (e.g., data splits, hyperparameters, how they were chosen, type of optimizer) necessary to understand the results?
    \item[] Answer: \answerYes{}
    \item[] Justification:
    Optimizer (Adam), learning rates, batch sizes, buffer sizes, network architectures, discount factors, update-to-data ratios, and all other training details are fully reported in \Cref{tab:proposal_sampling_performance_hyperparameters,tab:diffusion_flow_hyperparameters,tab:FLAG_hyperparameters}.
    FLAG-specific design choices (covariance scheduling, guidance buffer size, effective temperature) are explained in \Cref{subsection: implementation-details} and further ablated in \Cref{subsection:key_design_choices} and Appendix \ref{app: additional_ablation_studies}.
    \item[] Guidelines:
    \begin{itemize}
        \item The answer \answerNA{} means that the paper does not include experiments.
        \item The experimental setting should be presented in the core of the paper to a level of detail that is necessary to appreciate the results and make sense of them.
        \item The full details can be provided either with the code, in appendix, or as supplemental material.
    \end{itemize}

\item {\bf Experiment statistical significance}
    \item[] Question: Does the paper report error bars suitably and correctly defined or other appropriate information about the statistical significance of the experiments?
    \item[] Answer: \answerYes{}
    \item[] Justification:
    All results are reported using the Interquartile Mean (IQM) with 95\% confidence intervals computed over 10 random seeds (5 for the $P{=}32$ condition, as noted in \Cref{subsection: global policy sampling}).
    Each policy is evaluated with 5 evaluation episodes per seed.
    Regarding the $\pm$ notations in \Cref{tab:ablation_num_training_samples_dmc_dog,tab:table4_dmc_myo_split}, we directly adopt the notation from \citep{simba-v2}.
    \item[] Guidelines:
    \begin{itemize}
        \item The answer \answerNA{} means that the paper does not include experiments.
        \item The authors should answer \answerYes{} if the results are accompanied by error bars, confidence intervals, or statistical significance tests, at least for the experiments that support the main claims of the paper.
        \item The factors of variability that the error bars are capturing should be clearly stated (for example, train/test split, initialization, random drawing of some parameter, or overall run with given experimental conditions).
        \item The method for calculating the error bars should be explained (closed form formula, call to a library function, bootstrap, etc.)
        \item The assumptions made should be given (e.g., Normally distributed errors).
        \item It should be clear whether the error bar is the standard deviation or the standard error of the mean.
        \item It is OK to report 1-sigma error bars, but one should state it. The authors should preferably report a 2-sigma error bar than state that they have a 96\% CI, if the hypothesis of Normality of errors is not verified.
        \item For asymmetric distributions, the authors should be careful not to show in tables or figures symmetric error bars that would yield results that are out of range (e.g., negative error rates).
        \item If error bars are reported in tables or plots, the authors should explain in the text how they were calculated and reference the corresponding figures or tables in the text.
    \end{itemize}

\item {\bf Experiments compute resources}
    \item[] Question: For each experiment, does the paper provide sufficient information on the computer resources (type of compute workers, memory, time of execution) needed to reproduce the experiments?
    \item[] Answer: \answerYes{}
    \item[] Justification:
    \Cref{fig:5_1} reports wall-clock GPU hours for a single 1M-step training run on an NVIDIA L40S GPU for all compared methods, serving as the primary compute reference.
    \Cref{fig:runtime} provides a runtime bar chart comparing FLAG across different numbers of ODE solver steps, and notes that training duration increases proportionally with solver steps.
    The chosen solver step count of 8 is justified by this efficiency analysis.
    \item[] Guidelines:
    \begin{itemize}
        \item The answer \answerNA{} means that the paper does not include experiments.
        \item The paper should indicate the type of compute workers CPU or GPU, internal cluster, or cloud provider, including relevant memory and storage.
        \item The paper should provide the amount of compute required for each of the individual experimental runs as well as estimate the total compute.
        \item The paper should disclose whether the full research project required more compute than the experiments reported in the paper (e.g., preliminary or failed experiments that didn't make it into the paper).
    \end{itemize}

\item {\bf Code of ethics}
    \item[] Question: Does the research conducted in the paper conform, in every respect, with the NeurIPS Code of Ethics \url{https://neurips.cc/public/EthicsGuidelines}?
    \item[] Answer: \answerYes{}
    \item[] Justification:
    This work studies continuous control in simulated physics environments and involves no human subjects, personal data, or sensitive applications.
    No aspects of the research raise ethical concerns under the NeurIPS Code of Ethics.
    \item[] Guidelines:
    \begin{itemize}
        \item The answer \answerNA{} means that the authors have not reviewed the NeurIPS Code of Ethics.
        \item If the authors answer \answerNo, they should explain the special circumstances that require a deviation from the Code of Ethics.
        \item The authors should make sure to preserve anonymity (e.g., if there is a special consideration due to laws or regulations in their jurisdiction).
    \end{itemize}

\item {\bf Broader impacts}
    \item[] Question: Does the paper discuss both potential positive societal impacts and negative societal impacts of the work performed?
    \item[] Answer: \answerNA{}
    \item[] Justification:
    This is foundational research on reinforcement learning algorithms evaluated exclusively on standard simulated locomotion benchmarks.
    There is no direct path to deployment or to harmful applications arising from improvements in simulation-based continuous control.
    Potential downstream benefits---such as more capable robotic controllers---are generic to the field and do not require specific discussion here.
    \item[] Guidelines:
    \begin{itemize}
        \item The answer \answerNA{} means that there is no societal impact of the work performed.
        \item If the authors answer \answerNA{} or \answerNo, they should explain why their work has no societal impact or why the paper does not address societal impact.
        \item Examples of negative societal impacts include potential malicious or unintended uses (e.g., disinformation, generating fake profiles, surveillance), fairness considerations (e.g., deployment of technologies that could make decisions that unfairly impact specific groups), privacy considerations, and security considerations.
        \item The conference expects that many papers will be foundational research and not tied to particular applications, let alone deployments. However, if there is a direct path to any negative applications, the authors should point it out. For example, it is legitimate to point out that an improvement in the quality of generative models could be used to generate Deepfakes for disinformation. On the other hand, it is not needed to point out that a generic algorithm for optimizing neural networks could enable people to train models that generate Deepfakes faster.
        \item The authors should consider possible harms that could arise when the technology is being used as intended and functioning correctly, harms that could arise when the technology is being used as intended but gives incorrect results, and harms following from (intentional or unintentional) misuse of the technology.
        \item If there are negative societal impacts, the authors could also discuss possible mitigation strategies (e.g., gated release of models, providing defenses in addition to attacks, mechanisms for monitoring misuse, mechanisms to monitor how a system learns from feedback over time, improving the efficiency and accessibility of ML).
    \end{itemize}

\item {\bf Safeguards}
    \item[] Question: Does the paper describe safeguards that have been put in place for responsible release of data or models that have a high risk for misuse (e.g., pre-trained language models, image generators, or scraped datasets)?
    \item[] Answer: \answerNA{}
    \item[] Justification:
    The paper releases neither pre-trained models nor datasets.
    The proposed algorithm operates on simulated locomotion tasks and does not pose misuse risks that would necessitate safeguards.
    \item[] Guidelines:
    \begin{itemize}
        \item The answer \answerNA{} means that the paper poses no such risks.
        \item Released models that have a high risk for misuse or dual-use should be released with necessary safeguards to allow for controlled use of the model, for example by requiring that users adhere to usage guidelines or restrictions to access the model or implementing safety filters.
        \item Datasets that have been scraped from the Internet could pose safety risks. The authors should describe how they avoided releasing unsafe images.
        \item We recognize that providing effective safeguards is challenging, and many papers do not require this, but we encourage authors to take this into account and make a best faith effort.
    \end{itemize}

\item {\bf Licenses for existing assets}
    \item[] Question: Are the creators or original owners of assets (e.g., code, data, models), used in the paper, properly credited and are the license and terms of use explicitly mentioned and properly respected?
    \item[] Answer: \answerYes{}
    \item[] Justification:
    All baseline algorithms (SDAC, DPMD, QVPO, MaxEntDP, DIME, FlowRL, DACERv2, DIPO, QSM) are citepd with their original papers and their official GitHub repositories are linked in Appendix \ref{app: Baseline_5.1}.
    The benchmark environments DMC~\citep{dmc}, MyoSuite~\citep{myosuites}, and MuJoCo~\citep{todorov2012mujoco} are all citepd.
    The CrossQ critic implementation is similarly citepd~\citep{crossq}.
    \item[] Guidelines:
    \begin{itemize}
        \item The answer \answerNA{} means that the paper does not use existing assets.
        \item The authors should citep the original paper that produced the code package or dataset.
        \item The authors should state which version of the asset is used and, if possible, include a URL.
        \item The name of the license (e.g., CC-BY 4.0) should be included for each asset.
        \item For scraped data from a particular source (e.g., website), the copyright and terms of service of that source should be provided.
        \item If assets are released, the license, copyright information, and terms of use in the package should be provided. For popular datasets, \url{paperswithcode.com/datasets} has curated licenses for some datasets. Their licensing guide can help determine the license of a dataset.
        \item For existing datasets that are re-packaged, both the original license and the license of the derived asset (if it has changed) should be provided.
        \item If this information is not available online, the authors are encouraged to reach out to the asset's creators.
    \end{itemize}

\item {\bf New assets}
    \item[] Question: Are new assets introduced in the paper well documented and is the documentation provided alongside the assets?
    \item[] Answer: \answerNA{}
    \item[] Justification:
    The paper does not introduce new datasets, benchmarks, or pre-trained model checkpoints as standalone assets.
    \item[] Guidelines:
    \begin{itemize}
        \item The answer \answerNA{} means that the paper does not release new assets.
        \item Researchers should communicate the details of the dataset\slash code\slash model as part of their submissions via structured templates. This includes details about training, license, limitations, etc.
        \item The paper should discuss whether and how consent was obtained from people whose asset is used.
        \item At submission time, remember to anonymize your assets (if applicable). You can either create an anonymized URL or include an anonymized zip file.
    \end{itemize}

\item {\bf Crowdsourcing and research with human subjects}
    \item[] Question: For crowdsourcing experiments and research with human subjects, does the paper include the full text of instructions given to participants and screenshots, if applicable, as well as details about compensation (if any)?
    \item[] Answer: \answerNA{}
    \item[] Justification:
    The paper involves no crowdsourcing and no research with human subjects.
    \item[] Guidelines:
    \begin{itemize}
        \item The answer \answerNA{} means that the paper does not involve crowdsourcing nor research with human subjects.
        \item Including this information in the supplemental material is fine, but if the main contribution of the paper involves human subjects, then as much detail as possible should be included in the main paper.
        \item According to the NeurIPS Code of Ethics, workers involved in data collection, curation, or other labor should be paid at least the minimum wage in the country of the data collector.
    \end{itemize}

\item {\bf Institutional review board (IRB) approvals or equivalent for research with human subjects}
    \item[] Question: Does the paper describe potential risks incurred by study participants, whether such risks were disclosed to the subjects, and whether Institutional Review Board (IRB) approvals (or an equivalent approval/review based on the requirements of your country or institution) were obtained?
    \item[] Answer: \answerNA{}
    \item[] Justification:
    The paper involves no human subjects and therefore requires no IRB approval or equivalent review.
    \item[] Guidelines:
    \begin{itemize}
        \item The answer \answerNA{} means that the paper does not involve crowdsourcing nor research with human subjects.
        \item Depending on the country in which research is conducted, IRB approval (or equivalent) may be required for any human subjects research. If you obtained IRB approval, you should clearly state this in the paper.
        \item We recognize that the procedures for this may vary significantly between institutions and locations, and we expect authors to adhere to the NeurIPS Code of Ethics and the guidelines for their institution.
        \item For initial submissions, do not include any information that would break anonymity (if applicable), such as the institution conducting the review.
    \end{itemize}

\item {\bf Declaration of LLM usage}
    \item[] Question: Does the paper describe the usage of LLMs if it is an important, original, or non-standard component of the core methods in this research? Note that if the LLM is used only for writing, editing, or formatting purposes and does \emph{not} impact the core methodology, scientific rigor, or originality of the research, declaration is not required.
    \item[] Answer: \answerNA{}
    \item[] Justification:
    No LLMs are used as a component of the proposed method, the experimental setup, or the evaluation procedure.
    LLMs were not used in the core methodology or scientific contributions of this work.
    \item[] Guidelines:
    \begin{itemize}
        \item The answer \answerNA{} means that the core method development in this research does not involve LLMs as any important, original, or non-standard components.
        \item Please refer to our LLM policy in the NeurIPS handbook for what should or should not be described.
    \end{itemize}

\end{enumerate}